\renewcommand*\backref[1]{\ifx#1\relax \else (Cited on p. #1) \fi}
\DeclareMathOperator*{\argmin}{argmin}
\newtheorem{theorem}{Theorem}
\newtheorem{proposition}{Proposition}
\newtheorem{lemma}{Lemma}
\title{Spherical Sliced-Wasserstein}
\author{
    Clément Bonet\textsuperscript{1}, Paul Berg\textsuperscript{2}, Nicolas Courty\textsuperscript{2}, François Septier\textsuperscript{1}, Lucas Drumetz\textsuperscript{3}, Minh-Tan Pham\textsuperscript{2} \\
    Université Bretagne Sud, LMBA\textsuperscript{1}, IRISA\textsuperscript{2}; IMT Atlantique, Lab-STICC\textsuperscript{3} \\
    \texttt{\{clement.bonet, paul.berg, francois.septier\}@univ-ubs.fr}\\ \texttt{\{nicolas.courty, minh-tan.pham\}@irisa.fr}; \texttt{lucas.drumetz@imt-atlantique.fr}
}
\begin{document}

\maketitle

\begin{abstract}
    Many variants of the Wasserstein distance have been introduced to reduce its original computational burden. In particular the Sliced-Wasserstein distance (SW), which leverages one-dimensional projections for which a closed-form solution of the Wasserstein distance is available, has received a lot of interest. Yet, it is restricted to data living in Euclidean spaces, while the Wasserstein distance has been studied and used recently on manifolds. We focus more specifically on the sphere, for which we define a novel SW discrepancy, which we call spherical Sliced-Wasserstein, making a first step towards defining SW discrepancies on manifolds. Our construction is notably based on closed-form solutions of the Wasserstein distance on the circle, together with a new spherical Radon transform. Along with efficient algorithms and the corresponding implementations, we illustrate its properties in several machine learning use cases where spherical representations of data are at stake: sampling on the sphere, density estimation on real earth data or hyperspherical auto-encoders.
\end{abstract}

\section{Introduction}

\looseness=-1 Optimal transport (OT) \citep{villani2008optimal} has received a lot of attention in machine learning in the past few years. As it allows to compare distributions with metrics, it has been used for different tasks such as domain adaptation \citep{courty2016optimal} or generative models \citep{arjovsky2017wasserstein}, to name a few. The most classical distance used in OT is the Wasserstein distance. However, calculating it can be computationally expensive. Hence, several variants were proposed to alleviate the computational burden, such as the entropic regularization \citep{cuturi2013sinkhorn, scetbon2021low}, minibatch OT \citep[]{pmlr-v108-fatras20a} or the sliced-Wasserstein distance (SW) for distributions supported on Euclidean spaces \citep{rabin2011wasserstein}.

Although embedded in larger dimensional Euclidean spaces, data generally lie in practice on manifolds \citep[]{fefferman2016testing}. A simple manifold, but with lots of practical applications, is the hypersphere $S^{d-1}$. Several types of data are by essence spherical: a good example is found in directional data \citep[]{mardia2000directional,pewsey2021recent} for which dedicated machine learning solutions are being developed~\citep[]{sra2018directional}, but other applications concern for instance geophysical data \citep{di2014nonparametric}, meteorology \citep{besombes2021producing}, cosmology \citep{perraudin2019deepsphere} or extreme value theory for the estimation of spectral measures \citep{guillou2015folding}. 
Remarkably, in a more abstract setting, considering hyperspherical latent representations of data is becoming more and more common ({\em e.g.} \citep{Liu_2017_CVPR,xu2018spherical,davidson2018hyperspherical}). For example, in the context of variational autoencoders \citep[]{kingma2013auto}, using priors on the sphere has been demonstrated to be beneficial \citep[]{davidson2018hyperspherical}. Also, in the context of self-supervised learning (SSL), where one wants to learn discriminative representations in an unsupervised way, the hypersphere is usually considered for the latent representation~\citep[]{wu2018unsupervised,chen2020a,wang2020understanding,grill2020bootstrap,caron2020unsupervised}. It is thus of primary importance to develop machine learning tools that accommodate well with this specific geometry.

The OT theory on manifolds is well developed \citep{villani2008optimal, figalli2011optimal, mccann2001polar} and several works started to use it in practice, with a focus mainly on the approximation of OT maps. For example, \citet{cohen2021riemannian, rezende2021implicit} approximate the OT map to define normalizing flows on Riemannian manifolds, \citet{hamfeldt2021convergence,hamfeldt2021convergent, cui2019spherical} derive algorithms to approximate the OT map on the sphere, \citet{alvarez2020unsupervised,hoyos2020aligning} learn the transport map on hyperbolic spaces. 
However, the computational bottleneck to compute the Wasserstein distance on such spaces remains, and, as underlined in the conclusion of \citep{nadjahi2021sliced}, defining SW distances on manifolds would be of much interest. 
Notably, \citet{rustamov2020intrinsic} proposed a variant of SW, based on the spectral decomposition of the Laplace-Beltrami operator, which generalizes to manifolds given the availability of the eigenvalues and eigenfunctions. However, it is not directly related to the original SW on Euclidean spaces.


\paragraph{Contributions.}\looseness=-1 Therefore, by leveraging properties of the Wasserstein distance on the circle \citep{rabin2011transportation}, we define the first, to the best of our knowledge, natural generalization of the original SW discrepancy on a non trivial manifold, namely the sphere $S^{d-1}$, and hence we make a first step towards defining SW distances on Riemannian manifolds. We make connections with a new spherical Radon transform and analyze some of its properties. We discuss the underlying algorithmic procedure, and notably provide an efficient implementation when computing the discrepancy against a uniform distribution. Then, we show that we can use this discrepancy on different tasks such as sampling, density estimation or generative modeling.



\section{Background}

The aim of this paper is to define a Sliced-Wasserstein discrepancy on the hypersphere $S^{d-1}=\{x\in \mathbb{R}^d,\ \|x\|_2=1\}$. Therefore, in this section, we introduce the Wasserstein distance on manifolds and the classical SW distance on $\mathbb{R}^d$.

\subsection{Wasserstein distance}

Since we are interested in defining a SW discrepancy on the sphere, we start by introducing the Wasserstein distance on a Riemannian manifold $M$ endowed with the Riemannian distance $d$. We refer to \citep{villani2008optimal, figalli2011optimal} for more details. 

Let $p\ge 1$ and $\mu,\nu\in\mathcal{P}_p(M)=\{\mu\in\mathcal{P}(M),\ \int_M d^p(x,x_0)\ \mathrm{d}\mu(x)<\infty \text{ for some }x_0\in M\}$. Then, the $p$-Wasserstein distance between $\mu$ and $\nu$ is defined as
\begin{equation}
    W_p^p(\mu,\nu) = \inf_{\gamma\in \Pi(\mu,\nu)}\ \int_{M\times M} d^p(x,y)\ \mathrm{d}\gamma(x,y),
\end{equation}
where $\Pi(\mu,\nu)=\{\gamma\in\mathcal{P}(M\times M),\ \forall A\subset M,\ \gamma(M\times A)=\nu(A)\ \text{and}\ \gamma(A\times M)=\mu(A)\}$ denotes the set of couplings.

For discrete probability measures, the Wasserstein distance can be computed using linear programs \citep{peyre2019computational}. However, these algorithms have a $O(n^3\log n)$ complexity \emph{w.r.t.} the number of samples $n$ which is computationally intensive. Therefore, a whole literature consists of defining alternative discrepancies which are cheaper to compute. On Euclidean spaces, one of them is the Sliced-Wasserstein distance.

\subsection{Sliced-Wasserstein distance}

On $M=\mathbb{R}^d$ with $d(x,y)=\|x-y\|_p^p$, a more attractive distance is the Sliced-Wasserstein (SW) distance. This distance relies on the appealing fact that for one dimensional measures $\mu,\nu\in\mathcal{P}(\mathbb{R})$, we have the following closed-form \citep[Remark 2.30]{peyre2019computational}
\begin{equation}
    W_p^p(\mu,\nu) = \int_0^1 \big|F_\mu^{-1}(u)-F_\nu^{-1}(u)\big|^p\ \mathrm{d}u,
\end{equation}
where $F_\mu^{-1}$ (resp. $F_\nu^{-1}$) is the quantile function of $\mu$ (resp. $\nu$). From this property, \citet{rabin2011wasserstein, bonnotte2013unidimensional} defined the SW distance as
\begin{equation}
    \forall \mu,\nu\in\mathcal{P}_p(\mathbb{R}^d),\ SW_p^p(\mu,\nu) = \int_{S^{d-1}} W_p^p(P^\theta_\#\mu, P^\theta_\#\nu)\ \mathrm{d}\lambda(\theta),
\end{equation}
where $P^\theta(x)=\langle x,\theta\rangle$, $\lambda$ is the uniform distribution on $S^{d-1}$ and for any Borel set $A\in \mathcal{B}(\mathbb{R}^d)$, $P^\theta_\#\mu(A)=\mu((P^\theta)^{-1}(A))$. 

This distance can be approximated efficiently by using a Monte-Carlo approximation \citep{nadjahi2019asymptotic}, and amounts to a complexity of $O(Ln(d+\log n))$ where $L$ denotes the number of projections used for the Monte-Carlo approximation and $n$ the number of samples.

SW can also be written through the Radon transform \citep{bonneel2015sliced}. Let $f\in L^1(\mathbb{R}^d)$, then the Radon transform $R:L^1(\mathbb{R}^d)\to L^1(\mathbb{R}\times S^{d-1})$ is defined as \citep{helgason2011integral}
\begin{equation}
    \forall \theta\in S^{d-1},\ \forall t\in\mathbb{R},\ Rf(t,\theta) = \int_{\mathbb{R}^d} f(x)\mathbb{1}_{\{\langle x,\theta\rangle=t\}}\mathrm{d}x.
\end{equation}
Its dual $R^*:C_0(\mathbb{R}\times S^{d-1})\to C_0(\mathbb{R}^d)$ (also known as back-projection operator), where $C_0$ denotes the set of continuous functions that vanish at infinity, satisfies for all $f, g$, $\langle Rf,g\rangle_{\mathbb{R}\times S^{d-1}}=\langle f,R^*g\rangle_{\mathbb{R}^d}$ and can be defined as \citep{boman2009support, bonneel2015sliced}
\begin{equation}
    \forall g\in C_0(\mathbb{R}\times S^{d-1}), \forall x\in \mathbb{R}^d,\ R^*g(x) = \int_{S^{d-1}} g(\langle x,\theta\rangle, \theta)\ \mathrm{d}\theta.
\end{equation}
Therefore, by duality, we can define the Radon transform of a measure $\mu\in\mathcal{M}(\mathbb{R}^d)$ as the measure $R\mu\in\mathcal{M}(\mathbb{R}\times S^{d-1})$ such that for all $g\in C_0(\mathbb{R}\times S^{d-1})$, $\langle R\mu,g\rangle_{\mathbb{R}\times S^{d-1}}=\langle \mu, R^*g\rangle_{\mathbb{R}^d}$. Since $R\mu$ is a measure on the product space $\mathbb{R}\times S^{d-1}$, we can disintegrate it \emph{w.r.t.} $\lambda$, the uniform measure on $S^{d-1}$ \citep{ambrosio2005gradient}, as $R\mu=\lambda \otimes K$ with $K$ a probability kernel on $S^{d-1}\times \mathcal{B}(\mathbb{R})$, \emph{i.e.} for all $\theta\in S^{d-1}$, $K(\theta,\cdot)$ is a probability on $\mathbb{R}$, for any Borel set $A\in\mathcal{B}(\mathbb{R})$, $K(\cdot, A)$ is measurable, and
\begin{equation}
    \forall \phi \in C(\mathbb{R}\times S^{d-1}),\ \int_{\mathbb{R}\times S^{d-1}} \phi(t,\theta) \mathrm{d}(R\mu)(t,\theta) = \int_{S^{d-1}} \int_{\mathbb{R}} \phi(t,\theta) K(\theta,\mathrm{d}t)\mathrm{d}\lambda(\theta),
\end{equation}
with $C(\mathbb{R}\times S^{d-1})$ the set of continuous functions on $\mathbb{R}\times S^{d-1}$. By Proposition 6 in \citep{bonneel2015sliced}, we have that for $\lambda$-almost every $\theta\in S^{d-1}$, $(R\mu)^\theta = P^\theta_\#\mu$ where we denote $K(\theta,\cdot) = (R\mu)^\theta$. Therefore, we have
\begin{equation}
    \forall \mu,\nu \in \mathcal{P}_p(\mathbb{R}^d),\ SW_p^p(\mu,\nu) = \int_{S^{d-1}} W_p^p\big((R\mu)^\theta,(R\mu)^\theta\big)\ \mathrm{d}\lambda(\theta).
\end{equation}

Variants of SW have been defined in recent works, either by integrating \emph{w.r.t.} different distributions \citep{deshpande2019max, nguyen2021distributional, nguyen2020improving}, by projecting on $\mathbb{R}$ using different projections \citep{nguyen2022amortized, nguyen2022revisiting, rustamov2020intrinsic} or Radon transforms \citep{kolouri2019generalized,chen2020augmented}, or by projecting on subspaces of higher dimensions \citep{paty2019subspace,lin2020projection,lin2021projection,huang2021riemannian}. 

\section{A Sliced-Wasserstein discrepancy on the sphere}

Our goal here is to define a sliced-Wasserstein distance on the sphere $S^{d-1}$. To that aim, we proceed analogously to the classical Euclidean space. We first rely on the nice properties of the Wasserstein distance on the circle \citep{rabin2011transportation} and then propose to project distributions lying on the sphere to great circles. Hence, circles play the role of the real line for the hypersphere. In this section, we first describe the OT problem on the circle, then we define a sliced-Wasserstein discrepancy on the sphere and discuss some of its properties. Notably, we derive a new spherical Radon transform which is linked to our newly defined spherical SW. We refer to Appendix \ref{appendix:proofs} for the proofs.

\subsection{Optimal transport on the circle}
    
On the circle $S^1 = \mathbb{R}/\mathbb{Z}$ equipped with the geodesic distance $d_{S^1}$, an appealing formulation of the Wasserstein distance is available \citep{delon2010fast}. First, let us parametrize $S^1$ by $[0,1[$, then the geodesic distance can be written as \citep{rabin2011transportation}, for all $x,y\in [0,1[,\ d_{S^1}(x,y)= \min(|x-y|, 1-|x-y|)$.
Then, for the cost function $c(x,y)=h(d_{S^1}(x,y))$ with $h:\mathbb{R}\to\mathbb{R}^+$ an increasing convex function, the Wasserstein distance between $\mu\in\mathcal{P}(S^1)$ and $\nu\in\mathcal{P}(S^1)$ can be written as
\begin{equation}
    W_c(\mu,\nu) = \inf_{\alpha\in\mathbb{R}}\ \int_0^1 h\big(|F_\mu^{-1}(t)-(F_\nu-\alpha)^{-1}(t)|\big)\ \mathrm{d}t,
\end{equation}
where $F_\mu:[0,1[\to [0,1]$ denotes the cumulative distribution function (cdf) of $\mu$, $F_\mu^{-1}$ its quantile function and $\alpha$ is a shift parameter. The optimization problem over the shifted cdf $F_\nu-\alpha$ can be seen as looking for the best ``cut'' (or origin) of the circle into the real line because of the 1-periodicity. Indeed, the proof of this result for discrete distributions in \citep{rabin2011transportation} consists in cutting the circle at the optimal point and wrapping it around the real line, for which the optimal transport map is the increasing rearrangement $F_\nu^{-1}\circ F_\mu$ which can be obtained for discrete distributions by sorting the points \citep{peyre2019computational}.
 
\citet{rabin2011transportation} showed that the minimization problem is convex and coercive in the shift parameter and \citet{delon2010fast} derived a binary search algorithm to find it. For the particular case of $h=\mathrm{Id}$, it can further be shown \citep{werman1985distance, cabrelli1995kantorovich} that
\begin{equation}
    W_1(\mu,\nu) = \inf_{\alpha\in\mathbb{R}}\ \int_0^1 |F_\mu(t)-F_\nu(t)-\alpha|\ \mathrm{d}t.
\end{equation}
\looseness=-1 In this case, we know exactly the minimum which is attained at the level median \citep{hundrieser2021statistics}. For $f:[0,1[\to\mathbb{R}$,
\begin{equation}
    \mathrm{LevMed}(f) = \min\left\{\argmin_{\alpha\in\mathbb{R}}\ \int_0^1 |f(t)-\alpha|\mathrm{d}t\right\} = \inf\left\{t\in\mathbb{R},\ \beta(\{x\in[0,1[,\ f(x)\le t\})\ge \frac12 \right\},
\end{equation}
where $\beta$ is the Lebesgue measure. Therefore, we also have
\begin{equation} \label{eq:levmedformula}
    W_1(\mu,\nu) = \int_0^1 |F_\mu(t)-F_\nu(t)-\mathrm{LevMed}(F_\mu-F_\nu)|\ \mathrm{d}t.
\end{equation}
Since we know the minimum, we do not need the binary search and we can approximate the integral very efficiently as we only need to sort the samples to compute the level median and the cdfs.

Another interesting setting in practice is to compute $W_2$, \emph{i.e.} with $h(x)=x^2$, \emph{w.r.t.} a uniform distribution $\nu$ on the circle. We derive here the optimal shift $\hat{\alpha}$ for the Wasserstein distance between $\mu$ an arbitrary distribution on $S^1$ and $\nu$. We also provide a closed-form when $\mu$ is a discrete distribution.
\begin{proposition} \label{prop:w2_unif_circle}
    Let $\mu\in\mathcal{P}_2(S^1)$ and $\nu=\mathrm{Unif}(S^1)$. Then,
    \begin{equation} \label{eq:w2_unif_circle}
        W_2^2(\mu,\nu) = \int_0^1 |F_\mu^{-1}(t)-t-\hat{\alpha}|^2\ \mathrm{d}t \quad \text{with} \quad \hat{\alpha} = \int x\ \mathrm{d}\mu(x) - \frac12.
    \end{equation}
    In particular, if $x_1< \dots < x_n$ and $\mu_n = \frac{1}{n}\sum_{i=1}^n \delta_{x_i}$, then 
    \begin{equation} \label{eq:w2_unif_closedform}
        W_2^2(\mu_n,\nu) = \frac{1}{n}\sum_{i=1}^n x_i^2 - \Big(\frac{1}{n}\sum_{i=1}^n x_i\Big)^2 + \frac{1}{n^2} \sum_{i=1}^n (n+1-2i)x_i + \frac{1}{12}.
    \end{equation}
\end{proposition}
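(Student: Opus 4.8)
The plan is to start from the characterization of the Wasserstein distance on the circle recalled in the background, namely $W_2^2(\mu,\nu) = \inf_{\alpha\in\mathbb{R}} \int_0^1 |F_\mu^{-1}(t) - (F_\nu-\alpha)^{-1}(t)|^2\,\mathrm{d}t$, and to specialize it to $\nu = \mathrm{Unif}(S^1)$. For the uniform law one has $F_\nu(t)=t$ on $[0,1[$, so the shifted quantile becomes $(F_\nu-\alpha)^{-1}(t) = t+\alpha$, and the problem collapses to the one-dimensional minimization $\inf_\alpha \int_0^1 |F_\mu^{-1}(t) - t - \alpha|^2\,\mathrm{d}t$, a simple quadratic in $\alpha$.

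First I would minimize this quadratic: setting the derivative in $\alpha$ to zero shows that the optimal shift is the mean of $t\mapsto F_\mu^{-1}(t)-t$, i.e. $\hat\alpha = \int_0^1 (F_\mu^{-1}(t)-t)\,\mathrm{d}t$. Using the standard identity $\int_0^1 F_\mu^{-1}(t)\,\mathrm{d}t = \int x\,\mathrm{d}\mu(x)$ together with $\int_0^1 t\,\mathrm{d}t = \tfrac12$ gives $\hat\alpha = \int x\,\mathrm{d}\mu(x) - \tfrac12$, and substituting back yields \eqref{eq:w2_unif_circle}.

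For the discrete case I would insert the explicit quantile function $F_{\mu_n}^{-1}(t)=x_i$ for $t\in(\tfrac{i-1}{n},\tfrac in]$ and expand $(F_{\mu_n}^{-1}(t)-t-\hat\alpha)^2$ into the three pieces $\int_0^1 (F_{\mu_n}^{-1})^2$, the cross term $-2\int_0^1 F_{\mu_n}^{-1}(t)(t+\hat\alpha)\,\mathrm{d}t$, and $\int_0^1 (t+\hat\alpha)^2$. All the integrals are elementary: the first equals $\tfrac1n\sum_i x_i^2$, the slab integral $\int_{(i-1)/n}^{i/n} t\,\mathrm{d}t=\tfrac{2i-1}{2n^2}$ handles the cross term, and $\int_0^1 t^2=\tfrac13$ the last. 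Substituting $\hat\alpha=\tfrac1n\sum_i x_i-\tfrac12$, the constant and $\hat\alpha$-dependent terms collapse to $\tfrac1{12}-\bar x^2+\bar x$ with $\bar x=\tfrac1n\sum_i x_i$, and merging the surviving $\bar x$ with the cross term turns the coefficient $2i-1$ into $n+1-2i$, which is exactly \eqref{eq:w2_unif_closedform}.

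The computations are routine; the points requiring care are the correct reading of $(F_\nu-\alpha)^{-1}(t)=t+\alpha$ for the uniform law and the bookkeeping that converts $-\tfrac1{n^2}\sum_i(2i-1)x_i+\bar x$ into $\tfrac1{n^2}\sum_i(n+1-2i)x_i$. The only genuinely non-mechanical point is the legitimacy of the unconstrained minimization over $\alpha\in\mathbb{R}$, which is guaranteed by the characterization of $W_c$ on the circle recalled above, so I expect no real obstacle.
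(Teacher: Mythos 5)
Your proposal is correct and follows essentially the same route as the paper's proof: specialize the circular Wasserstein formula to $(F_\nu-\alpha)^{-1}(t)=t+\alpha$, minimize the resulting quadratic in $\alpha$ to get $\hat\alpha=\int x\,\mathrm{d}\mu(x)-\tfrac12$ via $(F_\mu^{-1})_\#\mathrm{Unif}([0,1])=\mu$, and then expand the integral with the piecewise-constant quantile function; your bookkeeping (including the collapse to $\tfrac1{12}-\bar x^2+\bar x$ and the conversion of $2i-1$ into $n+1-2i$) matches the paper's computation exactly.
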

This proposition offers an intuitive interpretation: the optimal cut point between an empirical and a uniform distributions is the antipodal point of the circular mean of the discrete samples. Moreover, a very efficient algorithm can be derived from this property, as it solely requires a sorting operation to compute the order statistics of the samples. 
 
\subsection{Definition of SW on the sphere}

On the hypersphere, the counterpart of straight lines are the great circles, which are circles with the same diameter as the sphere, and which correspond to the geodesics. Moreover, we can compute the Wasserstein distance on the circle fairly efficiently. Hence, to define a sliced-Wasserstein discrepancy on this manifold, we propose, analogously to the classical SW distance, to project measures on great circles. The most natural way to project points from $S^{d-1}$ to a great circle $C$ is to use the geodesic projection \citep{jung2021geodesic, fletcher2004principal} defined as
\begin{equation}
    \forall x\in S^{d-1},\ P^C(x) = \argmin_{y\in C}\ d_{S^{d-1}}(x,y),
\end{equation}
where $d_{S^{d-1}}(x,y)=\arccos(\langle x,y\rangle)$ is the geodesic distance. See Figure \ref{fig:geodesic_projs} for an illustration of the geodesic projection on a great circle. Note that the projection is unique for almost every $x$ (see \citep[Proposition 4.2]{bardelli2017probability} and Appendix \ref{appendix:uniqueness_projection}) and hence the pushforward $P^C_\#\mu$ of $\mu\in\mathcal{P}_{p,ac}(S^{d-1})$, where $\mathcal{P}_{p,ac}(S^{d-1})$ denotes the set of absolutely continuous measures \emph{w.r.t.} the Lebesgue measure and with moments of order $p$, is well defined.


Great circles can be obtained by intersecting $S^{d-1}$ with a 2-dimensional plane \citep{jung2012analysis}. Therefore, to average over all great circles, we propose to integrate over the Grassmann manifold $\mathcal{G}_{d,2} =\{E\subset \mathbb{R}^d,\ \mathrm{dim}(E)=2\}$ \citep{absil2004riemannian, bendokat2020grassmann} and then to project the distribution onto the intersection with the hypersphere. Since the Grassmannian is not very practical, we consider the identification using the set of rank 2 projectors:
\begin{equation}
    \begin{aligned}
        \mathcal{G}_{d,2} &= \{P\in\mathbb{R}^{d\times d},\ P^T=P,\ P^2=P,\ \mathrm{Tr}(P)=2\} = \{UU^T,\ U\in\mathbb{V}_{d,2}\},
    \end{aligned}
\end{equation}
where $\mathbb{V}_{d,2}=\{U\in\mathbb{R}^{d\times 2},\ U^TU=I_2\}$ is the Stiefel manifold \citep{bendokat2020grassmann}. 

Finally, we can define the Spherical Sliced-Wasserstein distance (SSW) for $p\ge1$ between locally absolutely continuous measures \emph{w.r.t.} the Lebesgue measure \citep{bardelli2017probability} $\mu,\nu\in\mathcal{P}_{p,\mathrm{ac}}(S^{d-1})$ as
\begin{equation} \label{eq:ssw}
    SSW_p^p(\mu,\nu) = \int_{\mathbb{V}_{d,2}} W_p^p(P^U_\#\mu, P^U_\#\nu)\ \mathrm{d}\sigma(U), 
\end{equation}
where $\sigma$ is the uniform distribution over the Stiefel manifold $\mathbb{V}_{d,2}$, $P^U$ is the geodesic projection on the great circle generated by $U$ and then projected on $S^1$, \emph{i.e.} 
\begin{equation} \label{eq:proj_S1}
    \forall U\in\mathbb{V}_{d,2}, \forall x\in S^{d-1},\ P^U(x) = U^T\argmin_{y \in \mathrm{span}(UU^T)\cap S^{d-1}}\ d_{S^{d-1}}(x,y) = \argmin_{z\in S^1}\ d_{S^{d-1}}(x,Uz),
\end{equation}
and the Wasserstein distance is defined with the geodesic distance $d_{S^1}$. 

\begin{wrapfigure}{R}{0.4\textwidth}
    \centering
    \vspace{-15pt}
    \includegraphics[width=\linewidth]{./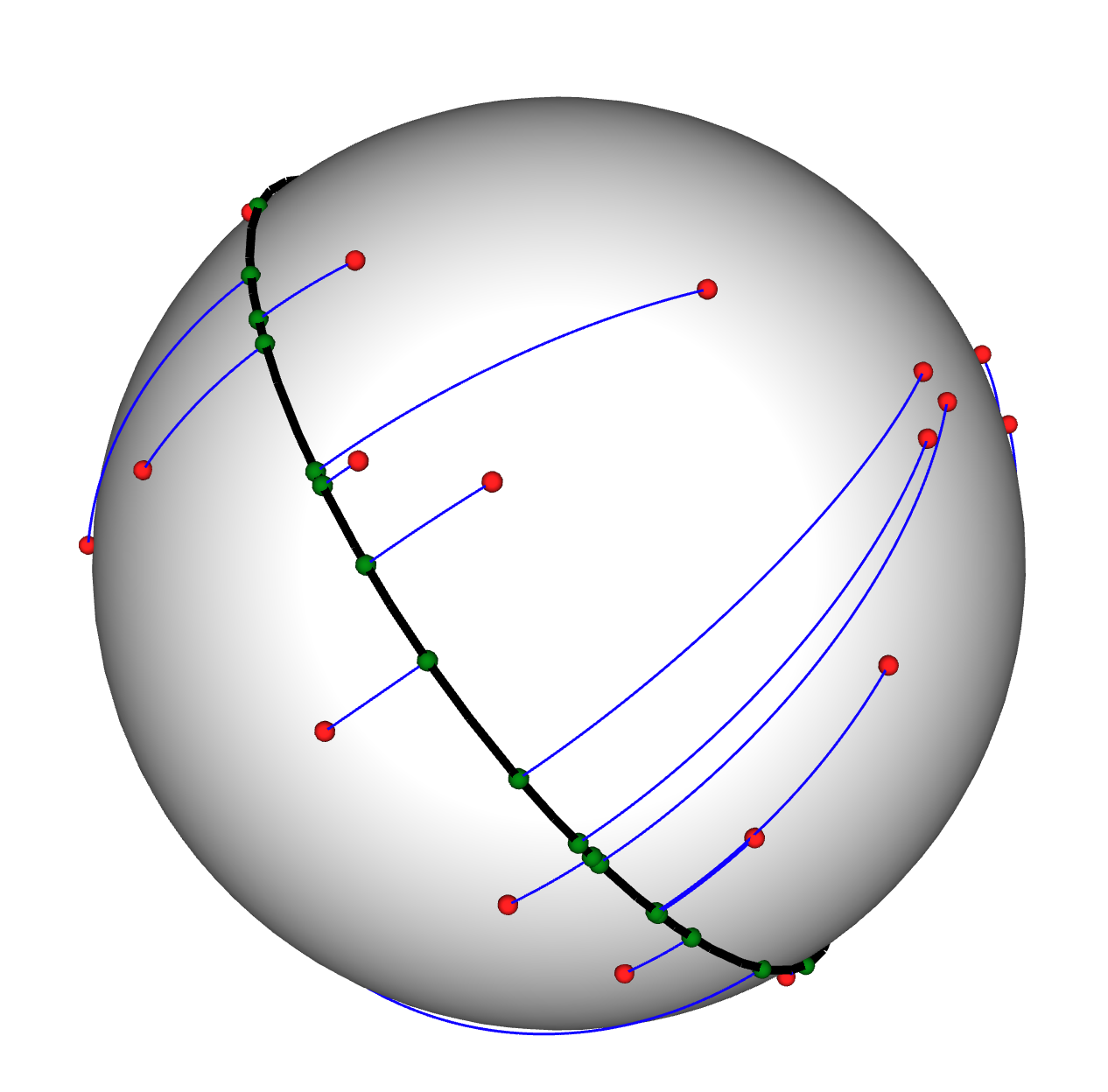}
    \caption{Illustration of the geodesic projections on a great circle (in black). In red, random points sampled on the sphere. In green the projections and in blue the trajectories.}
    \vspace{-40pt}
    \label{fig:geodesic_projs}
\end{wrapfigure}

Moreover, we can derive a closed form expression which will be very useful in practice:

\begin{lemma} \label{lemma:proj_closeform}
    Let $U\in\mathbb{V}_{d,2}$ then for a.e. $x\in S^{d-1}$,
    \begin{equation}
        P^U(x) = \frac{U^T x}{\|U^Tx\|_2}.
    \end{equation}
\end{lemma}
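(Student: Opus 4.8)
The plan is to reduce the geodesic projection to an elementary Cauchy--Schwarz maximization over the circle. Starting from the characterization in \eqref{eq:proj_S1}, namely $P^U(x) = \argmin_{z\in S^1} d_{S^{d-1}}(x,Uz) = \argmin_{z\in S^1}\arccos(\langle x,Uz\rangle)$, I would first check that $Uz$ genuinely lies on $S^{d-1}$ for $z\in S^1$, since $\|Uz\|_2^2 = z^T U^T U z = \|z\|_2^2 = 1$ by $U^TU=I_2$, so the $\arccos$ is well defined on $[-1,1]$. Then, because $\arccos$ is strictly decreasing on $[-1,1]$, minimizing $\arccos(\langle x,Uz\rangle)$ over $z$ is equivalent to maximizing the inner product $\langle x,Uz\rangle$, and using the adjoint identity $\langle x,Uz\rangle = \langle U^Tx, z\rangle$ this turns into the problem $\max_{z\in S^1}\langle U^Tx, z\rangle$ of maximizing a fixed linear functional over the unit circle in $\mathbb{R}^2$.

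The key step is then Cauchy--Schwarz: for every $z\in S^1$ one has $\langle U^Tx,z\rangle \le \|U^Tx\|_2\,\|z\|_2 = \|U^Tx\|_2$, with equality if and only if $z$ is the unit vector aligned with $U^Tx$, that is $z = U^Tx/\|U^Tx\|_2$, provided $U^Tx\neq 0$. Since $\arccos$ is strictly monotone, this maximizer is also the unique minimizer of the geodesic distance, which gives exactly the claimed closed form $P^U(x) = U^Tx/\|U^Tx\|_2$.

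Finally, to justify the ``a.e.'' qualifier I would show that the exceptional set $\{x\in S^{d-1} : U^Tx = 0\}$ is $\lambda$-negligible. This set equals $(\mathrm{span}(U))^\perp\cap S^{d-1}$; as $\mathrm{span}(U)$ is two-dimensional, its orthogonal complement has dimension $d-2$, so the intersection with the sphere is a subsphere of dimension $d-3$ and hence carries zero measure in $S^{d-1}$. On this null set the inner product vanishes identically, the maximizer is the whole circle, and the geodesic projection is undefined, which is consistent with the almost-everywhere uniqueness already invoked when defining $P^U_\#\mu$. I expect this measure-zero bookkeeping to be the only mildly delicate point, as the optimization itself is completely elementary once the monotonicity reduction is made.
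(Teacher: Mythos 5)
Your proof is correct and follows essentially the same route as the paper's: both reduce the geodesic minimization to maximizing an inner product via the monotonicity of $\arccos$ and then apply Cauchy--Schwarz to identify the unique maximizer; you simply work directly with $\langle U^Tx, z\rangle$ over $z\in S^1\subset\mathbb{R}^2$ where the paper works with $\langle p^E(x), y\rangle$ over $y$ on the great circle, which are equivalent via $y=Uz$. Your explicit dimension count for the null set $\{x : U^Tx=0\}$ is a self-contained substitute for the paper's citation of \citet[Proposition 4.2]{bardelli2017probability} and is equally valid.
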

\vskip-0.85em

Hence, we notice from this expression of the projection that we recover almost the same formula as \citet{lin2020projection} but with an additional $\ell^2$ normalization which projects the data on the circle. As in \citep{lin2020projection}, we could project on a higher dimensional subsphere by integrating over $\mathbb{V}_{d,k}$ with $k\ge 2$. However, we would lose the computational efficiency provided by the properties of the Wasserstein distance on the circle.


\subsection{A Spherical Radon Transform}

As for the classical SW distance, we can derive a second formulation using a Radon transform. Let $f\in L^1(S^{d-1})$, we define a spherical Radon transform $\Tilde{R}:L^1(S^{d-1})\to L^1(S^1 \times \mathbb{V}_{d,2})$ as
\begin{equation} \label{eq:spherical_RT}
    \forall z\in S^1,\ \forall U\in\mathbb{V}_{d,2},\ \Tilde{R}f(z,U) = \int_{S^{d-1}} f(x)\mathbb{1}_{\{z=P^U(x)\}}\mathrm{d}x.
\end{equation}
This is basically the same formulation as the classical Radon transform \citep{natterer2001mathematics, helgason2011integral} where we replaced the real line coordinate $t$ by the coordinate on the circle $z$ and the projection is the geodesic one which is well suited to the sphere. This transform is actually new since we integrate over different sets compared to existing works on spherical Radon transforms.

Then, analogously to the classical Radon transform, we can define the back-projection operator $\Tilde{R}^*:C_0(S^1\times \mathbb{V}_{d,2}) \to C_b(S^{d-1})$, $C_b(S^{d-1})$ being the space of continuous bounded functions, for $g\in C_0(S^1\times \mathbb{V}_{d,2})$ as for a.e. $x\in S^{d-1}$,
\begin{equation}
    \Tilde{R}^*g(x) = \int_{\mathbb{V}_{d,2}} g(P^U(x),U)\ \mathrm{d}\sigma(U).
\end{equation}

\begin{proposition} \label{prop:radon_dual}
    $\Tilde{R}^*$ is the dual operator of $\Tilde{R}$, \emph{i.e.} for all $f\in L^1(S^{d-1})$, $g\in C_0(S^1\times \mathbb{V}_{d,2})$,
    \begin{equation}
        \langle \Tilde{R}f,g\rangle_{S^1\times \mathbb{V}_{d,2}} = \langle f, \Tilde{R}^*g\rangle_{S^{d-1}}.
    \end{equation}
\end{proposition}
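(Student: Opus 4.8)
The plan is to expand both inner products into iterated integrals, swap the order of integration via Fubini, and recognize the disintegration hidden in the formal indicator of \eqref{eq:spherical_RT}. Writing out the left-hand side against the product measure $\mathrm{d}z\,\mathrm{d}\sigma(U)$ on $S^1\times\mathbb{V}_{d,2}$, I would start from
\begin{equation}
    \langle \Tilde{R}f,g\rangle_{S^1\times\mathbb{V}_{d,2}} = \int_{\mathbb{V}_{d,2}}\int_{S^1} \Tilde{R}f(z,U)\,g(z,U)\ \mathrm{d}z\,\mathrm{d}\sigma(U).
\end{equation}
The key observation is that for each fixed $U\in\mathbb{V}_{d,2}$, the function $z\mapsto \Tilde{R}f(z,U)$ is, by construction, the density with respect to the uniform measure on $S^1$ of the pushforward $P^U_\#(f\,\mathrm{d}\mathrm{Vol})$. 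In other words, the formal indicator $\mathbb{1}_{\{z=P^U(x)\}}$ in \eqref{eq:spherical_RT} should be read as the statement that, for every test function $\psi\in C(S^1)$,
\begin{equation} \label{eq:pushforward_identity}
    \int_{S^1} \psi(z)\,\Tilde{R}f(z,U)\ \mathrm{d}z = \int_{S^{d-1}} \psi\big(P^U(x)\big)\,f(x)\ \mathrm{d}x,
\end{equation}
which is simply the change-of-variables formula for the pushforward. Here $P^U$ is a.e. well defined and measurable thanks to Lemma~\ref{lemma:proj_closeform}, which gives the explicit form $P^U(x)=U^Tx/\|U^Tx\|_2$.

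Applying \eqref{eq:pushforward_identity} with the choice $\psi(\cdot)=g(\cdot,U)$ for each fixed $U$, the inner integral over $S^1$ collapses and I obtain
\begin{equation}
    \langle \Tilde{R}f,g\rangle_{S^1\times\mathbb{V}_{d,2}} = \int_{\mathbb{V}_{d,2}}\int_{S^{d-1}} g\big(P^U(x),U\big)\,f(x)\ \mathrm{d}x\,\mathrm{d}\sigma(U).
\end{equation}
The remaining step is to interchange the integrals over $\mathbb{V}_{d,2}$ and $S^{d-1}$. This is legitimate by Fubini's theorem: since $g\in C_0(S^1\times\mathbb{V}_{d,2})$ is bounded, the integrand is dominated by $\|g\|_\infty\,|f(x)|$, and because $\sigma$ is a probability measure and $f\in L^1(S^{d-1})$, the double integral of the absolute value equals $\|g\|_\infty\,\|f\|_{L^1}<\infty$. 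After swapping, the inner integral over $\mathbb{V}_{d,2}$ is exactly $\Tilde{R}^*g(x)=\int_{\mathbb{V}_{d,2}} g(P^U(x),U)\,\mathrm{d}\sigma(U)$, so
\begin{equation}
    \langle \Tilde{R}f,g\rangle_{S^1\times\mathbb{V}_{d,2}} = \int_{S^{d-1}} f(x)\,\Tilde{R}^*g(x)\ \mathrm{d}x = \langle f,\Tilde{R}^*g\rangle_{S^{d-1}},
\end{equation}
which is the claim.

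The routine parts are the two Fubini applications and the recognition of $\Tilde{R}^*$. The main obstacle, and the place where care is genuinely required, is making the indicator in \eqref{eq:spherical_RT} rigorous: the set $\{x : z=P^U(x)\}$ is a measure-zero slice, so $\Tilde{R}f$ is not literally an integral of $f$ against an indicator but rather a disintegration/coarea object, and the clean way to handle it is to adopt \eqref{eq:pushforward_identity} as the operative definition of $\Tilde{R}f(\cdot,U)$ and verify measurability of $(x,U)\mapsto g(P^U(x),U)$ using the explicit projection formula. I would also note in passing that $\Tilde{R}^*g$ is well defined a.e. and bounded (it lands in $C_b(S^{d-1})$), which is what allows the final pairing on $S^{d-1}$ to make sense against $f\in L^1$.
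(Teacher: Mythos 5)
Your proof is correct and follows essentially the same route as the paper's: expand the pairing as an iterated integral, collapse the inner integral over $S^1$ using the definition of $\Tilde{R}f$, swap the remaining integrals by Fubini, and recognize $\Tilde{R}^*g$. The only difference is one of rigor: where the paper formally writes $\int_{S^1} g(z,U)\mathbb{1}_{\{z=P^U(x)\}}\,\mathrm{d}z = g\big(P^U(x),U\big)$ as a delta-type evaluation, you make that step precise by reading $\Tilde{R}f(\cdot,U)$ as the density of the pushforward $P^U_\#(f\,\mathrm{d}x)$ and invoking the change-of-variables formula, which is the correct interpretation of the indicator in the paper's definition.
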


Now that we have a dual operator, we can also define the Radon transform of an absolutely continuous measure $\mu\in\mathcal{M}_{ac}(S^{d-1})$ by duality \citep{boman2009support, bonneel2015sliced} as the measure $\Tilde{R}\mu$ satisfying 
\begin{equation}
    \forall g\in C_0(S^1\times \mathbb{V}_{d,2}),\ \int_{S^1\times \mathbb{V}_{d,2}} g(z,U) \ \mathrm{d}(\Tilde{R}\mu)(z,U) = \int_{S^{d-1}} \Tilde{R}^*g(x)\ \mathrm{d}\mu(x).
\end{equation}
Since $\Tilde{R}\mu$ is a measure on the product space $S^1\times \mathbb{V}_{d,2}$, $\Tilde{R}\mu$ can be disintegrated \citep[Theorem 5.3.1]{ambrosio2005gradient} \emph{w.r.t.} $\sigma$ as $\Tilde{R}\mu = \sigma \otimes K$ where $K$ is a probability kernel on $\mathbb{V}_{d,2}\times \mathcal{S}^1$ with $\mathcal{S}^1$ the Borel $\sigma$-field of $S^1$. We will denote for $\sigma$-almost every $U\in\mathbb{V}_{d,2}$, $(\Tilde{R}\mu)^U = K(U,\cdot)$ the conditional probability. 
\begin{proposition} \label{prop:radon_disintegrated}
    Let $\mu\in \mathcal{M}_{ac}(S^{d-1})$, then for $\sigma$-almost every $U\in\mathbb{V}_{d,2}$, $(\Tilde{R}\mu)^U=P^U_\#\mu$.
\end{proposition}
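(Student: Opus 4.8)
The plan is to mirror the duality-plus-disintegration argument that yields $(R\mu)^\theta = P^\theta_\#\mu$ in the Euclidean case (Proposition 6 of \citep{bonneel2015sliced}), taking advantage of the fact that here both $S^1$ and $\mathbb{V}_{d,2}$ are compact, so that $C_0(S^1\times\mathbb{V}_{d,2}) = C(S^1\times\mathbb{V}_{d,2})$ and Borel measures are characterized by their action on continuous functions. First I would combine the defining relation of $\Tilde{R}\mu$ with the explicit form of the back-projection $\Tilde{R}^*$ to get, for every $g\in C_0(S^1\times\mathbb{V}_{d,2})$,
\[
\int_{S^1\times\mathbb{V}_{d,2}} g(z,U)\,\mathrm{d}(\Tilde{R}\mu)(z,U) = \int_{S^{d-1}}\int_{\mathbb{V}_{d,2}} g(P^U(x),U)\,\mathrm{d}\sigma(U)\,\mathrm{d}\mu(x).
\]
Using Lemma \ref{lemma:proj_closeform}, $(x,U)\mapsto P^U(x)=U^Tx/\|U^Tx\|_2$ is continuous, hence jointly measurable and bounded, off the negligible set $\{U^Tx=0\}$; since $\mu\otimes\sigma$ is finite and $g$ is bounded, Fubini applies and I may exchange the two integrals. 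Recognizing the inner integral over $S^{d-1}$ as integration against a pushforward then gives
\[
\int_{S^1\times\mathbb{V}_{d,2}} g\,\mathrm{d}(\Tilde{R}\mu) = \int_{\mathbb{V}_{d,2}}\int_{S^1} g(z,U)\,\mathrm{d}(P^U_\#\mu)(z)\,\mathrm{d}\sigma(U).
\]

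Next I would invoke the disintegration $\Tilde{R}\mu=\sigma\otimes K$, which rewrites the same quantity as $\int_{\mathbb{V}_{d,2}}\int_{S^1} g(z,U)\,\mathrm{d}(\Tilde{R}\mu)^U(z)\,\mathrm{d}\sigma(U)$. Equating the two expressions and specializing to separated test functions $g(z,U)=\phi(z)\psi(U)$ with $\phi\in C(S^1)$ and $\psi\in C(\mathbb{V}_{d,2})$, the arbitrariness of $\psi$ forces
\[
\int_{S^1}\phi\,\mathrm{d}(\Tilde{R}\mu)^U = \int_{S^1}\phi\,\mathrm{d}(P^U_\#\mu)\qquad\text{for }\sigma\text{-a.e. }U,
\]
with an exceptional null set that a priori depends on $\phi$. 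To remove this dependence I would fix a countable dense subset $\{\phi_k\}_k$ of $C(S^1)$ (which is separable), take the union of the corresponding countably many null sets, and conclude that outside a single $\sigma$-negligible set the equality holds for every $\phi_k$, hence for all $\phi\in C(S^1)$ by density. Since Borel measures on the compact space $S^1$ are determined by their integrals against $C(S^1)$, this yields $(\Tilde{R}\mu)^U=P^U_\#\mu$ for $\sigma$-almost every $U$.

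The routine parts here are the duality unfolding and the separability argument; the one step deserving genuine care is the Fubini exchange, where I must ensure that $(x,U)\mapsto P^U(x)$ is well defined $\mu$-almost everywhere and jointly measurable. This is precisely where the absolute continuity of $\mu$ enters: the closed form of Lemma \ref{lemma:proj_closeform} breaks down only on the set where $U^Tx=0$, which for each fixed $U$ is a lower-dimensional subsphere of positive codimension and therefore $\mu$-null, so that the integrand is defined and measurable up to a negligible set and the pushforward $P^U_\#\mu$ is meaningful. I expect this measurability and well-definedness bookkeeping, rather than any delicate estimate, to be the main obstacle.
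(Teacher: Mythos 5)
Your proof is correct and follows essentially the same route as the paper's: unfold the duality definition of $\Tilde{R}\mu$ via $\Tilde{R}^*$, exchange the order of integration, recognize the inner integral as a pushforward, and compare with the disintegration $\Tilde{R}\mu=\sigma\otimes K$. The only difference is that you spell out the final step (separability of $C(S^1)$ to get a single $\sigma$-null exceptional set) and the measurability/well-definedness of $(x,U)\mapsto P^U(x)$, both of which the paper leaves implicit.
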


Finally, we can write SSW \eqref{eq:ssw} using this Radon transform:
\begin{equation} \label{eq:ssw_radon}
    \forall \mu,\nu \in\mathcal{P}_{p,ac}(S^{d-1}),\ SSW_p^p(\mu,\nu) = \int_{\mathbb{V}_{d,2}} W_p^p\big((\Tilde{R}\mu)^U,(\Tilde{R}\nu)^U\big)\ \mathrm{d}\sigma(U).
\end{equation} 
Note that a natural way to define SW distances can be through already known Radon transforms using the formulation \eqref{eq:ssw_radon}. It is for example what was done in \citep{kolouri2019generalized} using generalized Radon transforms \citep{ehrenpreis2003universality, homan2017injectivity} to define generalized SW distances, or in \citep{chen2020augmented} with the spatial Radon transform. However, for known spherical Radon transforms \citep{abouelaz1993transformation, antipov2011inversion} such as the Minkowski-Funk transform \citep{dann2010minkowski} or more generally the geodesic Radon transform \citep{rubin2002inversion}, there is no natural way that we know of to integrate over some product space and allowing to define a SW distance using disintegration.

As observed by \citet{kolouri2019generalized} for the generalized SW distances (GSW), studying the injectivity of the related Radon transforms allows to study the set on which SW is actually a distance. While the classical Radon transform integrates over hyperplanes of $\mathbb{R}^d$, the generalized Radon transform over hypersurfaces \citep{kolouri2019generalized} and the Minkowski-Funk transform over ``big circles'',  \emph{i.e.} the intersection between a hyperplane and $S^{d-1}$ \citep{rubin2003notes}, the set of integration here is a half of a big circle. Hence, $\Tilde{R}$ is related to the hemispherical transform \citep{rubin1999inversion} on $S^{d-2}$. We refer to Appendix \ref{appendix:spherical_radon_transform} for more details on the links with the hemispherical transform. Using these connections, we can derive the kernel of $\Tilde{R}$ as the set of even measures which are null over all hyperplanes intersected with $S^{d-1}$.
\begin{proposition} \label{prop:kernel_radon}
    $\mathrm{ker}(\Tilde{R}) = \{\mu\in \mathcal{M}_{\mathrm{even}}(S^{d-1}),\ \forall H\in\mathcal{G}_{d,d-1},\ \mu(H\cap S^{d-1})=0\}$ where $\mu\in \mathcal{M}_{\mathrm{even}}$ if for all $f\in C(S^{d-1})$, $\langle \mu, f\rangle = \langle \mu, f_+\rangle$ with $f_+(x)=(f(x)+f(-x))/2$ for all $x$. 
\end{proposition}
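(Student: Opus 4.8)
The plan is to reduce $\tilde R\mu = 0$ to a kernel computation for a classical hemispherical transform, after first removing the odd part of $\mu$ by a parity argument. First I would unwind the definition of $\tilde R$ on measures: by the duality of Proposition~\ref{prop:radon_dual} together with the disintegration of Proposition~\ref{prop:radon_disintegrated}, the condition $\tilde R\mu = 0$ is equivalent to $P^U_\#\mu = 0$ for $\sigma$-almost every $U\in\mathbb V_{d,2}$, and hence (testing against a dense countable family in $C(S^1)$) to
\begin{equation}
    \int_{S^{d-1}} \psi\big(P^U(x)\big)\,\mathrm{d}\mu(x) = 0 \quad \text{for all } \psi\in C(S^1) \text{ and } \sigma\text{-a.e. } U .
\end{equation}
Using the closed form $P^U(x) = U^Tx/\|U^Tx\|_2$ of Lemma~\ref{lemma:proj_closeform}, I would record the equivariance $P^U(-x) = -P^U(x)$, so that $\psi\circ P^U$ is even (resp.\ odd) on $S^{d-1}$ exactly when $\psi$ is even (resp.\ odd) on $S^1$. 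This splits the testing conditions by the parity of $\psi$: odd test functions probe only the odd part $\mu_{\mathrm{odd}}$ of $\mu$, and even test functions only the even part $\mu_{\mathrm{even}}$.

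For the odd part, I would use odd test functions approximating $\mathrm{sgn}(\langle\cdot,w\rangle)$ for $w\in S^1$. Since $\mathrm{sgn}(\langle P^U(x),w\rangle) = \mathrm{sgn}(\langle x, Uw\rangle)$ and the point $Uw$ sweeps out all of $S^{d-1}$ as $(U,w)$ vary (because $\|Uw\|_2 = \|w\|_2 = 1$ and any unit vector is a first column of some $U$), the vanishing conditions say precisely that the cosine/sign transform of $\mu_{\mathrm{odd}}$ vanishes identically. By the classical injectivity of the cosine transform on odd measures this forces $\mu_{\mathrm{odd}} = 0$, i.e.\ $\mu\in\mathcal M_{\mathrm{even}}$; conversely, every odd test condition is automatic once $\mu$ is even.

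It then remains to characterize the even $\mu$ satisfying $\int \psi(P^U(x))\,\mathrm{d}\mu(x) = 0$ for all even $\psi$ and a.e.\ $U$. Here I would make precise the connection with the hemispherical transform on $S^{d-2}$ announced before the statement: the fiber $\{x : P^U(x) = z\}$ is an open hemisphere of the great subsphere $V_z\cap S^{d-1}$, where $V_z = \mathbb R(Uz)\oplus\mathrm{span}(U)^\perp$ is a hyperplane, and the two antipodal fibers over $z$ and $-z$ tile $V_z\cap S^{d-1}\cong S^{d-2}$. Pairing even test functions with these antipodal hemispheres is exactly a hemispherical transform on that $S^{d-2}$, and as $(z,U)$ ranges over $S^1\times\mathbb V_{d,2}$ the hyperplanes $V_z$ exhaust $\mathcal G_{d,d-1}$. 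Invoking the known description of the kernel of the hemispherical transform \citep{rubin1999inversion}, the even $\mu$ lying in $\ker(\tilde R)$ are exactly those with $\mu(H\cap S^{d-1}) = 0$ for every $H\in\mathcal G_{d,d-1}$, which gives the claimed identity.

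I expect the main obstacle to be the step relating the even part of $\tilde R$ to the hemispherical transform rigorously: identifying each fiber with a hemisphere of $S^{d-2}$, verifying that the parametrization $(z,U)\mapsto V_z$ is onto $\mathcal G_{d,d-1}$ with enough genericity that "for $\sigma$-a.e.\ $U$" upgrades to "for all $H$," and citing the hemispherical-transform kernel in the correct even-measure setting. By comparison, the parity reduction and the cosine-transform injectivity used for the odd part are routine.
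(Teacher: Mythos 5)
Your proposal is correct and rests on the same two pillars as the paper's proof: the identification of the fiber $\{x:P^U(x)=z\}$ as an open hemisphere of a hyperplane section $F\cap S^{d-1}\cong S^{d-2}$ (Proposition~\ref{prop:integration_set}), and Rubin's description of the kernel of the hemispherical transform (Lemma~\ref{lemma:rubin}). The organization differs, though. The paper works section by section: it rewrites $\Tilde{R}\mu(z,U)$ as $\mathcal{H}\Tilde{\mu}(\Tilde{U}z)$ for the restriction $\Tilde{\mu}$ of $\mu$ to $F\cap S^{d-1}$, applies Rubin's lemma on each $S^{d-2}$ to conclude that every restriction is even with zero total mass, and then reassembles the evenness of $\mu$ on $S^{d-1}$ from the evenness of all its restrictions via an integral identity. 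You instead split $\mu$ by parity at the outset and kill the odd part with a single global argument: $\mathrm{sgn}(\langle P^U(x),w\rangle)=\mathrm{sgn}(\langle x,Uw\rangle)$ converts the odd test conditions into the vanishing of the sign transform of $\mu_{\mathrm{odd}}$ over all of $S^{d-1}$, and injectivity on odd measures --- which is again Rubin's lemma, now applied on $S^{d-1}$ itself --- gives $\mu_{\mathrm{odd}}=0$. (One nitpick: this is the sign/hemispherical transform, not the cosine transform, which annihilates every odd measure and so cannot be injective there; the formula you actually write down is the correct one.) Your route yields evenness more cleanly, and for the even part you do not really need Rubin at all: pairing with even $\psi$ produces the symmetrized fiber integral $\rho_U(z)+\rho_U(-z)$, which is exactly the total mass of $\mu$ on $F\cap S^{d-1}$ because the two antipodal fibers tile the section up to a null set, so the condition $\mu(H\cap S^{d-1})=0$ falls out directly. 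The obstacles you flag --- making the coarea-type fiber identification precise, surjectivity of $(U,z)\mapsto F$ onto $\mathcal{G}_{d,d-1}$, and upgrading ``$\sigma$-a.e.\ $U$'' to ``all $H$'' (by continuity of $H\mapsto\int_{H\cap S^{d-1}}f_\mu$ for absolutely continuous $\mu$) --- are real, and the paper itself passes over the last of these silently.
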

We leave for future works checking whether this set is null or not. Hence, we conclude here that SSW is a pseudo-distance, but a distance on the sets of injectivity of $\Tilde{R}$ \citep{agranovskyt1996injectivity}. 
\begin{proposition} \label{prop:ssw_distance}
    Let $p\ge 1$, $SSW_p$ is a pseudo-distance on $\mathcal{P}_{p,ac}(S^{d-1})$.
\end{proposition}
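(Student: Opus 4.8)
The plan is to verify the pseudo-metric axioms directly from the definition $SSW_p^p(\mu,\nu)=\int_{\mathbb{V}_{d,2}} W_p^p(P^U_\#\mu, P^U_\#\nu)\,\mathrm{d}\sigma(U)$, viewing $SSW_p$ as the $L^p(\sigma)$-norm of the map $U\mapsto W_p(P^U_\#\mu,P^U_\#\nu)$. First I would record that for $\mu\in\mathcal{P}_{p,ac}(S^{d-1})$ the pushforward $P^U_\#\mu$ is a well-defined element of $\mathcal{P}(S^1)$ for $\sigma$-a.e.\ $U$: the geodesic projection $P^U$ is defined almost everywhere by the uniqueness statement preceding Lemma \ref{lemma:proj_closeform}, and since $S^1$ is compact every such pushforward automatically has finite moments, hence lies in $\mathcal{P}_p(S^1)$. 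I would also note that $U\mapsto W_p(P^U_\#\mu,P^U_\#\nu)$ is $\sigma$-measurable, so that the integral is meaningful; this is the one genuinely technical point, stemming from the measurable dependence of the projection and of $W_p$ on its arguments.

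Non-negativity, symmetry, and $SSW_p(\mu,\mu)=0$ are then immediate, each obtained by integrating the corresponding property of the Wasserstein distance $W_p$ on $(S^1,d_{S^1})$ against the probability measure $\sigma$. The only real content is the triangle inequality, which I would establish in two steps. Fix $\eta\in\mathcal{P}_{p,ac}(S^{d-1})$. Because $(S^1,d_{S^1})$ is a compact metric space, $W_p$ is a bona fide metric on $\mathcal{P}_p(S^1)$, so for $\sigma$-a.e.\ $U$ the pointwise bound $W_p(P^U_\#\mu,P^U_\#\nu)\le W_p(P^U_\#\mu,P^U_\#\eta)+W_p(P^U_\#\eta,P^U_\#\nu)$ holds. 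Taking the $L^p(\sigma)$-norm of both sides and applying Minkowski's inequality in $L^p(\sigma)$ then yields $SSW_p(\mu,\nu)\le SSW_p(\mu,\eta)+SSW_p(\eta,\nu)$.

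It remains to explain why one obtains only a \emph{pseudo}-distance: identity of indiscernibles may fail. Indeed $SSW_p(\mu,\nu)=0$ forces $P^U_\#\mu=P^U_\#\nu$ for $\sigma$-a.e.\ $U$, hence $\Tilde{R}\mu=\Tilde{R}\nu$ by Proposition \ref{prop:radon_disintegrated}; but by Proposition \ref{prop:kernel_radon} the transform $\Tilde{R}$ need not be injective, so $\mu$ and $\nu$ can differ by a nonzero element of $\mathrm{ker}(\Tilde{R})$ while still satisfying $SSW_p(\mu,\nu)=0$. Dropping this one axiom is precisely what ``pseudo-distance'' permits. Overall I expect no serious conceptual obstacle: the core is the standard ``slicing plus Minkowski'' argument used for the Euclidean $SW$, and the only care needed is in the a.e.\ well-definedness of the pushforwards — which is exactly why the domain is restricted to $\mathcal{P}_{p,ac}(S^{d-1})$ — and in the measurability of the integrand.
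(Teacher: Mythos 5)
Your proposal is correct and follows essentially the same route as the paper: the pointwise triangle inequality for $W_p$ on the circle combined with Minkowski's inequality in $L^p(\sigma)$, with the failure of identity of indiscernibles traced back to the possible non-injectivity of $\Tilde{R}$. Your added remarks on measurability and a.e.\ well-definedness of the pushforwards are points the paper leaves implicit, but they do not change the argument.
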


\section{Implementation} \label{section:implem}

\looseness=-1 In practice, we approximate the distributions with empirical approximations and, as for the classical SW distance, we rely on the Monte-Carlo approximation of the integral on $\mathbb{V}_{d,2}$. We first need to sample from the uniform distribution $\sigma\in\mathcal{P}(\mathbb{V}_{d,2})$. This can be done by first constructing $Z\in\mathbb{R}^{d\times 2}$ by drawing each of its component from the standard normal distribution $\mathcal{N}(0,1)$ and then applying the QR decomposition \citep{lin2021projection}. Once we have $(U_\ell)_{\ell=1}^L\sim \sigma$, we project the samples on the circle $S^1$ by applying Lemma \ref{lemma:proj_closeform} and we compute the coordinates on the circle using the $\mathrm{atan2}$ function. Finally, we can compute the Wasserstein distance on the circle by either applying the binary search algorithm of \citep[]{delon2010fast} or the level median formulation \eqref{eq:levmedformula} for $SSW_1$. In the particular case in which we want to compute $SSW_2$ between a measure $\mu$ and the uniform measure on the sphere $\nu=\mathrm{Unif}(S^{d-1})$, we can use the appealing fact that the projection of $\nu$ on the circle is uniform, \emph{i.e.} $P^U_\#\nu=\mathrm{Unif}(S^1)$ (particular case of Theorem 3.1 in \citep[]{jung2021geodesic}, see Appendix \ref{appendix:vmf}). Hence, we can use the Proposition \ref{prop:w2_unif_circle} to compute $W_2$, which allows a very efficient implementation either by the closed-form \eqref{eq:w2_unif_closedform} or approximation by rectangle method of \eqref{eq:w2_unif_circle}. This will be of particular interest for applications in Section \ref{section:applications} such as autoencoders. We sum up the procedure in Algorithm \ref{alg:ssw}.

\begin{algorithm}[tb]
   \caption{SSW}
   \label{alg:ssw}
    \begin{algorithmic}
       \STATE {\bfseries Input:} $(x_i)_{i=1}^n\sim \mu$, $(y_j)_{j=1}^m\sim \nu$, $L$ the number of projections, $p$ the order
       \FOR{$\ell=1$ {\bfseries to} $L$}
       \STATE Draw a random matrix $Z\in\mathbb{R}^{d\times 2}$ with for all $i,j,\ Z_{i,j}\sim\mathcal{N}(0,1)$
       \STATE $U=\mathrm{QR}(Z) \sim \sigma$
       \STATE Project on $S^1$ the points: $\forall i,j,\ \hat{x}_i^{\ell}=\frac{U^T x_i}{\|U^Tx_i\|_2}$, $\hat{y}_j^\ell=\frac{U^T y_j}{\|U^T y_j\|_2}$
       \STATE Compute the coordinates on the circle $S^1$: $\forall i,j,\ \Tilde{x}_{i}^\ell = (\pi+\mathrm{atan2}(-x_{i,2}, -x_{i,1}))/(2\pi)$, $\Tilde{y}_j^\ell = (\pi+\mathrm{atan2}(-y_{j,2}, -y_{j,1}))/(2\pi)$
       \STATE Compute $W_p^p(\frac{1}{n}\sum_{i=1}^n  \delta_{\Tilde{x}_i^\ell}, \frac{1}{m}\sum_{j=1}^m \delta_{\Tilde{y}_j^\ell})$ by binary search or \eqref{eq:levmedformula} for $p=1$
       \ENDFOR
       \STATE Return $SSW_p^p(\mu,\nu)\approx \frac{1}{L}\sum_{\ell=1}^L W_p^p(\frac{1}{n}\sum_{i=1}^n \delta_{\Tilde{x}_i^\ell}, \frac{1}{m}\sum_{j=1}^m \delta_{\Tilde{y}_j^\ell})$
    \end{algorithmic}
\end{algorithm}

\paragraph{Complexity.} Let us note $n$ (resp. $m$) the number of samples of $\mu$ (resp. $\nu$), and $L$ the number of projections. First, we need to compute the QR factorization of $L$ matrices of size $d\times 2$. This can be done in $O(Ld)$ by using \emph{e.g.} Householder reflections \citep[Chapter 5.2]{golub2013matrix} or the Scharwz-Rutishauser algorithm \citep{gander1980algorithms}. 
Projecting the points on $S^1$ by Lemma \ref{lemma:proj_closeform} is in $O((n+m)dL)$ since we need to compute $L(n+m)$ products between $U_\ell^T \in\mathbb{R}^{2\times d}$ and $x\in\mathbb{R}^d$. For the binary search or particular case formula \eqref{eq:levmedformula} and \eqref{eq:w2_unif_closedform}, we need first to sort the points. But the binary search also adds a cost of $O((n+m)\log(\frac{1}{\epsilon}))$ to approximate the solution with precision $\epsilon$ \citep{delon2010fast} and the computation of the level median requires to sort $(n+m)$ points. Hence, for the general $SSW_p$, the complexity is $O(L(n+m)(d+\log(\frac{1}{\epsilon})) + Ln\log n + Lm\log m)$ versus $O(L(n+m)(d+\log(n+m)))$ for $SSW_1$ with the level median and $O(Ln(d + \log n))$ for $SSW_2$ against a uniform with the particular advantage that we do not need uniform samples in this case.

\paragraph{Runtime Comparison.}

We perform here some runtime comparisons. Using Pytorch \citep{pytorch}, we implemented the binary search algorithm of \citep{delon2010fast} and used it with $\epsilon=10^{-6}$. We also implemented $SSW_1$ using the level median formula \eqref{eq:levmedformula} and $SSW_2$ against a uniform measure \eqref{eq:w2_unif_circle}. All experiments are conducted on GPU.

\begin{wrapfigure}{R}{0.5\textwidth}
    \vspace{-15pt}
    \centering
    \includegraphics[width=\linewidth]{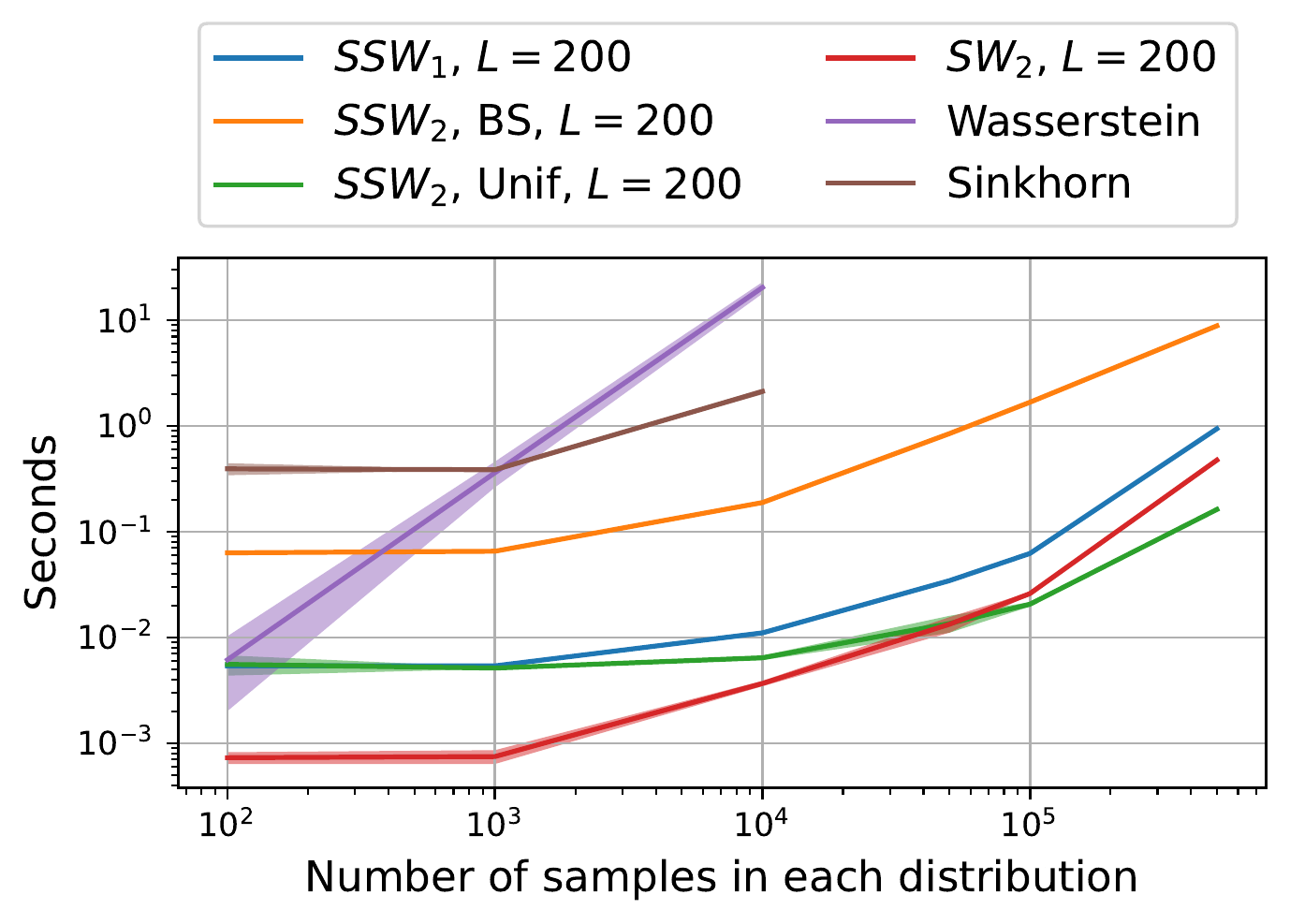}
    \caption{Runtime comparison in log-log scale between W, Sinkhorn with the geodesic distance, $SW_2$, $SSW_2$ with the binary search (BS) and uniform distribution \eqref{eq:w2_unif_circle} and $SSW_1$ with formula \eqref{eq:levmedformula} between two distributions on $S^2$. The time includes the calculation of the distance matrices.}
    \vspace{-10pt}
    \label{fig:runtime_ssww}
\end{wrapfigure}

\looseness=-1 On Figure \ref{fig:runtime_ssww}, we compare the runtime between two distributions on $S^2$ between SSW, SW, the Wasserstein distance and the entropic approximation using the Sinkhorn algorithm \citep{cuturi2013sinkhorn} with the geodesic distance as cost function. The distributions were approximated using $n\in \{10^2,10^3,10^4,5\cdot 10^4,10^5,5\cdot 10^5\}$ samples of each distribution and we report the mean over 20 computations. We use the Python Optimal Transport (POT) library \citep{flamary2021pot} to compute the Wasserstein distance and the entropic approximation. For large enough batches, we observe that SSW is much faster than its Wasserstein counterpart, and it also scales better in term of memory because of the need to store the $n\times n$ cost matrix. For small batches, the computation of SSW actually takes longer because of the computation of the QR factorizations and of the projections. For bigger batches, it is bounded by the sorting operation and we recover the quasi-linear slope. 
Furthermore, as expected, the fastest algorithms are $SSW_1$ with the level median and $SSW_2$ against a uniform as they have a quasilinear complexity. We report in Appendix \ref{appendix:runtime} other runtimes experiments \emph{w.r.t.} to \emph{e.g.} the number of projections or the dimension. 


Additionally, we study both theoretically and empirically the projection and sample complexities in Appendices \ref{appendix:add_props} and \ref{appendix:evolutions_ssw}. We obtain similar results as \citep{nadjahi2020statistical} derived for the SW distance. Notably, the sample complexity is independent \emph{w.r.t.} the dimension. 

\section{Experiments} \label{section:applications}

\begin{figure}[t]
    \centering
    \begin{minipage}{.45\textwidth}
        \centering
        \hspace*{\fill}
        \subfloat[Target]{\label{target_mixture_vmf}\includegraphics[width=0.45\columnwidth]{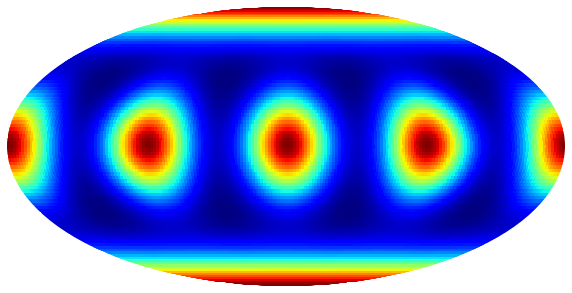}} \hfill
        \subfloat[KDE estimate]{\label{particles_mixture_vmf}\includegraphics[width=0.45\columnwidth]{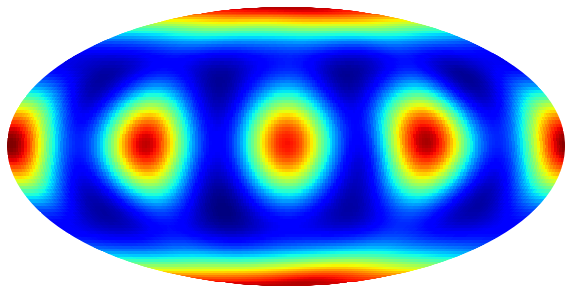}}
        \hspace*{\fill}
        \caption{Minimization of SSW with respect to a mixture of vMF.}
        \label{fig:gradient_flow_mixture_vmf}
    \end{minipage}
    \hspace{0.03\textwidth}
    \begin{minipage}{.45\textwidth}
        \centering
        \hspace*{\fill}
        \subfloat[SWAE]{\label{latent_space_swae}\includegraphics[width=0.45\columnwidth]{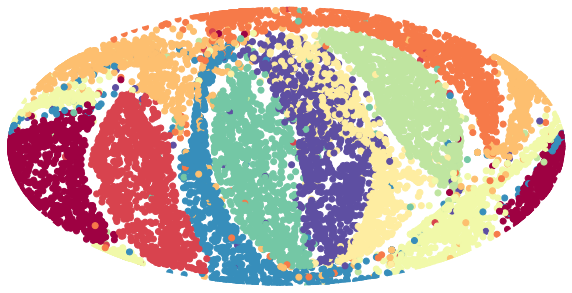}} \hfill 
        \subfloat[SSWAE]{\label{latent_space_sswae}\includegraphics[width=0.45\columnwidth]{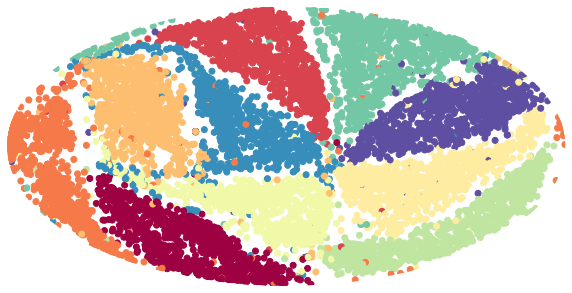}}
        \caption{Latent space of SWAE and SSWAE on MNIST for a uniform prior on $S^2$.}
        \hspace*{\fill}
        \label{fig:latent_space_unif}
    \end{minipage}
    \vspace{-15pt}
    
\end{figure}

\looseness=-1 
Apart from showing that SSW is an effective discrepancy for learning problems defined over the sphere, the objectives of this experimental Section is to show that it behaves better than using the more immediate SW in the embedding space. 
We first illustrate the ability to approximate different distributions by minimizing SSW \emph{w.r.t.} some target distributions on $S^2$ and by performing density estimation experiments on real earth data. 
Then, we apply SSW for generative modeling tasks using the framework of Sliced-Wasserstein autoencoder and we show that we obtain competitive results with other Wasserstein autoencoder based methods using a prior on higher dimensional hyperspheres. 
Complete details about the experimental settings and optimization strategies are given in Appendix \ref{appendix:experiments}. We also report in Appendices \ref{appendix:swvi} or \ref{appendix:ssl} complementary experiments on variational inference on the sphere or self-supervised learning with uniformity prior on the embedding hypersphere that further assess the effectiveness of SSW in a wide range of learning tasks. The code is available online\footnote{\url{https://github.com/clbonet/Spherical_Sliced-Wasserstein}}.

\subsection{SSW as a loss} \label{xp:gradient_flows} 



\paragraph{Gradient flow on toy data.}

We verify on the first experiments that we can learn some target distribution $\nu\in\mathcal{P}(S^{d-1})$ by minimizing SSW, \emph{i.e.} we consider the minimization problem $\argmin_{\mu}\ SSW_p^p(\mu,\nu)$. We suppose that we have access to the target distribution $\nu$ through samples, \emph{i.e.} through $\hat{\nu}_m=\frac{1}{m}\sum_{j=1}^m \delta_{{y}_j}$ where $(y_j)_{j=1}^m$ are i.i.d samples of $\nu$. We add in Appendix \ref{appendix:swvi} the case where we know the density up to some constant which can be dealt with the sliced-Wasserstein variational inference framework introduced in \citep{yi2021sliced}. We choose as target distribution a mixture of 6 well separated von Mises-Fisher distributions \citep{mardia1975statistics}. This is a fairly challenging distribution since there are 6 modes which are not connected. We show on Figure \ref{fig:gradient_flow_mixture_vmf} the Mollweide projection of the density approximated by a kernel density estimator for a distribution with 500 particles. To optimize directly over particles, we perform a Riemannian gradient descent on the sphere \citep{absil2009optimization}. 

\begin{wraptable}{r}{0.4\linewidth}
    \centering
    \vspace{-10pt}
    \small
    \caption{Negative test log likelihood.} 
    \resizebox{\linewidth}{!}{
        \begin{tabular}{cccc}
            \toprule
            & Earthquake & Flood & Fire \\ \toprule
            SSW & $\bm{{0.84}_{\pm 0.07}}$ & $\bm{{1.26}_{\pm 0.05}}$ & $\bm{{0.23}_{\pm 0.18}}$ \\
            SW & ${0.94}_{\pm 0.02}$ & ${1.36}_{\pm 0.04}$ & ${0.54}_{\pm 0.37}$ \\
            Stereo & ${1.91}_{\pm 0.1}$ & ${2.00}_{\pm 0.07}$ & ${1.27}_{\pm 0.09}$ \\
            \bottomrule
        \end{tabular}
    }
    \vspace{-5pt}
    \label{tab:nll}
\end{wraptable}

\paragraph{Density estimation on earth data.} We perform density estimation on datasets first gathered by \citet{mathieu2020riemannian} which contain locations of wild fires \citep{firedataset}, floods \citep{flooddataset} or eathquakes \citep{earthquakedataset}. We use exponential map normalizing flows introduced in \citep{rezende2020normalizing} (see Appendix \ref{appendix:nf_sphere}) which are invertible transformations mapping the data to some prior that we need to enforce. Here, we choose as prior a uniform distribution on $S^2$ and we learn the model using SSW. These transformations allow to evaluate exactly the density at any point. More precisely, let $T$ be such transformation, let $p_Z$ be a prior distribution on $S^2$ and $\mu$ the measure of interest, which we know from samples, \emph{i.e.} through $\hat{\mu}_n=\frac{1}{n}\sum_{i=1}^n \delta_{x_i}$. Then, we solve the following optimization problem $\min_{T}\ SSW_2^2(T_\#\mu, p_Z)$. Once it is fitted, then the learned density $f_\mu$ can be obtained by
\begin{equation}
    \forall x\in S^2,\ f_\mu(x) = p_Z\big(T(x)\big) |\det J_T(x)|,
\end{equation}
where we used the change of variable formula.

We show on Figure \ref{fig:density_earth_data} the density of test data learned. 
We observe on this figure that the normalizing flows (NFs) put mass where most data points lie, and hence are able to somewhat recover the principle modes of the data.
We also compare on Table \ref{tab:nll} the negative test log likelihood, averaged over 5 trainings with different split of the data, between different OT metrics, namely SSW, SW and the stereographic projection model \citep{gemici2016normalizing} which first projects the data on $\mathbb{R}^2$ and use a regular NF in the projected space. We observe that SSW allows to better fit the data compared to the other OT based methods which are less suited to the sphere.

\begin{figure}[t]
    \centering
    \hspace*{\fill}
    \subfloat[Fire]{\label{ll_fire}\includegraphics[width=0.3\columnwidth]{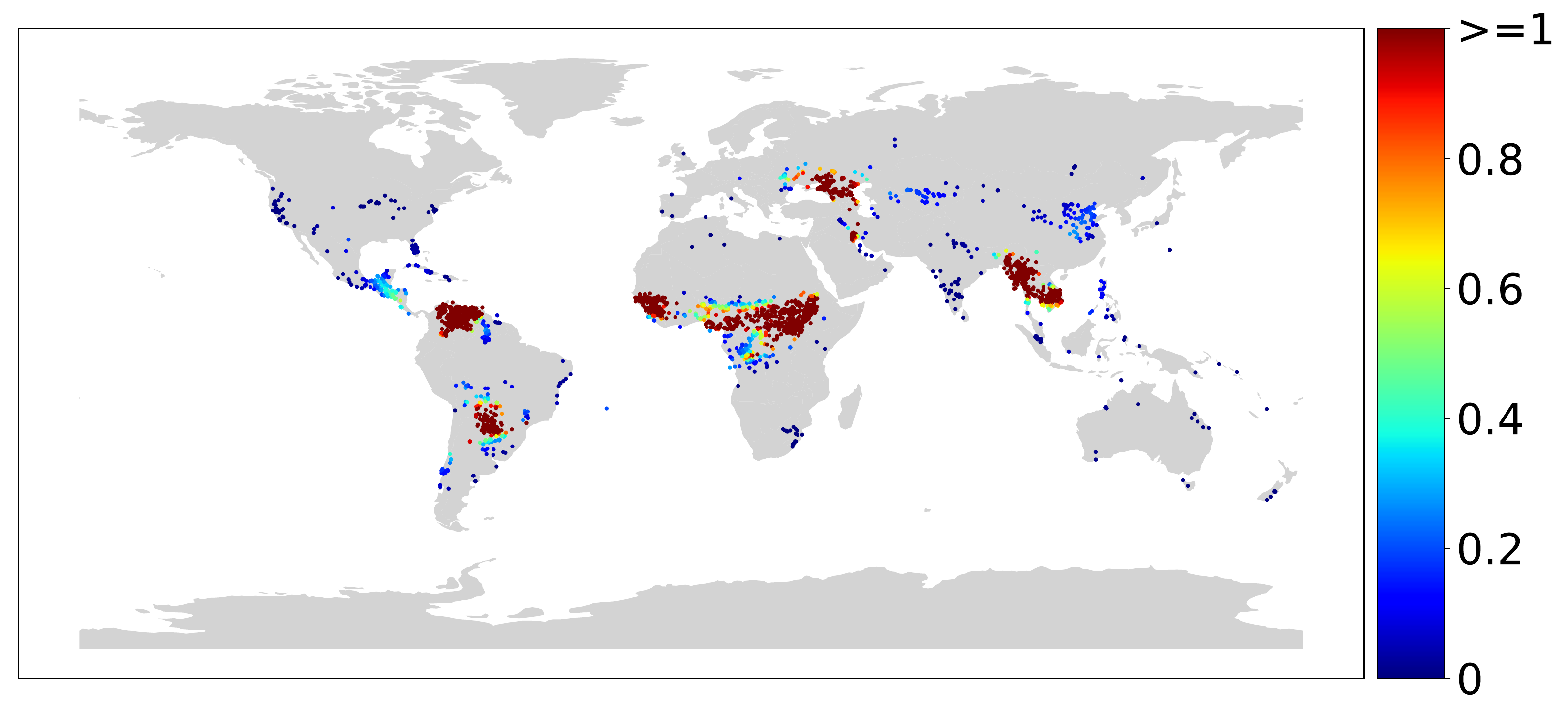}} \hfill
    \subfloat[Earthquake]{\label{ll_earthquake}\includegraphics[width=0.3\columnwidth]{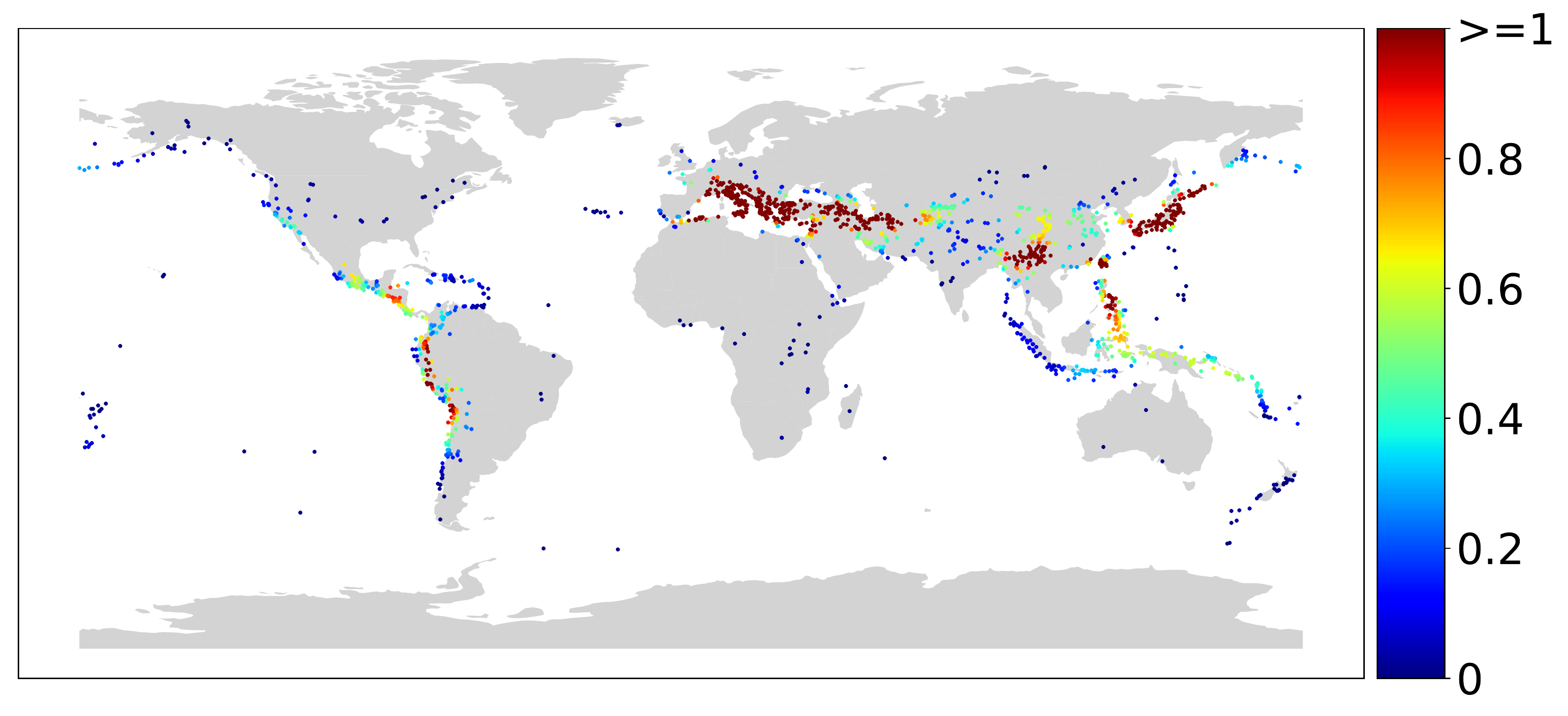}} \hfill 
    \subfloat[Flood]{\label{ll_flood}\includegraphics[width=0.3\columnwidth]{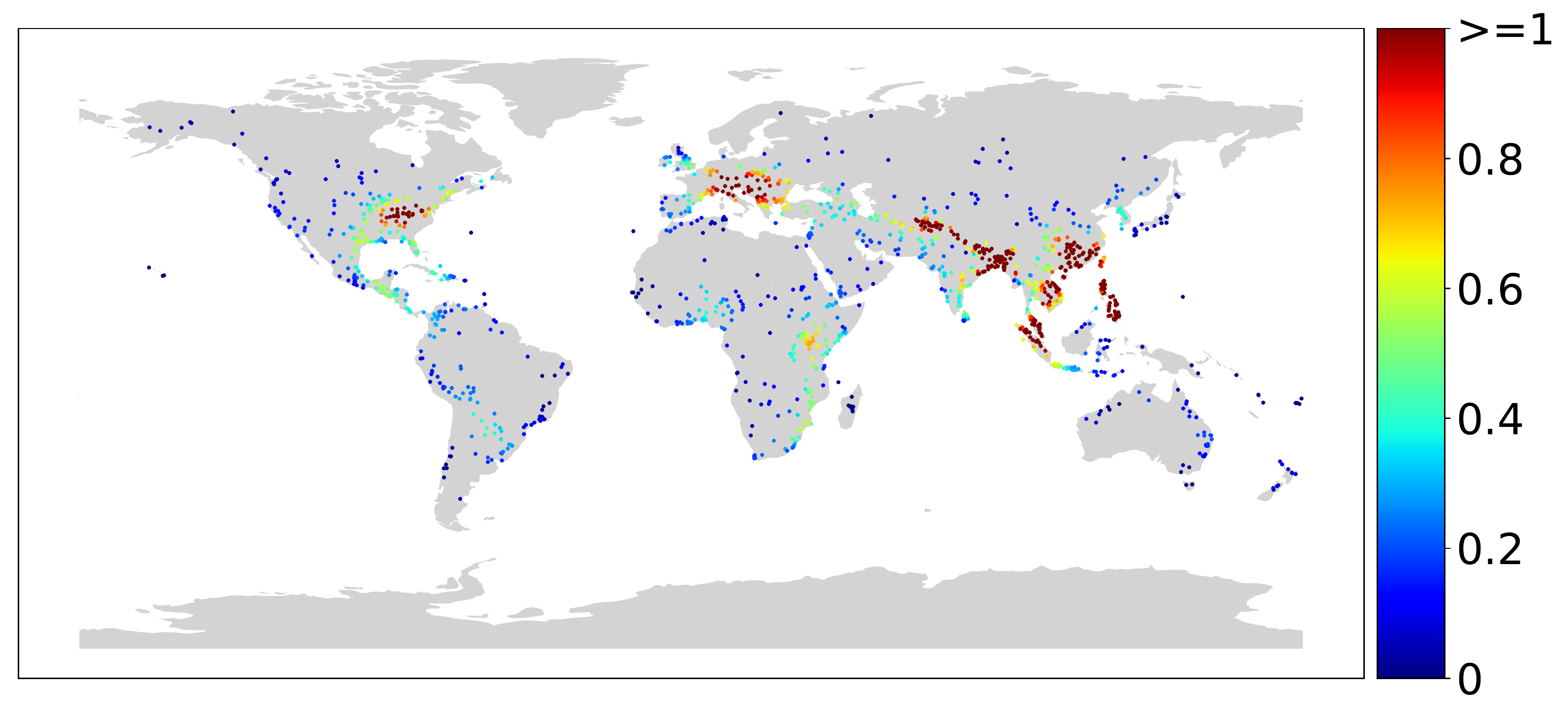}} \hfill
    \hspace*{\fill}
    \caption{Density estimation of models trained on earth data. We plot the density on the test data.}
    \label{fig:density_earth_data}
    \vspace{-13pt}
\end{figure}

\subsection{SSW autoencoders} \label{section:swae}


\begin{wraptable}{r}{0.5\linewidth}
    \centering
    \vspace{-35pt}
    \small
    \caption{FID (Lower is better).} 
    \resizebox{\linewidth}{!}{
        \begin{tabular}{cccc}
            \toprule
            Method / Dataset & MNIST & Fashion & CIFAR10 \\
            \toprule
            SSWAE  &  $\bm{{14.91}}_{\pm 0.32}$ & $\bm{43.94}_{\pm 0.81}$ & ${98.57}_{\pm 035}$ \\
            SWAE & ${15.18}_{\pm 0.32}$ & ${44.78}_{\pm 1.07}$ & $\bm{{98.5}_{\pm 0.45}}$ \\
            WAE-MMD IMQ &  ${18.12}_{\pm 0.62}$ & ${68.51}_{\pm 2.76}$ & ${100.14}_{\pm 0.67}$ \\
            WAE-MMD RBF & ${20.09}_{\pm 1.42}$ & ${70.58}_{\pm 1.75}$ & ${100.27}_{\pm 0.74}$\\
            SAE & ${19.39}_{\pm 0.56}$ & ${56.75}_{\pm 1.7}$ & ${99.34}_{\pm 0.96}$\\
            Circular GSWAE & ${15.01}_{\pm 0.26}$ &  ${44.65}_{\pm 1.2}$ & ${98.8}_{\pm 0.68}$ \\
            \bottomrule
        \end{tabular}
    }
    \label{tab:fid}
\end{wraptable}

In this section, we use SSW to learn the latent space of autoencoders (AE). We rely on the SWAE framework introduced by \citet{kolouri2018sliced}. Let $f$ be some encoder and $g$ be some decoder, denote $p_Z$ a prior distribution, then the loss minimized in SWAE is
\begin{equation}
    \mathcal{L}(f,g) = \int c\big(x,g(f(x))\big)\mathrm{d}\mu(x) + \lambda SW_2^2(f_\#\mu,p_Z),
\end{equation}
where $\mu$ is the distribution of the data for which we have access to samples. One advantage of this framework over more classical VAEs \citep{kingma2013auto} is that no parametrization trick is needed here and therefore the choice of the prior is more free.


In several concomitant works, it was shown that using a prior on the hypersphere can improve the results \citep{davidson2018hyperspherical, xu2018spherical}. Hence, we propose in the same fashion as \citep{kolouri2018sliced, kolouri2019generalized, patrini2020sinkhorn} to replace SW by SSW, which we denote SSWAE, and to enforce a prior on the sphere. In the following, we use the MNIST \citep{lecun-mnisthandwrittendigit-2010}, FashionMNIST \citep{xiao2017fashion} and CIFAR10 \citep{krizhevsky2009learning} datasets, and we put an $\ell^2$ normalization at the output of the encoder. As a prior, we use the uniform distribution on $S^{10}$ for MNIST and Fashion, and on $S^{64}$ for CIFAR10. We compare in Table \ref{tab:fid} the Fréchet Inception Distance (FID) \citep{heusel2017gans}, for 10000 samples and averaged over 5 trainings, obtained with the Wasserstein Autoencoder (WAE) \citep{tolstikhin2017wasserstein}, the classical SWAE \citep{kolouri2018sliced}, the Sinkhorn Autoencoder (SAE) \citep{patrini2020sinkhorn} and circular GSWAE \citep{kolouri2019generalized}. We observe that we obtain fairly competitive results on the different datasets. We add on Figure \ref{fig:latent_space_unif} the latent space obtained with a uniform prior on $S^2$ on MNIST. We notably observe a better separation between classes for SSWAE. 


\section{Conclusion and discussion}

In this work, we derive a new sliced-Wasserstein discrepancy on the hypersphere, that comes with practical advantages when computing optimal transport distances on hyperspherical data. We notably showed that it is competitive or even sometimes better than other metrics defined directly on $\mathbb{R}^d$ on a variety of machine learning tasks, including density estimation or generative models. Our work is, up to our knowledge, the first to adapt the classical sliced Wasserstein framework to non-trivial manifolds. The three main ingredients are: {\em i)} a closed-form for Wasserstein on the circle, {\em ii)} a closed-form solution to the projection onto great circles, and {\em iii)} a novel Radon transform on the Sphere. An immediate extension of this work would be to consider sliced-Wasserstein discrepancy in hyperbolic spaces, where geodesics are circular arcs as in the Poincar\'e disk. Beyond the generalization to other, possibly well behaved, manifolds, asymptotic properties as well as statistical and topological aspects need to be examined. 
While we postulate that results comparable to the Euclidean case might be reached, the fact that the manifold is closed might bring interesting differences and justify further use of this type of discrepancies rather than their Euclidean counterparts.




\subsubsection*{Acknowledgments}

Clément Bonet thanks Benoît Malézieux for fruitful discussions. This work was performed partly using HPC resources from GENCI-IDRIS (Grant 2022-AD011013514). This research was funded by project DynaLearn from Labex CominLabs and R{\'e}gion Bretagne ARED DLearnMe, and by the project OTTOPIA ANR-20-CHIA-0030 of the French National Research Agency (ANR).

\bibliography{iclr2023_conference}
\bibliographystyle{iclr2023_conference}


\appendix


\section{Proofs} \label{appendix:proofs}

\subsection{Proof of Proposition \ref{prop:w2_unif_circle}} \label{proof:w2_unif_circle}

\paragraph{Optimal $\alpha$.} Let $\mu\in \mathcal{P}_2(S^1)$, $\nu=\mathrm{Unif}(S^1)$. Since $\nu$ is the uniform distribution on $S^1$, its cdf is the identity on $[0,1]$ (where we identified $S^1$ and $[0,1]$). We can extend the cdf $F$ on the real line as in \citep{rabin2011transportation} with the convention $F(y+1)=F(y)+1$. Therefore, $F_\nu = \mathrm{Id}$ on $\mathbb{R}$. Moreover, we know that for all $x\in S^1$, $(F_\nu-\alpha)^{-1}(x)=F_\nu^{-1}(x+\alpha)=x+\alpha$ and 
\begin{equation}
    W_2^2(\mu,\nu) = \inf_{\alpha\in\mathbb{R}}\ \int_0^1 |F_\mu^{-1}(t)-(F_\nu-\alpha)^{-1}(t)|^2 \ \mathrm{d}t.
\end{equation}

For all $\alpha\in\mathbb{R}$, let $f(\alpha) = \int_0^1 \big(F_\mu^{-1}(t) - (F_\nu-\alpha)^{-1}(t)\big)^2 \ \mathrm{d}t$. Then, we have:
\begin{equation}
    \begin{aligned}
        \forall \alpha\in \mathbb{R},\ f(\alpha) &= \int_0^1 \big(F_\mu^{-1}(t)-t-\alpha\big)^2\ \mathrm{d}t \\
        &= \int_0^1 \big(F_\mu^{-1}(t)-t\big)^2\ \mathrm{d}t + \alpha^2 - 2\alpha \int_0^1 (F_\mu^{-1}(t)-t)\ \mathrm{d}t \\
        &= \int_0^1 \big(F_\mu^{-1}(t)-t\big)^2\ \mathrm{d}t + \alpha^2 -2\alpha \left( \int_0^1 x\  \mathrm{d}\mu(x) - \frac{1}{2}\right),
    \end{aligned}
\end{equation}
where we used that $(F_\mu^{-1})_\#\mathrm{Unif}([0,1]) = \mu$.

Hence, $f'(\alpha)=0 \iff \alpha = \int_0^1 x\ \mathrm{d}\mu(x) - \frac12$.

\paragraph{Closed-form for empirical distributions.} Let $(x_i)_{i=1}^n \in [0,1[^n$ such that $x_1<\dots<x_n$ and let $\mu_n=\frac{1}{n}\sum_{i=1}^n \delta_{x_i}$ a discrete distribution.

To compute the closed-form of $W_2$ between $\mu_n$ and $\nu=\mathrm{Unif}(S^1)$, we first have that the optimal $\alpha$ is $\alpha_n = \frac{1}{n}\sum_{i=1}^n x_i - \frac12$. Moreover, we also have:
\begin{equation} \label{eq:w2_proof}
    \begin{aligned}
        W_2^2(\mu_n,\nu) &= \int_0^1 \big(F_{\mu_n}^{-1}(t) - (t+\hat{\alpha}_n)\big)^2 \ \mathrm{d}t \\
        &= \int_0^1 F_{\mu_n}^{-1}(t)^2\ \mathrm{d}t - 2\int_0^1 tF_{\mu_n}^{-1}(t)\mathrm{d}t - 2\hat{\alpha}_n \int_0^1 F_{\mu_n}^{-1}(t)\mathrm{d}t + \frac13 + \hat{\alpha}_n + \hat{\alpha}_n^2.
    \end{aligned}
\end{equation}
Then, by noticing that $F_{\mu_n}^{-1}(t) = x_i$ for all $t\in [F(x_i),F(x_{i+1})[$, we have
\begin{equation}
    \begin{aligned}
        \int_0^1 t F_{\mu_n}^{-1}(t)\mathrm{d}t = \sum_{i=1}^n \int_{\frac{i-1}{n}}^{\frac{i}{n}} t x_i \mathrm{d}t = \frac{1}{2n^2} \sum_{i=1}^n x_i(2i-1),
    \end{aligned}
\end{equation}
\begin{equation}
    \int_0^1 F_{\mu}^{-1}(t)^2 \mathrm{d}t = \frac{1}{n}\sum_{i=1}^n x_i^2, \quad \int_0^1 F_\mu^{-1}(t)\mathrm{d}t = \frac{1}{n}\sum_{i=1}^n x_i,
\end{equation}
and we also have:
\begin{equation}
    \hat{\alpha}_n + \hat{\alpha}_n^2 = \frac{1}{n} \sum_{i=1}^n x_i - \frac12 + \Big(\frac{1}{n}\sum_{i=1}^n x_i\Big)^2 + \frac14 - \frac{1}{n}\sum_{i=1}^n x_i =  \Big(\frac{1}{n}\sum_{i=1}^n x_i\Big)^2 -\frac14.
\end{equation}
Then, by plugging these results into \eqref{eq:w2_proof}, we obtain
\begin{equation}
    \begin{aligned}
        W_2^2(\mu_n,\nu) &= \frac{1}{n}\sum_{i=1}^n x_i^2 - \frac{1}{n^2}\sum_{i=1}^n (2i-1)x_i - 2 \Big(\frac{1}{n}\sum_{i=1}^n x_i\Big)^2 + \frac{1}{n}\sum_{i=1}^n x_i + \frac13 + \Big(\frac{1}{n}\sum_{i=1}^n x_i\Big)^2 - \frac14 \\
        &= \frac{1}{n}\sum_{i=1}^n x_i^2 - \Big(\frac{1}{n}\sum_{i=1}^n x_i\Big)^2 + \frac{1}{n^2}\sum_{i=1}^n (n+1-2i)x_i + \frac{1}{12}. 
    \end{aligned}
\end{equation}

\subsection{Proof of Equation \ref{eq:proj_S1}}

Let $U\in\mathbb{V}_{d,2}$. Then the great circle generated by $U\in\mathbb{V}_{d,2}$ is defined as the intersection between $\mathrm{span}(UU^T)$ and $S^{d-1}$. And we have the following characterization:
\begin{equation*}
    \begin{aligned}
        x \in \mathrm{span}(UU^T)\cap S^{d-1} &\iff \exists y\in \mathbb{R}^d,\ x = UU^T y\ \text{and}\ \|x\|_2^2 = 1 \\
        &\iff \exists y \in \mathbb{R}^d,\ x=UU^T y\ \text{and}\ \|UU^T y\|_2^2 = y^T UU^T y = \|U^Ty\|_2^2 = 1 \\
        &\iff \exists z\in S^{1},\ x=Uz.
    \end{aligned}
\end{equation*}

And we deduce that 
\begin{equation}
    \forall U\in \mathbb{V}_{d,2}, x\in S^{d-1}, \ P^U(x) = \argmin_{z\in S^1}\ d_{S^{d-1}} (x, Uz).
\end{equation}

\subsection{Proof of Lemma \ref{lemma:proj_closeform}} \label{proof:lemma_closeform}

Let $U\in\mathbb{V}_{d,2}$ and $x\in S^{d-1}$ such that $U^Tx\neq 0$. Denote $U=(u_1\ u_2)$, \emph{i.e.} the $2$-plane $E$ is $E=\mathrm{span}(UU^T) = \mathrm{span}(u_1, u_2)$ and $(u_1,u_2)$ is an orthonormal basis of $E$. Then, for all $x\in S^{d-1}$, the projection on $E$ is $p^E(x)=\langle u_1,x\rangle u_1 + \langle u_2,x\rangle u_2 = UU^T x$.

Now, let us compute the geodesic distance between $x\in S^{d-1}$ and $\frac{p^E(x)}{\|p^E(x)\|_2}\in E\cap S^{d-1}$:
\begin{equation}
    d_{S^{d-1}}\left(x, \frac{p^E(x)}{\|p^E(x)\|_2}\right) = \arccos\left(\langle x, \frac{p^E(x)}{\|p^E(x)\|_2}\rangle\right) = \arccos(\|p^E(x)\|_2),
\end{equation}
using that $x=p^E(x)+p^{E^\bot}(x)$.

Let $y\in E\cap S^{d-1}$ another point on the great circle. By the Cauchy-Schwarz inequality, we have
\begin{equation}
    \langle x,y\rangle = \langle p^E(x), y\rangle \le \|p^E(x)\|_2\|y\|_2 = \|p^E(x)\|_2.
\end{equation}
Therefore, using that $\arccos$ is decreasing on $(-1,1)$,
\begin{equation}
    d_{S^{d-1}}(x,y) = \arccos(\langle x,y\rangle) \ge \arccos(\|p^E(x)\|_2) = d_{S^{d-1}}\left(x,\frac{p^E(x)}{\|p^E(x)\|_2}\right).
\end{equation}
Moreover, we have equality if and only if $y=\lambda p^E(x)$. And since $y\in S^{d-1}$, $|\lambda| = \frac{1}{\|p^E(x)\|_2}$. Using again that $\arccos$ is decreasing, we deduce that the minimum is well attained in $y=\frac{p^E(x)}{\|p^E(x)\|_2} = \frac{UU^Tx}{\|UU^Tx\|_2}$.

Finally, using that $\|UU^Tx\|_2 = x^T UU^T UU^Tx = x^T UU^T x= \|U^Tx\|_2$, we deduce that
\begin{equation}
    P^U(x) = \frac{U^Tx}{\|U^Tx\|_2}.
\end{equation}

Finally, by noticing that the projection is unique if and only if $U^Tx=0$, and using \citep[Proposition 4.2]{bardelli2017probability} which states that there is a unique projection for a.e. $x$, we deduce that $\{x\in S^{d-1},\ U^Tx=0\}$ is of measure null and hence, for a.e. $x\in S^{d-1}$, we have the result.

\subsection{Proof of Proposition \ref{prop:radon_dual}} \label{proof:radon_dual}

Let $f\in L^1(S^{d-1})$, $g\in C_0(S^1\times \mathbb{V}_{d,2})$, then by Fubini's theorem, 
\begin{equation}
    \begin{aligned}
        \langle \Tilde{R}f, g\rangle_{S^1 \times \mathbb{V}_{d,2}} &= \int_{V_{d,2}}\int_{S^1} \Tilde{R}f(z, U) g(z, U) \ \mathrm{d}z\mathrm{d}\sigma(U) \\
        &= \int_{V_{d,2}}\int_{S^1} \int_{S^{d-1}}f(x) \mathbb{1}_{\{z= P^U(x)\}} g(z,U) \ \mathrm{d}x\mathrm{d}z\mathrm{d}\sigma(U) \\
        &= \int_{S^{d-1}} f(x) \int_{V_{d,2}}\int_{S^1} g(z,U)\mathbb{1}_{\{z= P^U(x)\}}\ \mathrm{d}z\mathrm{d}\sigma(U)\mathrm{d}x \\
        &= \int_{S^{d-1}} f(x) \int_{V_{d,2}} g\big(P^U(x), U\big)\ \mathrm{d}\sigma(U)\mathrm{d}x \\
        &= \int_{S^{d-1}} f(x) \Tilde{R}^*g(x)\ \mathrm{d}x \\
        &= \langle f, \Tilde{R}^*g\rangle_{S^{d-1}}.
    \end{aligned}
\end{equation}

\subsection{Proof of Proposition \ref{prop:radon_disintegrated}} \label{proof:radon_disintegrated}

Let $g\in C_0(S^1 \times \mathbb{V}_{d,2})$,
\begin{equation}
    \begin{aligned}
        \ \int_{\mathbb{V}_{d,2}} \int_{S^1} g(z,U) \ (\Tilde{R}\mu)^U(\mathrm{d}z)\ \mathrm{d}\sigma(U) &= \int_{S^1 \times \mathbb{V}_{d,2}} g(z,U)\ \mathrm{d}(\Tilde{R}\mu)(z,U) \\
        &= \int_{S^{d-1}} \Tilde{R}^*g(x) \ \mathrm{d}\mu(x) \\
        &= \int_{S^{d-1}} \int_{\mathbb{V}_{d,2}} g(P^U(x), U)\ \mathrm{d}\sigma(U) \mathrm{d}\mu(x) \\
        &= \int_{\mathbb{V}_{d,2}} \int_{S^{d-1}} g(P^U(x), U) \ \mathrm{d}\mu(x) \mathrm{d}\sigma(U) \\
        &= \int_{\mathbb{V}_{d,2}} \int_{S^1} g(z, U) \ \mathrm{d}(P^U_\#\mu)(z)\mathrm{d}\sigma(U).
    \end{aligned}
\end{equation}
Hence, for $\sigma$-almost every $U\in\mathbb{V}_{d,2}$, $(\Tilde{R}\mu)^U = P^U_\#\mu$.

\subsection{Study of the Spherical Radon transform $\Tilde{R}$} \label{appendix:spherical_radon_transform}

In this Section, we first discuss the set of integration of the spherical Radon transform $\Tilde{R}$ \eqref{eq:spherical_RT}. We further show that it is related to the hemispherical Radon transform and we derive its kernel.

\paragraph{Set of integration.}

While the classical Radon transform integrates over hyperplanes of $\mathbb{R}^d$ and the generalized Radon transform integrates over hypersurfaces \citep{kolouri2019generalized}, the set of integration of the spherical Radon transform \eqref{eq:spherical_RT} is a half of a ``big circle'', \emph{i.e.} half of the intersection between a hyperplane and $S^{d-1}$ \citep{rubin2003notes}. We illustrate this on $S^2$ in Figure \ref{fig:integration_set}. On $S^2$, the intersection between a hyperplane and $S^2$ is a great circle.

\begin{figure}[H]
    \centering
    \includegraphics[width=0.5\linewidth]{./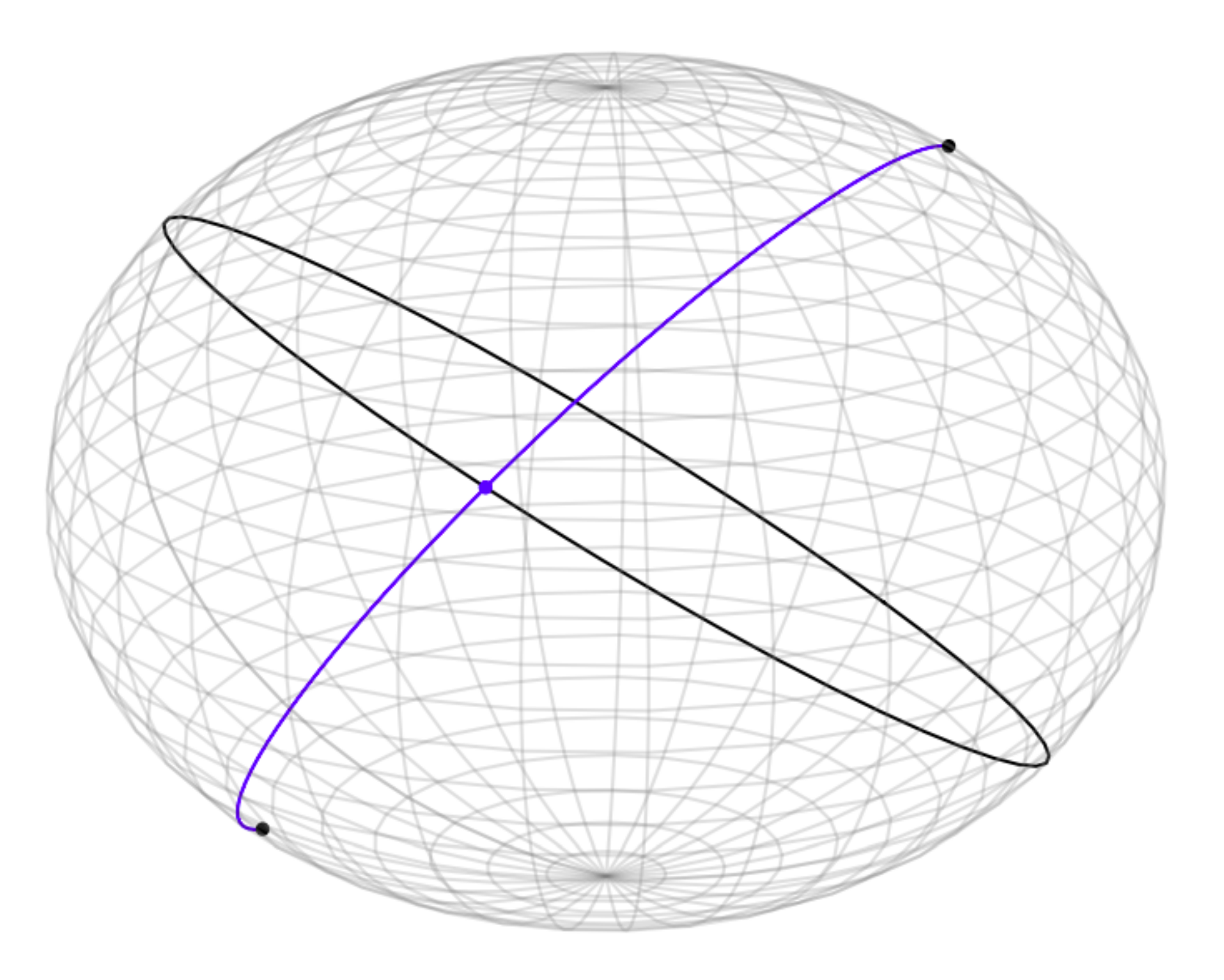}
    \caption{Set of integration of the spherical Radon transform \eqref{eq:spherical_RT}. The great circle is in black and the set of integration in blue. The point $Uz\in \mathrm{span}(UU^T)\cap S^{d-1}$ is in blue.}
    \label{fig:integration_set}
\end{figure}

\begin{proposition} \label{prop:integration_set}
    Let $U\in\mathbb{V}_{d,2}$, $z\in S^1$. The set of integration of \eqref{eq:spherical_RT} is 
    \begin{equation}
        \{x\in S^{d-1},\ P^U(x)=z\} = \{x\in F\cap S^{d-1},\ \langle x,Uz\rangle > 0\},
    \end{equation}
    where $F=\mathrm{span}(UU^T)^\bot \oplus \mathrm{span}(Uz)$.
\end{proposition}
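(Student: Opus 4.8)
The plan is to reduce the geometric description of the integration set to a purely algebraic one by invoking the closed-form projection from Lemma~\ref{lemma:proj_closeform}, and then to establish the claimed equality by a double inclusion. For a.e.\ $x\in S^{d-1}$ we have $P^U(x)=U^Tx/\|U^Tx\|_2$, so the condition $P^U(x)=z$ is equivalent to the existence of a scalar $\lambda>0$ with $U^Tx=\lambda z$, where $\lambda=\|U^Tx\|_2$. This is the identity I would work with throughout.

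First I would record the two elementary facts about $U\in\mathbb{V}_{d,2}$ that drive the argument. Writing $E=\mathrm{span}(UU^T)=\mathrm{range}(U)$, so that $E^\bot=\ker(U^T)$, we have (i) $\|Uz\|_2=\|z\|_2=1$ since $U^TU=I_2$, hence $Uz\in E\cap S^{d-1}$; and (ii) $U^Tx^\bot=0$ for every $x^\bot\in E^\bot$. I would also use the orthogonal decomposition induced by $F=\mathrm{span}(Uz)\oplus E^\bot$, noting that $\mathrm{span}(Uz)\subset E$ is orthogonal to $E^\bot$, so any $x\in F$ writes uniquely as $x=a\,Uz+x^\bot$ with $\langle x,Uz\rangle=a$.

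For the forward inclusion, suppose $P^U(x)=z$, i.e.\ $U^Tx=\lambda z$ with $\lambda>0$. Applying $U$ gives $UU^Tx=\lambda Uz$, and since $UU^T$ is the orthogonal projector onto $E$, the component of $x$ in $E$ equals $\lambda Uz\in\mathrm{span}(Uz)$; hence $x=\lambda Uz+x^\bot\in F$ with $x^\bot=x-UU^Tx\in E^\bot$, and $\langle x,Uz\rangle=\lambda>0$. For the reverse inclusion, take $x\in F\cap S^{d-1}$ with $\langle x,Uz\rangle>0$, write $x=a\,Uz+x^\bot$ with $a=\langle x,Uz\rangle>0$, and compute $U^Tx=a\,U^TUz+U^Tx^\bot=a\,z$ using facts (i)--(ii); thus $\|U^Tx\|_2=a>0$ and $P^U(x)=U^Tx/\|U^Tx\|_2=z$. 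The two inclusions give the stated equality.

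I do not expect a genuine obstacle here: the proposition is essentially a repackaging of Lemma~\ref{lemma:proj_closeform}, and the only point requiring mild care is that the closed form---and hence the indicator $\mathbb{1}_{\{z=P^U(x)\}}$ defining $\Tilde{R}$---is valid only for a.e.\ $x$, namely off the null set $\{x\in S^{d-1}:U^Tx=0\}$. Since the integration set is understood up to this null set, restricting the argument to a.e.\ $x$ is harmless.
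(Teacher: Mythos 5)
Your proof is correct and follows essentially the same route as the paper's: both arguments invoke Lemma~\ref{lemma:proj_closeform} to reduce $P^U(x)=z$ to the algebraic condition $U^Tx=\lambda z$ with $\lambda=\|U^Tx\|_2>0$, and then establish the two inclusions via the orthogonal decomposition $F=\mathrm{span}(Uz)\oplus E^\bot$ with $E=\mathrm{span}(UU^T)$. Your closing remark on the a.e.\ validity of the closed-form projection (the null set $\{x:U^Tx=0\}$) is a point the paper leaves implicit, but it does not change the substance of the argument.
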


\begin{proof}
    Let $U\in\mathbb{V}_{d,2}$, $z\in S^1$. Denote $E=\mathrm{span}(UU^T)$ the 2-plane generating the great circle, and $E^\bot$ its orthogonal complementary. Hence, $E\oplus E^\bot = \mathbb{R}^d$ and $\mathrm{dim}(E^\bot) = d-2$. Now, let $F=E^\bot \oplus \mathrm{span}(Uz)$. Since $Uz = UU^TUz\in E$, we have that $\mathrm{dim}(F)=d-1$. Hence, $F$ is a hyperplane and $F\cap S^{d-1}$ is a ``big circle'' \citep{rubin2003notes}, \emph{i.e.} a $(d-2)$-dimensional subsphere of $S^{d-1}$.
    
    Now, for the first inclusion, let $x\in\{x\in S^{d-1},\ P^U(x)=z\}$. First, we show that $x\in F\cap S^{d-1}$. By Lemma \ref{lemma:proj_closeform} and hypothesis, we know that $P^U(x)=\frac{U^Tx}{\|U^Tx\|_2} = z$. By denoting by $p^E$ the projection on $E$, we have:
    \begin{equation}
        p^E(x) = UU^Tx=U(\|U^Tx\|_2 z) = \|U^Tx\|_2 Uz \in \mathrm{span}(Uz).
    \end{equation}
    Hence, $x=p^E(x)+x_{E^\bot} = \|U^Tx\|_2 Uz + x_{E^\bot} \in F$. Moreover, as 
    \begin{equation}
        \langle x,Uz\rangle = \|U^Tx\|_2 \langle Uz,Uz\rangle = \|U^Tx\|_2 >0,
    \end{equation}
    we deduce that $x\in \{F\cap S^{d-1},\ \langle x,Uz\rangle>0\}$.
    
    For the other inclusion, let $x\in \{F\cap S^{d-1},\ \langle x,Uz\rangle>0\}$. Since $x\in F$, we have $x=x_{E^\bot} + \lambda Uz$, $\lambda\in\mathbb{R}$. Hence, using Lemma \ref{lemma:proj_closeform},
    \begin{equation}
        P^U(x) = \frac{U^Tx}{\|U^Tx\|_2} = \frac{\lambda}{|\lambda|} \frac{z}{\|z\|_2} = \mathrm{sign}(\lambda) z.
    \end{equation}
    But, we also have $\langle x,Uz\rangle=\lambda \|Uz\|_2^2 = \lambda >0$. Therefore, $\mathrm{sign}(\lambda)=1$ and $P^U(x)=z$.
    
    Finally, we conclude that $\{x\in S^{d-1},\ P^U(x)=z\rangle\} = \{x\in F\cap S^{d-1},\ \langle x,Uz\rangle >0\}$.
\end{proof}

\paragraph{Link with Hemispherical transform.} Since the intersection between a hyperplane and $S^{d-1}$ is isometric to $S^{d-2}$ \citep{jung2012analysis}, we can relate $\Tilde{R}$ to the hemispherical transform $\mathcal{H}$ \citep{rubin2003notes} on $S^{d-2}$. First, the hemispherical transform of a function $f\in L^1(S^{d-1})$ is defined as
\begin{equation}
    \forall x\in S^{d-1},\ \mathcal{H}f(x) = \int_{S^{d-1}} f(y)\mathbb{1}_{\{\langle x,y\rangle >0\}}\mathrm{d}y.
\end{equation}
From Proposition \ref{prop:integration_set}, we can write the spherical Radon transform \eqref{eq:spherical_RT} as a hemispherical transform on $S^{d-2}$.

\begin{proposition} \label{prop:link_hemispherical}
    Let $f\in L^1(S^{d-1})$, $U\in\mathbb{V}_{d,2}$ and $z\in S^1$, then
    \begin{equation}
        \Tilde{R}f(z, U) = \int_{S^{d-2}} \Tilde{f}(x)\mathbb{1}_{\{\langle x, \Tilde{U}z\rangle >0\}}\mathrm{d}x = \mathcal{H}\Tilde{f}(\Tilde{U}z),
    \end{equation}
    where for all $x\in S^{d-2}$, $\Tilde{f}(x) = f(O^T J x)$ with $O$ the rotation matrix such that for all $x\in F$, $Ox\in \mathrm{span}(e_1,\dots,e_{d-1})$ where $(e_1,\dots,e_d)$ denotes the canonical basis, and $J=\begin{pmatrix}I_{d-1} \\ 0_{1, d-1} \end{pmatrix}$, and $\Tilde{U}=J^T OU \in \mathbb{R}^{(d-1)\times 2}$.
\end{proposition}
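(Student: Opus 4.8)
The plan is to reduce the surface integral defining $\Tilde{R}f(z,U)$ in \eqref{eq:spherical_RT} to an integral over the standard sphere $S^{d-2}$ via the isometry furnished by the rotation $O$. First I would invoke Proposition \ref{prop:integration_set}, which identifies the fiber $\{x\in S^{d-1},\ P^U(x)=z\}$ with the open half big circle $\{x\in F\cap S^{d-1},\ \langle x,Uz\rangle>0\}$, where $F=\mathrm{span}(UU^T)^\bot\oplus\mathrm{span}(Uz)$ is a hyperplane. Since $F$ is a $(d-1)$-dimensional subspace, $F\cap S^{d-1}$ is a $(d-2)$-dimensional subsphere, so the transform $\Tilde{R}f(z,U)$ is genuinely an integral of $f$ over this subsphere against its induced surface measure.

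Next I would use the rotation $O$, which by hypothesis carries $F$ onto $\mathrm{span}(e_1,\dots,e_{d-1})$, to transport $F\cap S^{d-1}$ isometrically onto $\mathrm{span}(e_1,\dots,e_{d-1})\cap S^{d-1}$, i.e. the standard $S^{d-2}$ sitting in the first $d-1$ coordinates. Parametrizing the subsphere by $x=O^T J\Tilde{x}$ with $\Tilde{x}\in S^{d-2}$, where $J=\begin{pmatrix}I_{d-1}\\ 0_{1,d-1}\end{pmatrix}$ appends a zero coordinate, the integrand becomes $f(O^T J\Tilde{x})=\Tilde{f}(\Tilde{x})$ by definition of $\Tilde{f}$. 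Because $O$ is an isometry it preserves the induced surface measure, so this change of variables introduces no Jacobian factor.

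I would then rewrite the sign condition using the adjoint relations: $\langle x,Uz\rangle=\langle O^T J\Tilde{x},Uz\rangle=\langle \Tilde{x},J^T OUz\rangle=\langle \Tilde{x},\Tilde{U}z\rangle$ with $\Tilde{U}=J^T OU$. Hence the integral collapses to $\int_{S^{d-2}}\Tilde{f}(\Tilde{x})\mathbb{1}_{\{\langle \Tilde{x},\Tilde{U}z\rangle>0\}}\mathrm{d}\Tilde{x}$, which is exactly $\mathcal{H}\Tilde{f}(\Tilde{U}z)$. To legitimize writing this as the hemispherical transform \emph{evaluated at} $\Tilde{U}z$, I would finally verify that $\Tilde{U}z\in S^{d-2}$: since $Uz\in F$, the vector $OUz$ lies in $\mathrm{span}(e_1,\dots,e_{d-1})$ so its last coordinate vanishes and $J^T OUz$ merely reads off its nonzero part, while $\|Uz\|_2=\|z\|_2=1$ together with the norm-preservation of $O$ gives $\|\Tilde{U}z\|_2=1$.

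The main obstacle is the careful treatment of the change of variables on the submanifold: one must argue that the loose indicator notation in \eqref{eq:spherical_RT} is to be read as integration over the $(d-2)$-sphere $F\cap S^{d-1}$ against its induced measure, and that the rotation $O$ transports that measure to the standard surface measure on $S^{d-2}$ without a Jacobian. Once this is settled, the remaining work — the inner-product identity and the check that $\Tilde{U}z$ lands on $S^{d-2}$ — is routine algebra.
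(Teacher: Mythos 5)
Your proposal is correct and follows essentially the same route as the paper: invoke Proposition \ref{prop:integration_set} to identify the fiber with the half big circle in $F\cap S^{d-1}$, rotate $F$ onto $\mathrm{span}(e_1,\dots,e_{d-1})$ via $O$ with no Jacobian, and read off the indicator condition through the adjoint identity $\langle O^TJ\Tilde{x},Uz\rangle=\langle\Tilde{x},\Tilde{U}z\rangle$. Your additional check that $\Tilde{U}z\in S^{d-2}$ and your remark on interpreting the indicator integral as integration against the induced surface measure on the subsphere are welcome clarifications that the paper leaves implicit.
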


\begin{proof}
    Let $f\in L^1(S^{d-1})$, $z\in S^1$, $U\in\mathbb{V}_{d,2}$, then by Proposition \ref{prop:integration_set},
    \begin{equation}
        \begin{aligned}
            \Tilde{R}f(z,U) &= \int_{S^{d-1}\cap F} f(x) \mathbb{1}_{\{\langle x,Uz\rangle>0\}}\mathrm{d}x.
        \end{aligned}
    \end{equation}
    $F$ is a hyperplane. Let $O\in\mathbb{R}^{d\times d}$ be the rotation such that for all $x\in F$, $Ox\in\mathrm{span}(e_1,\dots,e_{d-1})=\Tilde{F}$ where $(e_1,\dots,e_d)$ is the canonical basis. By applying the change of variable $Ox=y$, and since $O^{-1}=O^T$, $\det O = 1$, we obtain
    \begin{equation}
        \Tilde{R}f(z,U) = \int_{O(F\cap S^{d-1})} f(O^Ty) \mathbb{1}_{\{\langle O^Ty, Uz\rangle>0\}} \mathrm{d}y = \int_{\Tilde{F}\cap S^{d-1}} f(O^Ty)\mathbb{1}_{\{\langle y, OUz\rangle>0\}}\mathrm{d}y.
    \end{equation}
    Now, we have that $OU\in\mathbb{V}_{d,2}$ since $(OU)^T(OU) = I_2$, and since $Uz\in F$, $OUz\in \Tilde{F}$. For all $y\in \Tilde{F}$, we have $\langle y,e_d\rangle = y_d = 0$. Let $J=\begin{pmatrix}I_{d-1} \\ 0_{1, d-1} \end{pmatrix}\in\mathbb{R}^{d\times (d-1)}$, then for all $y\in \Tilde{F}\cap S^{d-1}$, $y=J\Tilde{y}$ where $\Tilde{y}\in S^{d-2}$ is composed of the $d-1$ first coordinates of $y$.
    
    Let's define, for all $\Tilde{y}\in S^{d-2}$, $\Tilde{f}(\Tilde{y}) = f(O^T J\Tilde{y})$, $\Tilde{U}=J^TOU$.
    
    Then, since $\Tilde{F}\cap S^{d-1} \cong S^{d-2}$, we can write:
    \begin{equation}
        \Tilde{R}f(z,U) = \int_{S^{d-2}} \Tilde{f}(\Tilde{y}) \mathbb{1}_{\{\langle \Tilde{y}, \Tilde{U}z\rangle > 0\}}\mathrm{d}\Tilde{y} = \mathcal{H}\Tilde{f}(\Tilde{U}z).
    \end{equation}
\end{proof}

\paragraph{Kernel of $\Tilde{R}$.} By exploiting the expression using the hemispherical transform in Proposition \ref{prop:link_hemispherical}, we can derive its kernel in Appendix \ref{proof:kernel_radon}.

\subsection{Proof of Proposition \ref{prop:kernel_radon}} \label{proof:kernel_radon}

First, we recall Lemma 2.3 of \citep{rubin1999inversion} on $S^{d-2}$.

\begin{lemma}[Lemma 2.3 \citep{rubin1999inversion}] \label{lemma:rubin}
    $\mathrm{ker}(\mathcal{H}) = \{\mu\in\mathcal{M}_{\mathrm{even}}(S^{d-2}),\ \mu(S^{d-2})=0\}$ where $\mathcal{M}_{\mathrm{even}}$ is the set of even measures, \emph{i.e.} measures such that for all $f\in C(S^{d-2})$, $\langle \mu, f\rangle = \langle \mu, f^-\rangle$ where $f^-(x)=f(-x)$ for all $x\in S^{d-2}$.
\end{lemma}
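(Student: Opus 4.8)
The plan is to diagonalize $\mathcal{H}$ in the basis of spherical harmonics by means of the Funk--Hecke theorem and then to read off the kernel directly from the vanishing pattern of the resulting multipliers. The operator $\mathcal{H}$ is a convolution against the zonal kernel $k(t)=\mathbb{1}_{\{t>0\}}$, and I would begin by writing $\mathbb{1}_{\{t>0\}}=\tfrac12\big(1+\mathrm{sign}(t)\big)$, so that $\mathcal{H}f(x)=\tfrac12\int_{S^{d-2}}f\,\mathrm{d}y+\tfrac12\int_{S^{d-2}}\mathrm{sign}(\langle x,y\rangle)\,f(y)\,\mathrm{d}y$. This splitting already isolates the behaviour on the constants (the first term) from the odd part (the second term), and it is this separation that will drive the whole argument.

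Next I would recall the decomposition $L^2(S^{d-2})=\bigoplus_{\ell\ge0}\mathcal{H}_\ell$ into spaces of degree-$\ell$ spherical harmonics, together with the parity relation $Y_\ell(-x)=(-1)^\ell Y_\ell(x)$. Pushing a measure $\mu$ forward by the antipodal map multiplies its harmonic moment $\widehat\mu_\ell=\int Y_\ell\,\mathrm{d}\mu$ by $(-1)^\ell$, so the condition $\langle\mu,f\rangle=\langle\mu,f^-\rangle$ for all $f$ (that is, $\mu\in\mathcal{M}_{\mathrm{even}}$) is equivalent to the vanishing of all odd-degree moments, while $\mu(S^{d-2})=0$ is exactly the vanishing of the degree-$0$ moment. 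The target set is therefore precisely $\{\mu:\ \widehat\mu_0=0\ \text{and}\ \widehat\mu_\ell=0\ \text{for all odd }\ell\}$.

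I would then apply Funk--Hecke: $\mathcal{H}$ acts on $\mathcal{H}_\ell$ as multiplication by a scalar $\lambda_\ell$ proportional to $\int_{-1}^1\mathbb{1}_{\{t>0\}}\,C_\ell^{(\lambda)}(t)\,(1-t^2)^{\lambda-1/2}\,\mathrm{d}t$ with $\lambda=(d-3)/2$ (the case $S^1$, $d=3$, being handled directly by Fourier series). Using the parity $C_\ell^{(\lambda)}(-t)=(-1)^\ell C_\ell^{(\lambda)}(t)$ and orthogonality of $C_\ell^{(\lambda)}$ to the constant $C_0^{(\lambda)}$, the computation yields: $\lambda_0=\tfrac12\int_{-1}^1(1-t^2)^{\lambda-1/2}\,\mathrm{d}t\neq0$; for even $\ell\ge2$ both the constant and the sign contributions vanish, so $\lambda_\ell=0$; and for odd $\ell$ the constant part drops out while the sign part reduces to $\int_0^1 C_\ell^{(\lambda)}(t)(1-t^2)^{\lambda-1/2}\,\mathrm{d}t$. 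Consequently $\mathcal{H}f=0$ forces $\widehat f_0=0$ and $\widehat f_\ell=0$ for all odd $\ell$, with no constraint on even $\ell\ge2$; since spherical harmonics are dense in $C(S^{d-2})$ by Stone--Weierstrass, a measure annihilating all of them is zero, so the identical characterization transfers to the measures defined by duality, giving $\ker(\mathcal{H})=\{\mu\in\mathcal{M}_{\mathrm{even}}(S^{d-2}):\mu(S^{d-2})=0\}$.

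The main obstacle will be establishing that $\int_0^1 C_\ell^{(\lambda)}(t)(1-t^2)^{\lambda-1/2}\,\mathrm{d}t\neq0$ for every odd $\ell$, because this non-vanishing is exactly what makes $\mathcal{H}$ injective on the odd harmonics and hence underlies the nontrivial ``only if'' inclusion. I would prove it by explicitly evaluating this integral through the Rodrigues formula for Gegenbauer polynomials, which produces a ratio of Gamma functions manifestly nonzero for odd $\ell$; alternatively one may simply invoke the known closed form of the hemispherical multipliers. The final passage from $L^2$ functions to finite signed measures is then routine, relying only on the density remark above.
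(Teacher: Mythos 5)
Your proposal is correct, but note that the paper does not prove this statement at all: Lemma \ref{lemma:rubin} is imported verbatim from \citet{rubin1999inversion} (his Lemma 2.3) and used as a black box in the proof of Proposition \ref{prop:kernel_radon}, so there is no in-paper argument to compare against. What you have written is a self-contained proof, and it is essentially the classical one underlying the cited result: diagonalize $\mathcal{H}$ via Funk--Hecke, observe that the multiplier $\lambda_\ell$ vanishes exactly for even $\ell\ge 2$ (the constant part of $\mathbb{1}_{\{t>0\}}=\tfrac12(1+\mathrm{sign}(t))$ only sees $\ell=0$ by Gegenbauer orthogonality, and the odd kernel $\mathrm{sign}(t)$ kills even harmonics by parity), and match the surviving conditions $\widehat{\mu}_0=0$, $\widehat{\mu}_\ell=0$ for odd $\ell$ with the set $\{\mu\in\mathcal{M}_{\mathrm{even}},\ \mu(S^{d-2})=0\}$ via density of harmonics in $C(S^{d-2})$. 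The one step you correctly flag as the crux — nonvanishing of $\int_0^1 C_\ell^{(\lambda)}(t)(1-t^2)^{\lambda-1/2}\,\mathrm{d}t$ for odd $\ell$ — does go through by your Rodrigues argument: integrating the Rodrigues representation leaves a single boundary term, the $(\ell-1)$-st derivative of $(1-t^2)^{\ell+\lambda-1/2}$ at $t=0$ (the term at $t=1$ vanishes since $\lambda+\tfrac12>0$ for $d\ge 4$), and since $\ell-1$ is even this Taylor coefficient is a nonzero multiple of $\binom{\ell+\lambda-1/2}{(\ell-1)/2}$, which cannot vanish because $\ell+\lambda-\tfrac12>(\ell-1)/2$; your separate Fourier-series treatment of $S^1$ (where $\lambda=0$ degenerates and the multipliers are $2\sin(n\pi/2)/n$) is also needed and correct. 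Two small points worth making explicit if you write this up: the transfer from functions to measures uses the symmetry of the kernel $\mathbb{1}_{\{\langle x,y\rangle>0\}}$ in $(x,y)$ to justify $\langle \mathcal{H}\mu, Y_\ell\rangle=\langle \mu,\mathcal{H}Y_\ell\rangle=\lambda_\ell\widehat{\mu}_\ell$ (Fubini with a bounded kernel and a finite signed measure), and the equivalence of $\mu\in\mathcal{M}_{\mathrm{even}}$ with the vanishing of all odd-degree moments again invokes density, so $\mathcal{H}\mu=0$ should be read distributionally (all harmonic coefficients zero) rather than pointwise, matching how the lemma is used in Appendix \ref{proof:kernel_radon}.
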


Let $\mu\in\mathcal{M}_{ac}(S^{d-1})$. First, we notice that the density of $\Tilde{R}\mu$ \emph{w.r.t.} $\lambda\otimes \sigma$ is, for all $z\in S^1$, $U\in\mathbb{V}_{d,2}$, 
\begin{equation}
    (\Tilde{R}\mu)(z,U)=\int_{S^{d-1}} \mathbb{1}_{\{P^U(x)=z\}}\mathrm{d}\mu(x) = \int_{F\cap S^{d-1}} \mathbb{1}_{\{\langle x,Uz\rangle>0\}}\mathrm{d}\mu(x).    
\end{equation}
Indeed, using Proposition \ref{prop:radon_dual}, and Proposition \ref{prop:integration_set}, we have for all $g\in C_0(S^1\times \mathbb{V}_{d,2})$,
\begin{equation}
    \begin{aligned}
        \langle \Tilde{R}\mu, g\rangle_{S^1\times \mathbb{V}_{d,2}} = \langle \mu, \Tilde{R}^*g\rangle_{S^{d-1}} &= \int_{S^{d-1}} R^*g(x)\mathrm{d}\mu(x) \\
        &= \int_{S^{d-1}} \int_{\mathbb{V}_{d,2}} \int_{S^1} g(z,U) \mathbb{1}_{\{ z=P^U(x)\}} \mathrm{d}z\mathrm{d}\sigma(U)\mathrm{d}\mu(x) \\
        &= \int_{\mathbb{V}_{d,2}\times S^1} g(z, U) \int_{S^{d-1}} \mathbb{1}_{\{z=P^U(x)\}} \mathrm{d} \mu(x)\ \mathrm{d}z\mathrm{d}\sigma(U) \\
        &= \int_{\mathbb{V}_{d,2}\times S^1} g(z, U) \int_{F\cap S^{d-1}} \mathbb{1}_{\{\langle x,Uz\rangle>0\}}\mathrm{d}\mu(x) \ \mathrm{d}z\mathrm{d}\sigma(U).
    \end{aligned}
\end{equation}

Hence, using Proposition \ref{prop:link_hemispherical}, we can write $(\Tilde{R}\mu)(z,U) = (\mathcal{H}\Tilde{\mu})(\Tilde{U}z)$ where $\Tilde{\mu} = J^T_\#O_\#\mu$.

Now, let $\mu\in\mathrm{ker}(\Tilde{R})$, then for all $z\in S^1$, $U\in\mathbb{V}_{d,2}$, $\Tilde{R}\mu(z,U) = \mathcal{H}\Tilde{\mu}(\Tilde{U}z)=0$ and hence $\Tilde{\mu}\in\mathrm{ker}(\mathcal{H})=\{\Tilde{\mu}\in\mathcal{M}_{\mathrm{even}}(S^{d-2}),\ \Tilde{\mu}(S^{d-2})=0\}$.

First, let's show that $\mu\in\mathcal{M}_{\mathrm{even}}(S^{d-1})$. Let $f\in C(S^{d-1})$ and $U\in\mathbb{V}_{d,2}$, then, by using the same notation as in Propositions \ref{prop:integration_set} and \ref{prop:link_hemispherical}, we have
\begin{equation}
    \begin{aligned}
        \langle \mu, f\rangle_{S^{d-1}} = \int_{S^{d-1}} f(x) \mathrm{d}\mu(x) &= \int_{S^{d-1}} \int_{S^1} f(x) \mathbb{1}_{\{z=P^U(x)\}}\ \mathrm{d}z\ \mathrm{d}\mu(x) \\
        &= \int_{S^{1}} \int_{S^{d-1}} f(x) \mathbb{1}_{\{z=P^U(x)\}}\mathrm{d}\mu(x)\mathrm{d}z \\
        &= \int_{S^1} \int_{F\cap S^{d-1}} f(x) \mathbb{1}_{\{\langle x,Uz\rangle>0\}}\mathrm{d}\mu(x)\mathrm{d}z \quad \text{ by Prop. \ref{prop:integration_set}} \\
        &= \int_{S^1} \int_{S^{d-2}} \Tilde{f}(y) \mathbb{1}_{\{\langle y,\Tilde{U}z\rangle > 0\}} \mathrm{d}\Tilde{\mu}(y)\mathrm{d}z \\
        &= \int_{S^1} \langle \mathcal{H}\Tilde{\mu}, \Tilde{f}\rangle_{S^{d-2}}\ \mathrm{d}z \\
        &= \int_{S^1} \langle \Tilde{\mu}, \mathcal{H}\Tilde{f}\rangle_{S^{d-2}}\ \mathrm{d}z \\
        &= \int_{S^1} \langle \Tilde{\mu}, (\mathcal{H}\Tilde{f})^- \rangle_{S^{d-2}}\ \mathrm{d}z \quad \text{ since $\Tilde{\mu}\in\mathcal{M}_{\mathrm{even}}$} \\
        &= \int_{S^{d-1}} f^-(x)\mathrm{d}\mu(x) = \langle \mu, f^-\rangle_{S^{d-1}},
    \end{aligned}
\end{equation}
using for the last line all the opposite transformations. Therefore, $\mu\in\mathcal{M}_{\mathrm{even}}(S^{d-1})$.

Now, we need to find on which set the measure is null. We have
\begin{equation}
    \begin{aligned}
        &\forall z\in S^1, U\in \mathbb{V}_{d,2},\ \Tilde{\mu}(S^{d-2})=0 \\&\iff \forall z\in S^1, U\in\mathbb{V}_{d,2},\ \mu(O^{-1}((J^T)^{-1}(S^{d-2}))) = \mu(F\cap S^{d-1}) = 0.
    \end{aligned}
\end{equation}

Hence, we deduce that
\begin{equation}
    \begin{aligned}
        \mathrm{ker}(\Tilde{R}) = \{&\mu\in\mathcal{M}_{\mathrm{even}}(S^{d-1}),\ \forall U \in\mathbb{V}_{d,2}, \forall z\in S^1, F=\mathrm{span}(UU^T)^\bot\cap\mathrm{span}(Uz),\\ & \mu(F\cap S^{d-1})=0\}.
    \end{aligned}
\end{equation}
Moreover, we have that $\cup_{U,z} F_{U,z}\cap S^{d-1} = \{H\cap S^{d-1} \subset\mathbb{R}^d,\ \mathrm{dim}(H)=d-1\}$.

Indeed, on the one hand, let H an hyperplane, $x\in H\cap S^{d-1}$, $U\in\mathbb{V}_{d,2}$, and note $z=P^U(x)$. Then, $x\in F\cap S^{d-1}$ by Proposition \ref{prop:integration_set} and $H\cap S^{d-1}\subset \cup_{U,z}F_{U,z}$.

On the other hand, let $U\in\mathbb{V}_{d,2}$, $z\in S^1$, $F$ is a hyperplane since $\mathrm{dim}(F)=d-1$ and therefore $F\cap S^{d-1}\subset \{H,\ \mathrm{dim}(H)=d-1\}$. 

Finally, we deduce that
\begin{equation}
    \mathrm{ker}(\Tilde{R}) = \big\{\mu\in \mathcal{M}_{\mathrm{even}}(S^{d-1}),\ \forall H\in\mathcal{G}_{d,d-1},\ \mu(H\cap S^{d-1})\big\}.
\end{equation}

\subsection{Proof of Proposition \ref{prop:ssw_distance}}

Let $p\ge 1$.
First, it is straightforward to see that for all $\mu,\nu\in\mathcal{P}_p(S^{d-1})$, $SSW_p(\mu,\nu) \ge 0$, $SSW_p(\mu,\nu)=SSW_p(\nu,\mu)$, $\mu=\nu \implies SSW_p(\mu,\nu)=0$ and that we have the triangular inequality since
\begin{equation}
    \begin{aligned}
        \forall \mu,\nu,\alpha\in\mathcal{P}_p(S^{d-1}),\ SSW_p(\mu,\nu) &= \Big(\int_{\mathbb{V}_{d,2}} W_p^p(P^U_\#\mu,P^U_\#\nu)\ \mathrm{d}\sigma(U)\Big)^{\frac{1}{p}} \\
        &\le \Big(\int_{\mathbb{V}_{d,2}} \big(W_p(P^U_\#\mu,P^U_\#\alpha) + W_p(P^U_\#\alpha,P^U_\#\nu)\big)^p\ \mathrm{d}\sigma(U)\Big)^{\frac{1}{p}} \\
        &\le \Big(\int_{\mathbb{V}_{d,2}} W_p^p(P^U_\#\mu,P^U_\#\alpha)\ \mathrm{d}\sigma(U)\Big)^{\frac{1}{p}} \\ &+ \Big(\int_{\mathbb{V}_{d,2}} W_p^p(P^U_\#\alpha,P^U_\#\nu)\ \mathrm{d}\sigma(U)\Big)^{\frac{1}{p}} \\
        &= SSW_p(\mu,\alpha)+SSW_p(\alpha,\nu),
    \end{aligned}
\end{equation}
using the triangular inequality for $W_p$ and the Minkowski inequality.
Therefore, it is at least a pseudo-distance.

To be a distance, we also need $SSW_p(\mu,\nu)=0\implies \mu=\nu$. Suppose that $SSW_p(\mu,\nu)=0$. Since, for all $U\in\mathbb{V}_{d,2}$, $W_p^p(P^U_\#\mu,P^U_\#\nu)\ge 0$, $SSW_p^p(\mu,\nu)=0$ implies that for $\sigma$-ae $U\in\mathbb{V}_{d,2}$, $W_p^p(P^U_\#\mu,P^U_\#\nu)=0$ and hence $P^U_\#\mu=P^U_\#\nu$ or $(\Tilde{R}\mu)^U=(\Tilde{R}\nu)^U$ for $\sigma$-ae $U\in\mathbb{V}_{d,2}$ since $W_p$ is a distance on the circle. Therefore, it is a distance on the sets of injectivity of $\Tilde{R}$.

\subsection{Additional Properties} \label{appendix:add_props}

In this Section, we derive additional properties of SSW. First, we will show that the weak convergence implies the convergence \emph{w.r.t} SSW. Then, we will show that the sample complexity is independent of the dimension. Finally, we will derive the projection complexity of SSW.

\paragraph{Convergence Properties.}

\begin{proposition} \label{prop:ssw_cv} 
    Let $(\mu_k),\mu\in\mathcal{P}_p(S^{d-1})$ such that $\mu_k\xrightarrow[k\to\infty]{}\mu$, then
    \begin{equation}
        SSW_p(\mu_k,\mu)\xrightarrow[k\to\infty]{}0.
    \end{equation}
\end{proposition}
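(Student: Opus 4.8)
The plan is to establish the result via dominated convergence applied to the integral representation $SSW_p^p(\mu_k,\mu) = \int_{\mathbb{V}_{d,2}} W_p^p(P^U_\#\mu_k, P^U_\#\mu)\,\mathrm{d}\sigma(U)$. Concretely, I would show that the integrand converges to $0$ for $\sigma$-almost every $U$, and that it is uniformly bounded by a $\sigma$-integrable constant; the conclusion $SSW_p(\mu_k,\mu)\to 0$ then follows by taking $p$-th roots. (Recall that since $S^{d-1}$ is compact, the convergence $\mu_k\to\mu$ can be read as weak convergence without ambiguity.)

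For the pointwise (in $U$) convergence, the key observation is that the projection $P^U(x)=U^Tx/\|U^Tx\|_2$ of Lemma \ref{lemma:proj_closeform} is continuous at every $x$ with $U^Tx\neq 0$, so its only discontinuities lie on the null great subsphere $D_U=\{x\in S^{d-1}:\ U^Tx=0\}$. First I would argue that $\mu(D_U)=0$ for $\sigma$-almost every $U$: since for each fixed $x$ the set $\{U\in\mathbb{V}_{d,2}:\ U^Tx=0\}$ is $\sigma$-negligible, Fubini's theorem applied to $\{(x,U):\ U^Tx=0\}$ under $\mu\otimes\sigma$ gives $\int_{\mathbb{V}_{d,2}}\mu(D_U)\,\mathrm{d}\sigma(U)=0$. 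Hence, for $\sigma$-a.e. $U$, the map $P^U$ is continuous $\mu$-almost everywhere, and the continuous mapping theorem for weak convergence (the Portmanteau/mapping theorem) yields $P^U_\#\mu_k \to P^U_\#\mu$ weakly on $S^1$ from $\mu_k\to\mu$.

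It then remains to upgrade weak convergence on the circle to $W_p$ convergence. Since $S^1$ is compact, weak convergence of probability measures is equivalent to convergence in $W_p$ (the $p$-th moments being automatically controlled by the finite diameter), so $W_p^p(P^U_\#\mu_k, P^U_\#\mu)\to 0$ for $\sigma$-a.e. $U$. For the domination, the geodesic distance on $S^1$ satisfies $d_{S^1}\le \tfrac12$, whence $W_p^p(P^U_\#\mu_k,P^U_\#\mu)\le \bigl(\tfrac12\bigr)^p$ uniformly in $k$ and $U$, and this constant is $\sigma$-integrable because $\sigma$ is a probability measure on $\mathbb{V}_{d,2}$. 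Dominated convergence then gives $\int_{\mathbb{V}_{d,2}} W_p^p(P^U_\#\mu_k,P^U_\#\mu)\,\mathrm{d}\sigma(U)\to 0$, which is exactly $SSW_p^p(\mu_k,\mu)\to 0$.

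The main obstacle I anticipate is the first step, namely the careful handling of the discontinuity set of the geodesic projection: one must ensure the continuous mapping theorem applies despite $P^U$ failing to be globally continuous, and the Fubini argument exchanging $\mu$ and $\sigma$ is precisely what makes this work for $\sigma$-almost every $U$ (crucially, without needing absolute continuity of $\mu$). The remaining ingredients — the equivalence of weak and $W_p$ convergence on the compact circle, and the uniform bound coming from the bounded diameter — are standard and should be routine.
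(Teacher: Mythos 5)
Your proof is correct and follows essentially the same route as the paper's: pointwise (in $U$) convergence of $W_p^p(P^U_\#\mu_k,P^U_\#\mu)$ via the metrization of weak convergence by $W_p$ on the compact circle, followed by dominated convergence over $\mathbb{V}_{d,2}$ with the trivial bound from the bounded diameter. The one place you go beyond the paper is the Fubini argument showing $\mu(\{x:\ U^Tx=0\})=0$ for $\sigma$-almost every $U$, so that the mapping theorem applies even though $P^U$ is not globally continuous --- the paper elides this with a bare ``(by continuity)'', so your added care is a refinement of the same argument rather than a different approach.
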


\begin{proof}
    Since the Wasserstein distance metrizes the weak convergence (Corollary 6.11 \citep{villani2008optimal}), we have $P^U_\#\mu_k\xrightarrow[k\to\infty]{}P^U_\#\mu$ (by continuity) $\iff W_p^p(P^U_\#\mu_k,P^U_\#\mu)\xrightarrow[k\to\infty]{}0$ and hence by the dominated convergence theorem, $SSW_p^p(\mu_k,\mu)\xrightarrow[k\to\infty]{}0$.
\end{proof}

\paragraph{Sample Complexity.}

We show here that the sample complexity is independent of the dimension. Actually, this is a well known properties of sliced-based distances and it was studied first in \citep{nadjahi2020statistical}. To the best of our knowledge, the sample complexity of the Wasserstein distance on the circle has not been previously derived. We suppose in the next proposition that it is known as we mainly want to show that the sample complexity of SSW does not depend on the dimension.

\begin{proposition} \label{prop:samplecomplexity}
    Let $p\ge 1$. Suppose that for $\mu,\nu\in\mathcal{P}(S^1)$, with empirical measures $\hat{\mu}_n = \frac{1}{n}\sum_{i=1}^n \delta_{x_i}$ and $\hat{\nu}_n = \frac{1}{n}\sum_{i=1}^n \delta_{y_i}$, where $(x_i)_i \sim \mu$, $(y_i)_i \sim \nu$ are independent samples, we have
    \begin{equation}
        \mathbb{E}[|W_p^p(\hat{\mu}_n,\hat{\nu}_n)-W_p^p(\mu,\nu)|] \le \beta(p,n).
    \end{equation}
    Then, for any $\mu,\nu\in \mathcal{P}_{p,ac}(S^{d-1})$ with empirical measures $\hat{\mu}_n$ and $\hat{\nu}_n$, we have
    \begin{equation}
        \mathbb{E}[|SSW_p^p(\hat{\mu}_n,\hat{\nu}_n) - SSW_p^p(\mu,\nu)|] \le \beta(p,n).
    \end{equation}
\end{proposition}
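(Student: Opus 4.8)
The plan is to reduce the $d$-dimensional problem to the one-dimensional hypothesis on the circle, using the integral definition \eqref{eq:ssw} of $SSW$ together with Fubini's theorem. Starting from
\begin{equation}
    \mathbb{E}\big[|SSW_p^p(\hat{\mu}_n,\hat{\nu}_n) - SSW_p^p(\mu,\nu)|\big] = \mathbb{E}\bigg[\Big|\int_{\mathbb{V}_{d,2}} \big(W_p^p(P^U_\#\hat{\mu}_n, P^U_\#\hat{\nu}_n) - W_p^p(P^U_\#\mu, P^U_\#\nu)\big)\ \mathrm{d}\sigma(U)\Big|\bigg],
\end{equation}
I would first push the absolute value inside the integral via the triangle inequality, and then swap the expectation with the integral over $\mathbb{V}_{d,2}$ by Fubini (or Tonelli, since the integrand is nonnegative). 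This yields the bound
\begin{equation}
    \mathbb{E}\big[|SSW_p^p(\hat{\mu}_n,\hat{\nu}_n) - SSW_p^p(\mu,\nu)|\big] \le \int_{\mathbb{V}_{d,2}} \mathbb{E}\big[|W_p^p(P^U_\#\hat{\mu}_n, P^U_\#\hat{\nu}_n) - W_p^p(P^U_\#\mu, P^U_\#\nu)|\big]\ \mathrm{d}\sigma(U).
\end{equation}

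The key observation that makes the one-dimensional hypothesis applicable is that, for each fixed $U\in\mathbb{V}_{d,2}$, the projected empirical measure is itself an empirical measure of $n$ i.i.d.\ samples drawn from the projected law. Concretely, if $\hat{\mu}_n = \frac1n\sum_{i=1}^n \delta_{x_i}$ with $x_i$ i.i.d.\ from $\mu$, then $P^U_\#\hat{\mu}_n = \frac1n\sum_{i=1}^n \delta_{P^U(x_i)}$, and since $P^U$ is a fixed measurable map (well defined $\mu$-a.e.\ because $\mu\in\mathcal{P}_{p,ac}(S^{d-1})$, by Lemma \ref{lemma:proj_closeform}), the points $P^U(x_i)$ are i.i.d.\ samples from $P^U_\#\mu\in\mathcal{P}(S^1)$, and similarly for $\nu$. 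Hence $P^U_\#\hat{\mu}_n$ and $P^U_\#\hat{\nu}_n$ are exactly the empirical measures associated with $P^U_\#\mu$ and $P^U_\#\nu$ on the circle. Applying the hypothesis with the circle measures $P^U_\#\mu$ and $P^U_\#\nu$ gives, for every $U$,
\begin{equation}
    \mathbb{E}\big[|W_p^p(P^U_\#\hat{\mu}_n, P^U_\#\hat{\nu}_n) - W_p^p(P^U_\#\mu, P^U_\#\nu)|\big] \le \beta(p,n).
\end{equation}
Plugging this back and using that $\sigma$ is a probability measure on $\mathbb{V}_{d,2}$, so that $\int_{\mathbb{V}_{d,2}} \beta(p,n)\ \mathrm{d}\sigma(U) = \beta(p,n)$, closes the argument; crucially the bound is uniform in $U$ and carries no dependence on $d$.

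I expect the only genuinely delicate point to be the measurability and integrability needed to justify the Fubini exchange: one must argue that $(U,\text{samples})\mapsto W_p^p(P^U_\#\hat{\mu}_n,P^U_\#\hat{\nu}_n)$ is jointly measurable and integrable, which follows from continuity of $U\mapsto P^U$ on the set where the projection is defined and the fact that $W_p^p$ on the compact circle $S^1$ is uniformly bounded. The remaining steps—the triangle inequality, the identification of the projected empirical measure, and the normalization of $\sigma$—are routine once this measurability is in place.
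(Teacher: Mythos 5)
Your proposal follows exactly the same route as the paper's proof: triangle inequality to move the absolute value inside the integral over $\mathbb{V}_{d,2}$, Fubini--Tonelli to exchange expectation and integration, the pointwise application of the circle hypothesis to $P^U_\#\mu$ and $P^U_\#\nu$, and the normalization of $\sigma$. Your additional remarks on why $P^U_\#\hat{\mu}_n$ is the empirical measure of i.i.d.\ samples from $P^U_\#\mu$ and on the measurability needed for the exchange are correct and slightly more explicit than the paper's argument, but do not change the approach.
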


\begin{proof}
    By using the triangle inequality, Fubini-Tonelli, and the hypothesis on the sample complexity of $W_p^p$ on $S^1$, we obtain:
    \begin{equation}
        \begin{aligned}
            \mathbb{E}[|SSW_p^p(\hat{\mu}_n,\hat{\nu}_n) - SSW_p^p(\mu,\nu)|] &= \mathbb{E}\left[\left|\int_{\mathbb{V}_{d,2}} \big(W_p^p(P^U_\#\hat{\mu}_n, P^U_\#\hat{\nu}_n) - W_p^p(P^U_\#\mu,P^U_\#\nu)\big)\ \mathrm{d}\sigma(U)\right|\right] \\
            &\le \mathbb{E}\left[\int_{\mathbb{V}_{d,2}} \big|W_p^p(P^U_\#\hat{\mu}_n, P^U_\#\hat{\nu}_n) - W_p^p(P^U_\#\mu,P^U_\#\nu)\big|\ \mathrm{d}\sigma(U)\right] \\
            &= \int_{\mathbb{V}_{d,2}} \mathbb{E}\left[\big|W_p^p(P^U_\#\hat{\mu}_n, P^U_\#\hat{\nu}_n) - W_p^p(P^U_\#\mu,P^U_\#\nu)\big|\right]\ \mathrm{d}\sigma(U) \\
            &\le \int_{\mathbb{V}_{d,2}} \beta(p,n) \ \mathrm{d}\sigma(U) \\
            &= \beta(p,n).
        \end{aligned}
    \end{equation}
\end{proof}

\paragraph{Projection Complexity.}

We derive in the next proposition the projection complexity, which refers to the convergence rate of the Monte Carlo approximate \emph{w.r.t} of the number of projections $L$ towards the true integral. Note that we find the typical rate of Monte Carlo estimates, and that it has already been derive for sliced-based distances in \citep{nadjahi2020statistical}.

\begin{proposition}
    Let $p\ge 1$, $\mu,\nu\in \mathcal{P}_{p,ac}(S^{d-1})$. Then, the error made with the Monte Carlo estimate of $SSW_p$ can be bounded as
    \begin{equation}
        \begin{aligned}
            \mathbb{E}_U\left[|\widehat{SSW}_{p,L}^p(\mu,\nu)-SSW_p^p(\mu,\nu)|\right]^2 &\le \frac{1}{L} \int_{\mathbb{V}_{d,2}} \left(W_p^p(P^U_\#\mu,P^U_\#\nu)-SSW_p^p(\mu,\nu)\right)^2 \ \mathrm{d}\sigma(U) \\
            &= \frac{1}{L}\mathrm{Var}_U\left(W_p^p(P^U_\#\mu,P^U_\#\nu)\right),
        \end{aligned}
    \end{equation}
    where $\widehat{SSW}_{p,L}^p(\mu,\nu) = \frac{1}{L}\sum_{i=1}^L W_p^p(P^{U_i}_\#\mu, P^{U^i}_\#\nu)$ with $(U_i)_{i=1}^L \sim \sigma$ independent samples.
\end{proposition}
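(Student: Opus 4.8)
The plan is to recognize this as the standard variance bound for a Monte Carlo estimator, built on the fact that $\widehat{SSW}_{p,L}^p(\mu,\nu)$ is an unbiased estimator of $SSW_p^p(\mu,\nu)$. By the definition \eqref{eq:ssw} of $SSW_p^p$ and the i.i.d.\ sampling of the $(U_i)_{i=1}^L$, we have $\mathbb{E}_U[\widehat{SSW}_{p,L}^p(\mu,\nu)] = \mathbb{E}_U[W_p^p(P^U_\#\mu,P^U_\#\nu)] = SSW_p^p(\mu,\nu)$, so the quantity inside the expectation on the left is a centered fluctuation.

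First I would apply Jensen's inequality to pass from the first absolute moment to the second moment, namely
\begin{equation}
    \mathbb{E}_U\left[|\widehat{SSW}_{p,L}^p(\mu,\nu)-SSW_p^p(\mu,\nu)|\right]^2 \le \mathbb{E}_U\left[|\widehat{SSW}_{p,L}^p(\mu,\nu)-SSW_p^p(\mu,\nu)|^2\right] = \mathrm{Var}_U\big(\widehat{SSW}_{p,L}^p(\mu,\nu)\big),
\end{equation}
where the last equality uses unbiasedness. Next I would expand the variance of the empirical mean: writing $g(U) = W_p^p(P^U_\#\mu,P^U_\#\nu)$, the estimator is $\frac{1}{L}\sum_{i=1}^L g(U_i)$ with $(U_i)$ i.i.d., so $\mathrm{Var}_U\big(\frac{1}{L}\sum_i g(U_i)\big) = \frac{1}{L}\mathrm{Var}_U(g(U))$ by bilinearity and independence (cross terms vanish). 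Finally I would unfold the single-draw variance as $\mathrm{Var}_U(g(U)) = \int_{\mathbb{V}_{d,2}} \big(g(U)-\mathbb{E}_U[g]\big)^2\,\mathrm{d}\sigma(U) = \int_{\mathbb{V}_{d,2}} \big(W_p^p(P^U_\#\mu,P^U_\#\nu)-SSW_p^p(\mu,\nu)\big)^2\,\mathrm{d}\sigma(U)$, which is exactly the claimed right-hand side.

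The only point requiring genuine care, rather than routine algebra, is checking that the variance is finite so that all the manipulations are legitimate; this is the step I would flag as the potential obstacle. Here the compactness of $S^{d-1}$ saves us: the geodesic distance on the circle $S^1$ is bounded (by $\tfrac12$ in the $[0,1[$ parametrization, or by $\pi$ with the angular convention), hence $g(U) = W_p^p(P^U_\#\mu,P^U_\#\nu)$ is uniformly bounded over $U\in\mathbb{V}_{d,2}$, which gives $g\in L^2(\sigma)$ and finite variance immediately. I would state this boundedness explicitly to justify the application of Fubini/Jensen and the exchange of expectation and integration, after which the result follows directly.
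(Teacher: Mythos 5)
Your proof is correct and follows essentially the same route as the paper's: Jensen's inequality, unbiasedness of the Monte Carlo estimator, and the standard $\frac{1}{L}$ variance reduction for i.i.d.\ averages. Your additional remark on finiteness of the variance via boundedness of the circle's geodesic distance is a welcome (if minor) justification the paper leaves implicit.
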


\begin{proof}
    Let $(U_i)_{i=1}^L$ be iid samples of $\sigma$. Then, by first using Jensen inequality and then remembering that $\mathbb{E}_U[W_p^p(P^U_\#\mu,P^U_\#\nu)]=SSW_p^p(\mu,\nu)$, we have
    \begin{equation}
        \begin{aligned}
            \mathbb{E}_U\left[|\widehat{SSW}_{p,L}^p(\mu,\nu)-SSW_p^p(\mu,\nu)|\right]^2 &\le \mathbb{E}_U\left[\left|\widehat{SSW}_{p,L}^p(\mu,\nu)-SSW_p^p(\mu,\nu)\right|^2\right]\\
            &= \mathbb{E}_U\left[\left|\frac{1}{L} \sum_{i=1}^L \big(W_p^p(P^{U_i}_\#\mu,P^{U_i}_\#\nu) - SSW_p^p(\mu,\nu)\big)\right|^2\right] \\
            &= \frac{1}{L^2} \mathrm{Var}_U\left(\sum_{i=1}^L W_p^p(P^{U_i}_\#\mu,P^{U_i}_\#\nu)\right) \\
            &= \frac{1}{L} \mathrm{Var}_U\left(W_p^p(P^U_\#\mu,P^U_\#\nu)\right) \\
            &= \frac{1}{L} \int_{\mathbb{V}_{d,2}} \left(W_p^p(P^U_\#\mu,P^U_\#\nu)-SSW_p^p(\mu,\nu)\right)^2\ \mathrm{d}\sigma(U).
        \end{aligned}
    \end{equation}
\end{proof}

\section{Background on the Sphere}

\subsection{Uniqueness of the Projection} \label{appendix:uniqueness_projection}

Here, we discuss the uniqueness of the projection $P^U$ for almost every $x$. For that, we recall some results of \citep{bardelli2017probability}.

Let $M$ be a closed subset of a complete finite-dimensional Riemannian manifold $N$. Let $d$ be the Riemannian distance on $N$. Then, the distance from the set $M$ is defined as
\begin{equation}
    d_M(x) = \inf_{y\in M}\ d(x,y).
\end{equation}
The infimum is a minimum since $M$ is closed and $N$ locally compact, but the minimum might not be unique. When it is unique, let's denote the point which attains the minimum as $\pi(x)$, \emph{i.e.} $d(x,\pi(x))=d_M(x)$.
\begin{proposition}[Proposition 4.2 in \citep{bardelli2017probability}] \label{prop:uniqueness_proj}
    Let $M$ be a closed set in a complete $m$-dimensional Riemannian manifold $N$. Then, for almost every $x$, there exists a unique point $\pi(x)\in M$ that realizes the minimum of the distance from $x$.
\end{proposition}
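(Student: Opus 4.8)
The plan is to reduce the uniqueness statement to a regularity property of the distance function $d_M$, and then exploit the fact that Lipschitz functions are differentiable almost everywhere. First I would record that $d_M$ is $1$-Lipschitz on $N$: for any $x,x'\in N$ the triangle inequality gives $d_M(x)\le d(x,x')+d_M(x')$ and symmetrically, so $|d_M(x)-d_M(x')|\le d(x,x')$. Working in a countable atlas of smooth charts --- in each of which the Riemannian volume is mutually absolutely continuous with Lebesgue measure, since the metric coefficients are smooth and locally bounded above and below away from $0$ --- Rademacher's theorem shows that $d_M$ is differentiable at almost every point of $N$ with respect to the Riemannian measure. Denote by $\mathcal{N}$ the negligible set where $d_M$ fails to be differentiable.

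The heart of the argument is to show that at every $x\in N\setminus M$ at which $d_M$ is differentiable, the nearest point is unique. Fix such an $x$, set $r=d_M(x)>0$, and let $p\in M$ be any minimizer (one exists since $M$ is closed and, by Hopf--Rinow, the infimum over the compact set $M\cap \overline{B}(x,r)$ is attained). By completeness there is a unit-speed minimizing geodesic $\gamma:[0,r]\to N$ with $\gamma(0)=x$ and $\gamma(r)=p$. For $t\in[0,r]$ one has $d_M(\gamma(t))\le d(\gamma(t),p)=r-t$, while the $1$-Lipschitz bound gives $d_M(\gamma(t))\ge d_M(x)-t=r-t$; hence $d_M(\gamma(t))=r-t$. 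Differentiating at $t=0$ yields $\langle \nabla d_M(x),\dot\gamma(0)\rangle=-1$. Since $d_M$ is $1$-Lipschitz we have $\|\nabla d_M(x)\|\le 1$ and $\|\dot\gamma(0)\|=1$, so the equality case in Cauchy--Schwarz forces $\dot\gamma(0)=-\nabla d_M(x)$. The initial velocity of the minimizing geodesic is therefore completely determined by $x$, and consequently $p=\exp_x(r\,\dot\gamma(0))$ is uniquely determined; this proves uniqueness of the nearest point at $x$.

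To conclude I would collect the cases. If $x\in M$ the minimizer is $x$ itself at distance $0$ and is trivially unique. If $x\in N\setminus M$, the previous paragraph shows that non-uniqueness can occur only when $x\in\mathcal{N}$. Hence the set of points admitting a non-unique nearest point is contained in $\mathcal{N}$, which is negligible, giving uniqueness for almost every $x$. The main obstacle is the differentiability step: one must justify Rademacher's theorem in the manifold setting and the passage from Lebesgue-a.e. in charts to Riemannian-a.e., and run the gradient argument on the locally compact complete manifold so that minimizing geodesics realizing $d(x,p)$ genuinely exist. The remaining ingredients --- the Lipschitz estimate and the Cauchy--Schwarz equality case --- are routine.
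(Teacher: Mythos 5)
The paper itself gives no proof of this statement: it is quoted verbatim as Proposition 4.2 of the cited reference \citep{bardelli2017probability} and used as an imported result, so there is no in-paper argument to compare against. Judged on its own merits, your proof is correct and is essentially the classical argument for this fact: the $1$-Lipschitz bound on $d_M$, Rademacher's theorem transported through a countable atlas (with the Riemannian volume mutually absolutely continuous with Lebesgue measure in charts), the identity $d_M(\gamma(t))=r-t$ along a minimizing geodesic to any nearest point, and the equality case of Cauchy--Schwarz forcing $\dot\gamma(0)=-\nabla d_M(x)$, whence the minimizer $p=\exp_x\bigl(r\,\dot\gamma(0)\bigr)$ is determined by $x$ alone. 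All the ingredients you invoke are available in the stated setting: Hopf--Rinow supplies both the compactness needed for existence of a minimizer and the minimizing geodesic from $x$ to $p$, and the smoothness of $\gamma$ makes the one-sided differentiation at $t=0$ legitimate given differentiability of $d_M$ at $x$. One cosmetic slip: you attain the infimum over the set $M\cap \overline{B}(x,r)$ with $r=d_M(x)$, but you cannot know this set is nonempty before attainment is established; take instead $M\cap\overline{B}(x,r+1)$, which is nonempty, compact by Hopf--Rinow, and has the same infimum. With that one-line fix the argument is complete, and it matches the standard proof of this result (the set of points with non-unique metric projection onto a closed set is contained in the non-differentiability set of the Lipschitz function $d_M$, hence is null).
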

From this Proposition, they further deduce that the measure $\pi_\#\gamma$ is well defined on $M$ with $\gamma$ a locally absolutely continuous measure \emph{w.r.t.} the Lebesgue measure.

In our setting, for all $U\in\mathbb{V}_{d,2}$, we want to project a measure $\mu\in\mathcal{P}(S^{d-1})$ on the great circle $\mathrm{span}(UU^T)\cap S^{-1}$. Hence, we have $N=S^{d-1}$ which is a complete finite-dimensional Riemannian manifold and $M=\mathrm{span}(UU^T)\cap S^{d-1}$ a closed set in $N$. Therefore, we can apply Proposition \ref{prop:uniqueness_proj} and the push-forward measures are well defined for absolutely continuous measures.

\subsection{Optimization on the Sphere} \label{appendix:optim_sphere}

Let $F:S^{d-1}\to\mathbb{R}$ be some functional on the sphere. Then, we can perform a gradient descent on a Riemannian manifold by following the geodesics, which are the counterpart of straight lines in $\mathbb{R}^d$. Hence, the gradient descent algorithm \citep{absil2009optimization, bonnabel2013stochastic} reads as
\begin{equation}
    \forall k\ge 0,\ x_{k+1} = \exp_{x_k}\big(-\gamma \mathrm{grad}f(x)\big),
\end{equation}
where for all $x\in S^{d-1}$, $\exp_x:T_xS^{d-1}\to S^{d-1}$ is a map from the tangent space $T_xS^{d-1}=\{v\in \mathbb{R}^d,\ \langle x,v\rangle=0\}$ to $S^{d-1}$ such that for all $v\in T_xS^{d-1}$, $\exp_x(v)=\gamma_v(1)$ with $\gamma_v$ the unique geodesic starting from $x$ with speed $v$, $\emph{i.e.}$ $\gamma(0)=x$ and $\gamma'(0)=v$.

For $S^{d-1}$, the exponential map is known and is
\begin{equation}
    \forall x\in S^{d-1}, \forall v\in T_xS^{d-1},\ \exp_x(v)= \cos(\|v\|_2)x + \sin(\|v\|_2)\frac{v}{\|v\|_2}.
\end{equation}

Moreover, the Riemannian gradient on $S^{d-1}$ is known as \citep[Eq. 3.37]{absil2009optimization}
\begin{equation}
    \mathrm{grad}f(x) = \mathrm{Proj}_x(\nabla f(x)) = \nabla f(x) - \langle \nabla f(x),x\rangle x,
\end{equation}
$\mathrm{Proj}_x$ denoting the orthogonal projection on $T_xS^{d-1}$.

For more details, we refer to \citep{absil2009optimization,boumal2022intromanifolds}.

\subsection{Von Mises-Fisher Distribution} \label{appendix:vmf}

The von Mises-Fisher (vMF) distribution is a distribution on $S^{d-1}$ characterized by a concentration parameter $\kappa>0$ and a location parameter $\mu\in S^{d-1}$ through the density
\begin{equation}
    \forall \theta\in S^{d-1},\ f_{\mathrm{vMF}}(\theta;\mu,\kappa) = \frac{\kappa^{d/2-1}}{(2\pi)^{d/2} I_{d/2-1}(\kappa)} \exp(\kappa \mu^T\theta),
\end{equation}
where $I_\nu(\kappa)=\frac{1}{2\pi}\int_0^\pi \exp(\kappa\cos(\theta))\cos(\nu\theta)\mathrm{d}\theta$ is the modified Bessel function of the first kind.

Several algorithms allow to sample from it, see \emph{e.g.} \citep{wood1994simulation, ulrich1984computer} for algorithms using rejection sampling or \citep{kurz2015stochastic} without rejection sampling.

For $d=1$, the vMF coincides with the von Mises (vM) distribution, which has for density
\begin{equation}
    \forall \theta\in [-\pi,\pi[,\ f_{\mathrm{vM}}(\theta;\mu,\kappa) = \frac{1}{I_0(\kappa)} \exp(\kappa \cos(\theta-\mu)),
\end{equation}
with $\mu\in[0,2\pi[$ the mean direction and $\kappa>0$ its concentration parameter. We refer to \citep[Section 3.5 and Chapter 9]{mardia2000directional} for more details on these distributions.

In particular, for $\kappa=0$, the vMF (resp. vM) distribution coincides with the uniform distribution on the sphere (resp. the circle).

\citet[]{jung2021geodesic} studied the law of the projection of a vMF on a great circle. In particular, they showed that, while the vMF plays the role of the normal distributions for directional data, the projection actually does not follow a von Mises distribution. More precisely, they showed the following theorem:

\begin{theorem}[Theorem 3.1 in \citep{jung2021geodesic}]
    Let $d\ge 3$, $X\sim \mathrm{vMF}(\mu,\kappa)\in S^{d-1}$, $U\in\mathbb{V}_{d,2}$ and $T=P^U(X)$ the projection on the great circle generated by $U$. Then, the density function of $T$ is
    \begin{equation}
        \forall t\in [-\pi,\pi[,\ f(t) = \int_0^1 f_R(r) f_{\mathrm{vM}}(t;0,\kappa\cos(\delta)r)\ \mathrm{d}r,
    \end{equation}
    where $\delta$ is the deviation of the great circle (geodesic) from $\mu$ and the mixing density is
    \begin{equation}
        \forall r\in ]0,1[,\ f_R(r) = \frac{2}{I_\nu^*(\kappa)} I_0(\kappa\cos(\delta)r) r (1-r^2)^{\nu-1} I_{\nu-1}^*(\kappa\sin(\delta)\sqrt{1-r^2}),
    \end{equation}
    with $\nu=(d-2)/2$ and $I_\nu^*(z)=(\frac{z}{2})^{-\nu}I_\nu(z)$ for $z>0$, $I_\nu^*(0)=1/\Gamma(\nu+1)$.
\end{theorem}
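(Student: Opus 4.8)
I would compute the law of $T=P^U(X)$ directly, by pushing the vMF density through the closed-form projection of Lemma \ref{lemma:proj_closeform} and integrating out every direction that does not affect the circular coordinate. Write $E=\mathrm{span}(UU^T)$ for the $2$-plane generating the great circle and $E^\perp$ for its orthogonal complement, so $\mathbb{R}^d=E\oplus E^\perp$ with $\dim E^\perp=d-2$. By Lemma \ref{lemma:proj_closeform}, $P^U(x)=U^Tx/\|U^Tx\|_2$, so $T$ depends on $X$ only through $U^TX\in\mathbb{R}^2$. Since $\mathrm{vMF}(\mu,\kappa)$ is invariant under rotations fixing $\mu$, I would first use the freedom $U\mapsto UO$, $O\in O(2)$, to place the origin $t=0$ of the circular coordinate along the projection of $\mu$ onto $E$; this identifies $\delta$ as the geodesic deviation of $\mu$ from the great circle, i.e. $\cos\delta=\|UU^T\mu\|_2$ and $\sin\delta=\|(I-UU^T)\mu\|_2$.

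Next I would introduce adapted coordinates $(R,t,\omega)$ by writing $X=R\,Uz(t)+\sqrt{1-R^2}\,\omega$, where $R=\|U^TX\|_2\in[0,1]$, $z(t)=(\cos t,\sin t)\in S^1$ is the angular coordinate of $T=P^U(X)$, and $\omega\in S^{d-3}\subset E^\perp$. The key analytic input is the disintegration of the surface measure on $S^{d-1}$ under the splitting $\mathbb{R}^d=\mathbb{R}^2\oplus\mathbb{R}^{d-2}$, which gives
\begin{equation}
    \mathrm{d}x = R(1-R^2)^{(d-4)/2}\,\mathrm{d}R\,\mathrm{d}t\,\mathrm{d}\omega = R(1-R^2)^{\nu-1}\,\mathrm{d}R\,\mathrm{d}t\,\mathrm{d}\omega,
\end{equation}
with $\nu=(d-2)/2$, matching the radial weight in the statement. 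In these coordinates the exponent splits as
\begin{equation}
    \kappa\,\mu^TX = \kappa\cos\delta\,R\cos t + \kappa\sin\delta\sqrt{1-R^2}\,\langle\hat\mu_\perp,\omega\rangle,
\end{equation}
where $\hat\mu_\perp$ is the unit vector along $(I-UU^T)\mu$.

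I would then marginalize. Integrating over $\omega\in S^{d-3}$ only affects the second term, and the Bessel identity $\int_{S^{n-1}}e^{a\langle e,\omega\rangle}\,\mathrm{d}\omega = 2\pi^{n/2}I_{n/2-1}^*(a)$ (which is just the vMF normalization on $S^{n-1}$, applied with $n=d-2$) produces the factor $I_{\nu-1}^*(\kappa\sin\delta\sqrt{1-R^2})$. This leaves a joint density in $(R,t)$ proportional to $\exp(\kappa\cos\delta\,R\cos t)\,R(1-R^2)^{\nu-1}I_{\nu-1}^*(\kappa\sin\delta\sqrt{1-R^2})$. Writing $\exp(\kappa\cos\delta\,R\cos t)=2\pi I_0(\kappa\cos\delta\,R)\,f_{\mathrm{vM}}(t;0,\kappa\cos\delta\,R)$ immediately exhibits the conditional law of $t$ given $R$ as a von Mises distribution with concentration $\kappa\cos\delta\,R$ and the marginal of $R$ as $f_R$; marginalizing over $R$ yields the claimed mixture $f(t)=\int_0^1 f_R(r)\,f_{\mathrm{vM}}(t;0,\kappa\cos\delta\,r)\,\mathrm{d}r$.

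The main obstacle is the constant bookkeeping: one must certify that the normalizing constant of $f_R$ is exactly $2/I_\nu^*(\kappa)$, which is equivalent to the Bessel integral identity $\int_0^1 I_0(\kappa\cos\delta\,r)\,r(1-r^2)^{\nu-1}I_{\nu-1}^*(\kappa\sin\delta\sqrt{1-r^2})\,\mathrm{d}r=\tfrac12 I_\nu^*(\kappa)$. Rather than evaluating this integral head-on, I would obtain it for free by demanding that the full vMF density integrate to one in the $(R,t,\omega)$ coordinates: carrying the prefactor $\kappa^{d/2-1}/((2\pi)^{d/2}I_{d/2-1}(\kappa))$ through the $\omega$-integration (giving $2\pi^{(d-2)/2}$) and the $t$-integration (giving $2\pi I_0(\kappa\cos\delta\,R)$), then using $d/2-1=\nu$ and $I_\nu^*(\kappa)=(\kappa/2)^{-\nu}I_\nu(\kappa)$, pins the constant to $2/I_\nu^*(\kappa)$ and completes the proof.
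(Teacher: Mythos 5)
Your proposal is correct, but there is nothing in the paper to compare it against: the paper does not prove this statement, it imports it verbatim as Theorem 3.1 of \citet{jung2021geodesic} (it only works out the corollary $\kappa=0 \Rightarrow T\sim\mathrm{Unif}(S^1)$). So your derivation stands as a self-contained proof of a result the paper merely cites, and every step checks out. The bispherical disintegration $\mathrm{d}x=r(1-r^2)^{(d-4)/2}\,\mathrm{d}r\,\mathrm{d}t\,\mathrm{d}\omega$ is the standard $x=(\cos\theta\,u,\sin\theta\,v)$ formula with $r=\cos\theta$, $p=2$, $q=d-2$; the splitting of the exponent follows from $\cos\delta=\lVert U^T\mu\rVert_2$, $\sin\delta=\lVert(I-UU^T)\mu\rVert_2$, which matches the paper's own identification (in the proof of Lemma \ref{lemma:proj_closeform}) of the geodesic distance to the great circle as $\arccos\lVert U^T\mu\rVert_2$; and your $\omega$-integral identity is exactly the vMF normalization on $S^{d-3}$, valid down to $d=3$ where $S^0=\{\pm1\}$ gives $2\cosh(a)=2\pi^{1/2}I^*_{-1/2}(a)$. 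The constant bookkeeping you flagged as the main obstacle indeed closes without any extra Bessel integral: carrying the ambient normalization through,
\begin{equation*}
    \frac{\kappa^{d/2-1}}{(2\pi)^{d/2}I_{d/2-1}(\kappa)}\cdot 2\pi^{(d-2)/2}\cdot 2\pi I_0(\kappa\cos\delta\,r)
    = \frac{2\,(\kappa/2)^{\nu}}{I_{\nu}(\kappa)}\, I_0(\kappa\cos\delta\,r)
    = \frac{2}{I^*_{\nu}(\kappa)}\, I_0(\kappa\cos\delta\,r),
\end{equation*}
so the prefactor $2/I^*_\nu(\kappa)$ and the normalization of $f_R$ come out simultaneously, exactly as you argue. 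Three cosmetic remarks: you use the standard von Mises normalization $f_{\mathrm{vM}}(t;0,k)=e^{k\cos t}/(2\pi I_0(k))$, which is the right one here (the paper's Appendix B.3 drops the $1/(2\pi)$, a typo you should not inherit); the step $U\mapsto UO$ is a pure reparametrization of the angular coordinate and needs no invariance property of the vMF, so the appeal to rotations fixing $\mu$ is harmless but unnecessary; and the degenerate sets $\{U^TX=0\}$ and the case $\delta=\pi/2$ (where the origin convention breaks down but $\cos\delta=0$ makes the conditional law uniform anyway) are null or limiting and deserve the one-line dismissal you implicitly give them.
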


Hence, as noticed by \citet[]{jung2021geodesic}, in the particular case $\kappa=0$, \emph{i.e.} $X\sim\mathrm{Unif}(S^{d-1})$, then 
\begin{equation}
    f(t) = \int_0^1 f_R(r) f_\mathrm{vM}(t;0,0)\ \mathrm{d}r = f_{\mathrm{vM}(t;0,0)} \int_0^1 f_R(r)\mathrm{d}r = f_{\mathrm{vM}}(t;0,0),
\end{equation}
and hence $T\sim\mathrm{Unif}(S^1)$.

\subsection{Normalizing Flows on the Sphere} \label{appendix:nf_sphere}

Normalizing flows \citep{papamakarios2021normalizing} are invertible transformations. There has been a recent interest in defining such transformations on manifolds, and in particular on the sphere \citep{rezende2020normalizing, cohen2021riemannian, rezende2021implicit}.

\paragraph{Exponential map normalizing flows.}

Here, we implemented the Exponential map normalizing flows introduced in \citep{rezende2020normalizing}. The transformation $T$ is 
\begin{equation}
    \forall x\in S^{d-1},\ z=T(x)=\exp_x\big(\mathrm{Proj}_x(\nabla\phi(x))\big),
\end{equation}
where $\phi(x)=\sum_{i=1}^K \frac{\alpha_i}{\beta_i}e^{\beta_i(x^T\mu_i-1)}$, $\alpha_i\ge 0$, $\sum_i \alpha_i\le 1$, $\mu_i\in S^{d-1}$ and $\beta_i>0$ for all $i$. $(\alpha_i)_i$, $(\beta_i)_i$ and $(\mu_i)_i$ are the learnable parameters.

The density of $z$ can be obtained as
\begin{equation}
    p_Z(z) = p_X(x)\det\big(E(x)^TJ_T(x)^TJ_T(x)E(x)\big)^{-\frac12},
\end{equation}
where $J_f$ is the Jacobian in the embedded space and $E(x)$ it the matrix whose columns form an orthonormal basis of $T_xS^{d-1}$.

The common way of training normalizing flows is to use either the reverse or forward KL divergence. Here, we use them with a different loss, namely SSW.

\paragraph{Stereographic projection.}

The stereographic projection $\rho:S^{d-1}\to\mathbb{R}^{d-1}$ maps the sphere $S^{d-1}$ to the Euclidean space. A strategy first introduced in \citep{gemici2016normalizing} is to use it before applying a normalizing flows in the Euclidean space in order to map some prior, and which allows to perform density estimation.

More precisely, the stereographic projection is defined as 
\begin{equation}
    \forall x\in S^{d-1},\ \rho(x) = \frac{x_{2:d}}{1+x_1},
\end{equation}
and its inverse is
\begin{equation}
    \forall u \in \mathbb{R}^{d-1},\ \rho^{-1}(u) = \begin{pmatrix} 2 \frac{u}{\|u\|_2^2 + 1} \\ 1-\frac{2}{\|u\|_2^2 + 1} \end{pmatrix}.
\end{equation}
\citet{gemici2016normalizing} derived the change of variable formula for this transformation, which comes from the theory of probability between manifolds. If we have a transformation $T=f\circ \rho$, where $f$ is a normalizing flows on $\mathbb{R}^{d-1}$, \emph{e.g.} a RealNVP \citep{dinh2016density}, then the log density of the target distribution can be obtained as 
\begin{equation}
    \begin{aligned}
        \log p(x) &= \log p_Z(z) + \log |\det J_f(z)| - \frac12 \log |\det J_{\rho^{-1}}^T J_{\rho^{-1}}(\rho(x))| \\
        &= \log p_Z(z) + \log |\det J_f(z)| - d \log\left(\frac{2}{\|\rho(x)\|_2^2 + 1}\right),
    \end{aligned}
\end{equation}
where we used the formula of \citep{gemici2016normalizing} for the change of variable formula of $\rho$, and where $p_Z$ is the density of some prior on $\mathbb{R}^{d-1}$, typically of a standard Gaussian.
We refer to \citep{gemici2016normalizing, mathieu2020riemannian} for more details about these transformations.

\section{Additional Experiments} \label{appendix:experiments}

\subsection{Evolution of SSW between von Mises-Fisher distributions} \label{appendix:evolutions_ssw}

The KL divergence between the von Mises-Fisher distribution and the uniform distribution has been derived analytically in \citep{davidson2018hyperspherical, xu2018spherical} as
\begin{equation}
    \begin{aligned}
        \mathrm{KL}\big(\mathrm{vMF}(\mu,\kappa)||\mathrm{vMF}(\cdot,0)\big) &= \kappa \frac{I_{d/2}(\kappa)}{I_{d/2-1}(\kappa)}+\left(\frac{d}{2}-1\right)\log \kappa - \frac{d}{2} \log(2\pi) - \log I_{d/2-1}(\kappa) \\ &+ \frac{d}{2}\log \pi + \log 2 -\log \Gamma\left(\frac{d}{2}\right).
    \end{aligned}
\end{equation}

We plot on Figure \ref{fig:KLvsSW_vMF} the evolution of KL and SSW \emph{w.r.t.} $\kappa$ for different dimensions. We observe a different trend. SSW seems to get lower with the dimension contrary to KL.

\begin{figure}[H]
    \centering
    \hspace*{\fill}
    \subfloat[$\mathrm{KL}\big(\mathrm{vMF}(\mu,\kappa)||\mathrm{vMF}(\cdot,0)\big)$]{\label{a_kl}\includegraphics[width=0.4\linewidth]{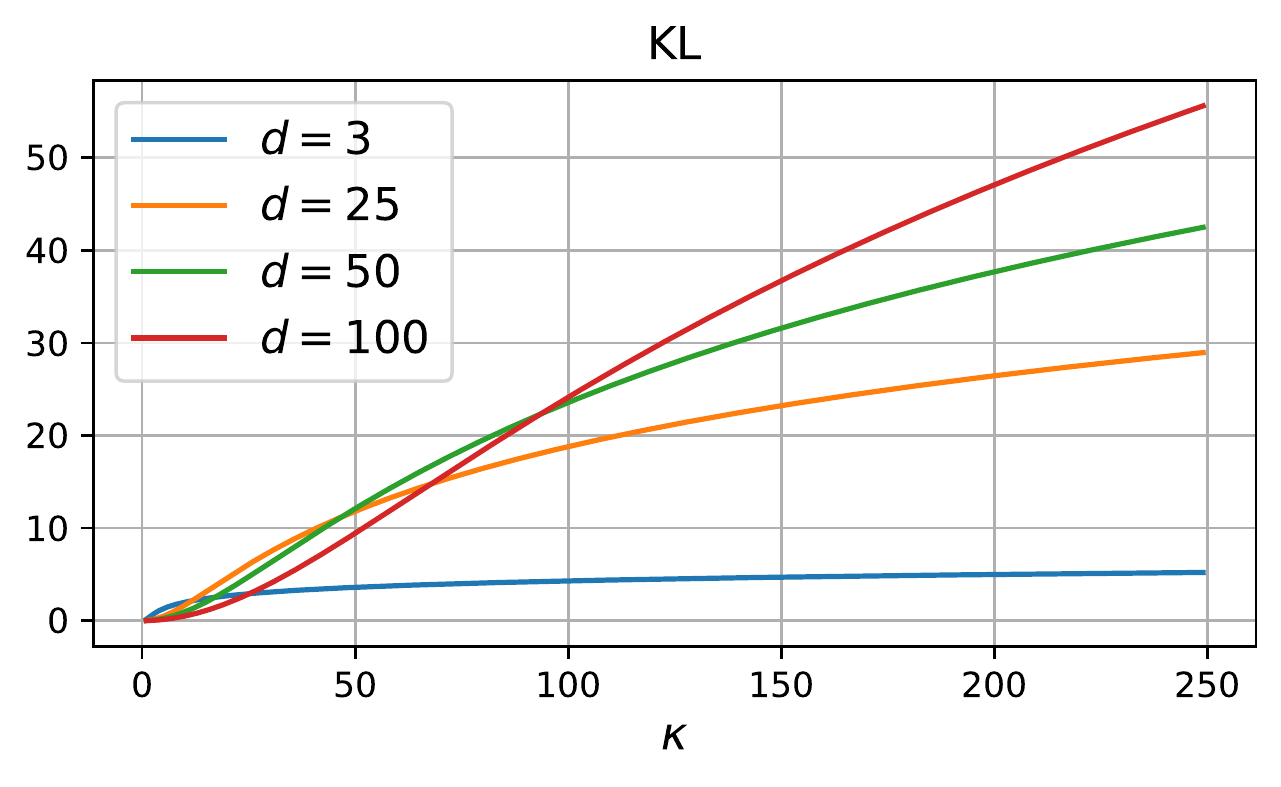}} \hfill
    \subfloat[$SW_2^2\big(\mathrm{vMF}(\mu,\kappa)||\mathrm{vMF}(\cdot,0)\big)$]{\label{b_sw}\includegraphics[width=0.4\linewidth]{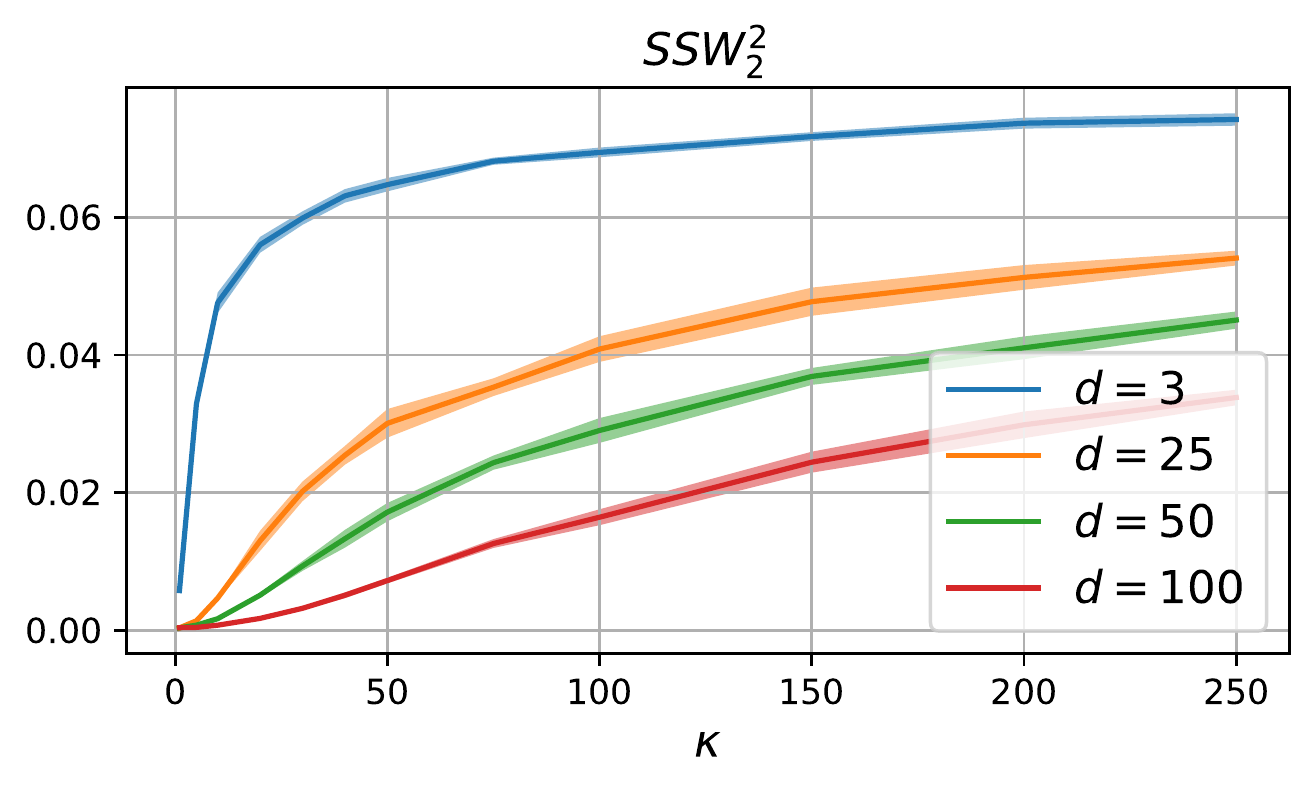}}
    \hspace*{\fill}
    \caption{Evolution \emph{w.r.t} $\kappa$ between $\mathrm{vMK}(\mu,\kappa)$ and $\mathrm{vMF}(\cdot,0)$. For SW, we used 100 projections (for memory reasons for $d=100$), and computed it for $\kappa\in\{1,5,10,20,30,40,50,75,100,150,200,250\}$, 10 times by dimension and $\kappa$, and with $500$ samples of both distributions.}
    \label{fig:KLvsSW_vMF}
\end{figure}

As a sanity check, we compare on Figure \ref{fig:evolution_sw_circle} the evolution of SSW between vMF distributions where we fix $\mathrm{vMF}(\mu_0,10)$ and we rotate the first vMF along a great circle. More precisely, we plot $SW_2^2\big(\mathrm{vMF}((1,0,0,...),10),\mathrm{vMF}((\cos(\theta),\sin(\theta),0,...),10)\big)$ for $\theta\in\{\frac{k\pi}{6}\}_{k\in\{0,\dots,12\}}$. As expected, we obtain a bell shape which is maximal when the second vMF distribution has for location parameter $-\mu_0$. We observe a similar behavior between $SSW_2$, $SSW_1$ and $SW_2$ with different scales.

\begin{figure}[H]
    \centering
    \hspace*{\fill}
    \subfloat{\label{a_ssw2_angle}\includegraphics[width=0.3\linewidth]{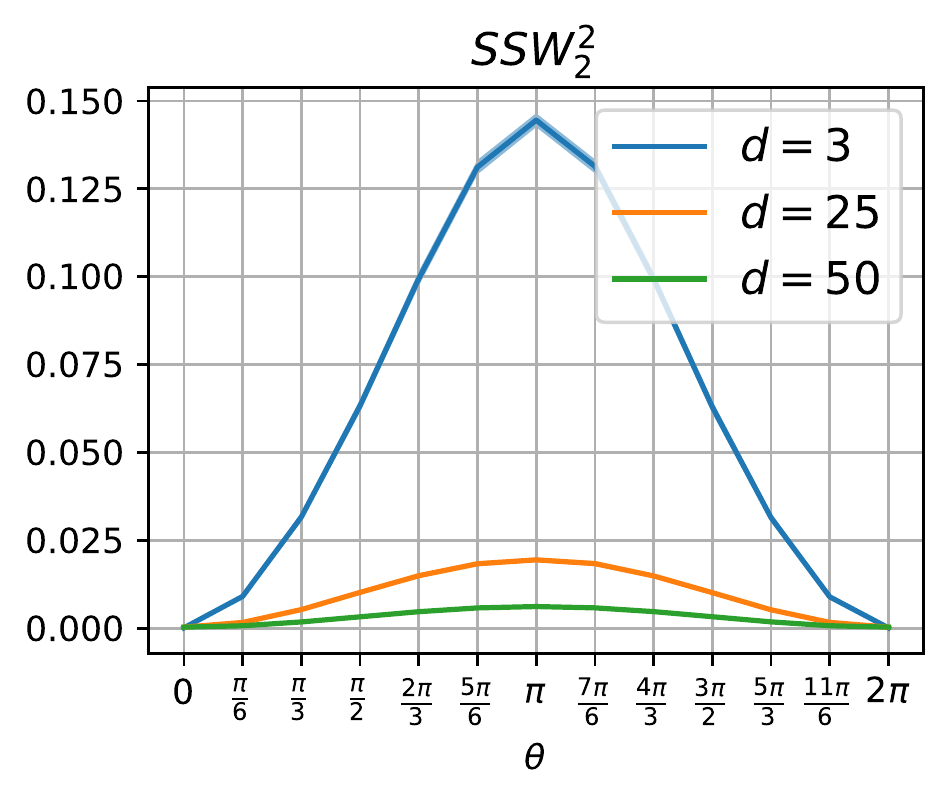}} \hfill
    \subfloat{\label{b_ssw1_angle}\includegraphics[width=0.3\linewidth]{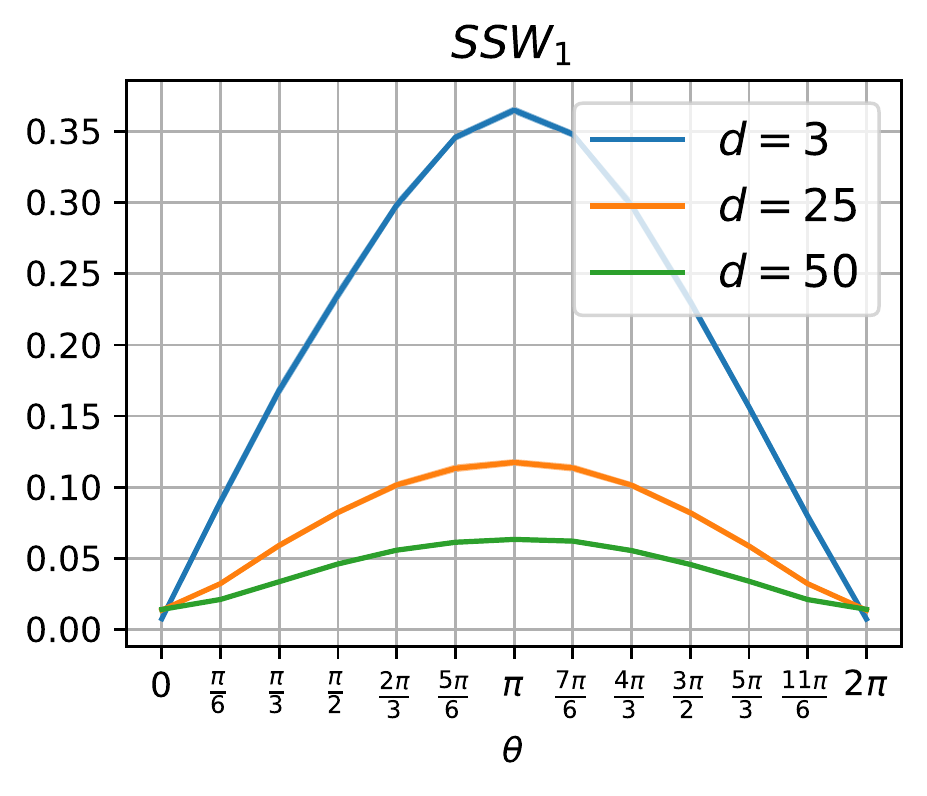}} \hfill
    \subfloat{\label{c_sw2_angle}\includegraphics[width=0.3\linewidth]{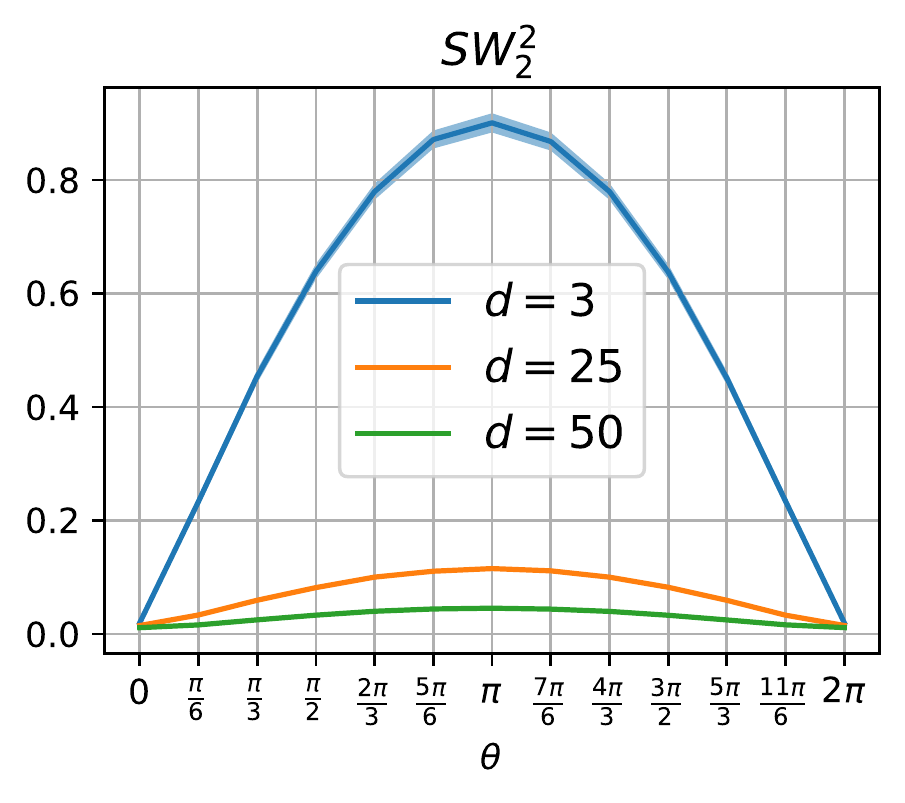}}
    \hspace*{\fill}
    \caption{Evolution of $SW$ between vMF samples in $S^{d-1}$ (mean over 100 batch).}
    \label{fig:evolution_sw_circle}
\end{figure}

On Figure \ref{fig:evolution_sw_projs}, we plot the evolution of SSW \emph{w.r.t.} the number of projections for different dimensions. We observe that for around 100 projections, the variance seems to be low enough.

\begin{figure}[H]
    \centering
    \hspace*{\fill}
    \subfloat{\label{a_projs}\includegraphics[width=0.45\linewidth]{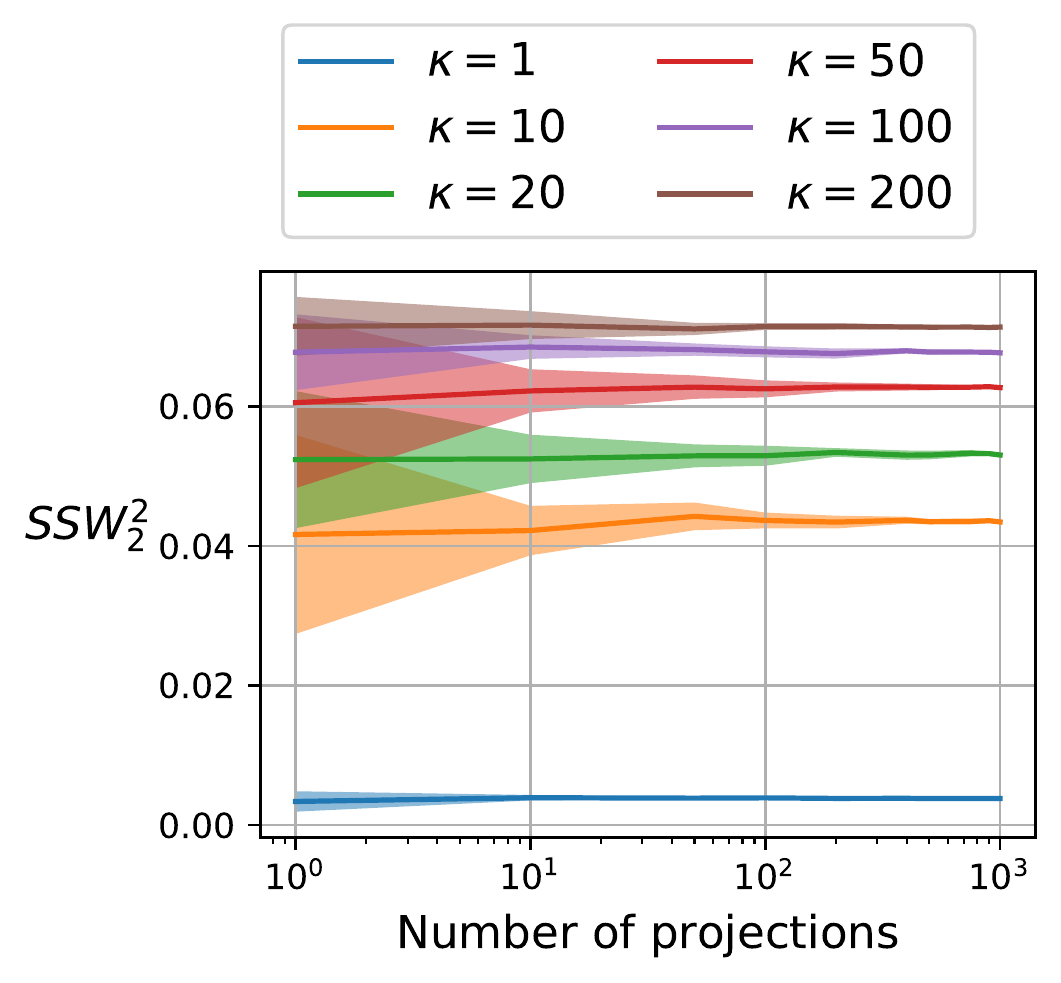}} \hfill
    \subfloat{\label{b_proj_dim}\includegraphics[width=0.45\linewidth]{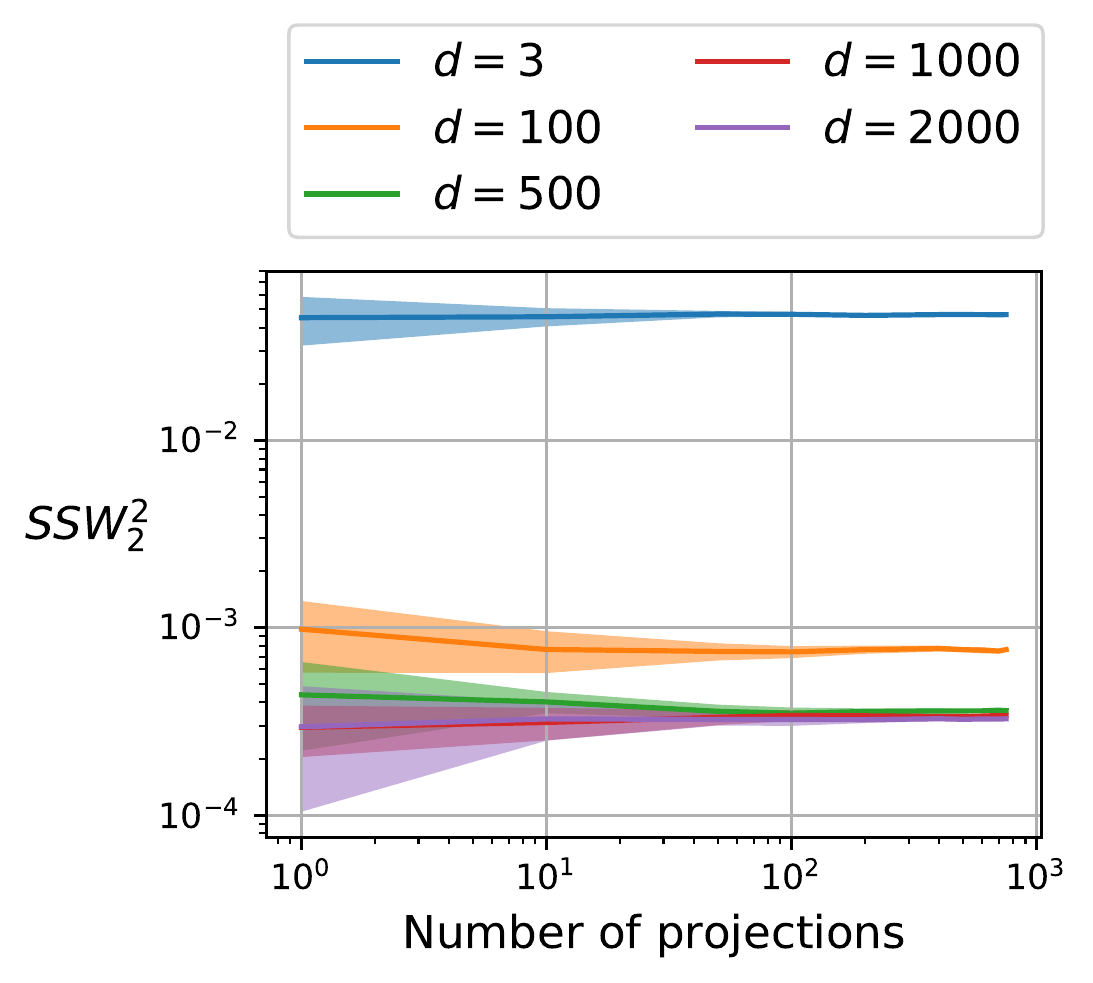}}
    \hspace*{\fill}
    \caption{Influence of the number of projections. We compute $SW_2^2\big(\mathrm{vMF}(\mu,\kappa)||\mathrm{vMF}(\cdot,0)\big)$ 20 times, for $n=500$ samples in dimension $d=3$.}
    \label{fig:evolution_sw_projs}
\end{figure}

\citet{nadjahi2020statistical} proved that, contrary to the Wasserstein distance, the classical sliced-Wasserstein distance has a sample complexity independent of the dimension $d$. As shown in Propositon \ref{prop:samplecomplexity}, we have similar results for SSW. We show it empirically on Figure \ref{fig:sample_compleixyt} by plotting SSW and the Wasserstein distance (with geodesic distance) between samples of the uniform distribution on the sphere \emph{w.r.t.} the number of samples. We observe indeed that the convergence rate of SSW is independent of the dimension.

\begin{figure}[H]
    \centering
    \includegraphics[width=0.5\columnwidth]{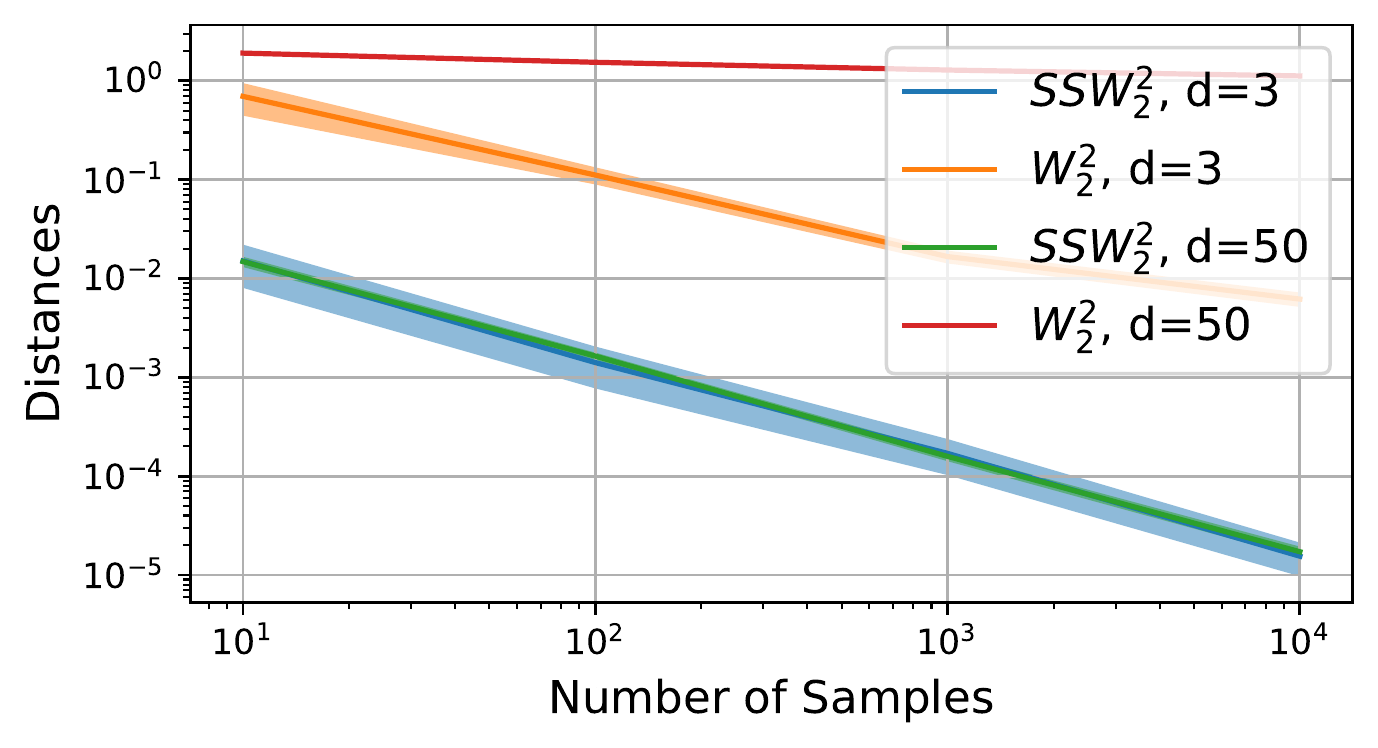}
    \caption{Spherical Sliced-Wasserstein and Wasserstein distance (with geodesic distance) between samples of the uniform distribution on the sphere. Results are averaged over 20 runs and the shaded are correponds to the standard deviation.}
    \label{fig:sample_compleixyt}
\end{figure}

\subsection{Runtime Comparisons} \label{appendix:runtime}


We study here the evolution of the runtime \emph{w.r.t.} different parameters. On Figure \ref{fig:runtime_proj_samples}, we plot for several dimensions the runtime to compute $SSW_2$ \emph{w.r.t.} the number of projections and the number of samples. We observe the linearity \emph{w.r.t.} the number of projections and the quasi-linearity \emph{w.r.t.} the number of samples.

\begin{figure}[H]
    \centering
    \hspace*{\fill}
    \subfloat{\label{a_runtime_projs}\includegraphics[width=0.45\linewidth]{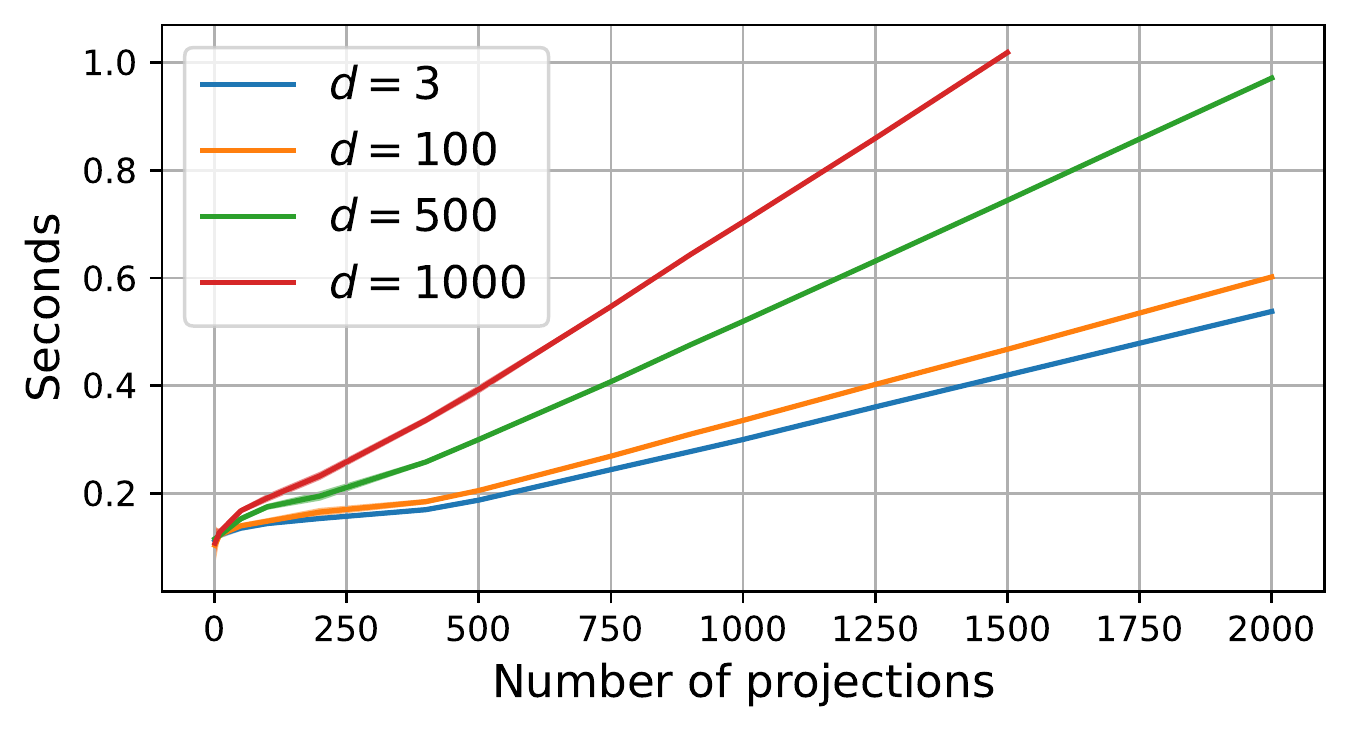}} \hfill
    \subfloat{\label{b_runtime_samples}\includegraphics[width=0.45\linewidth]{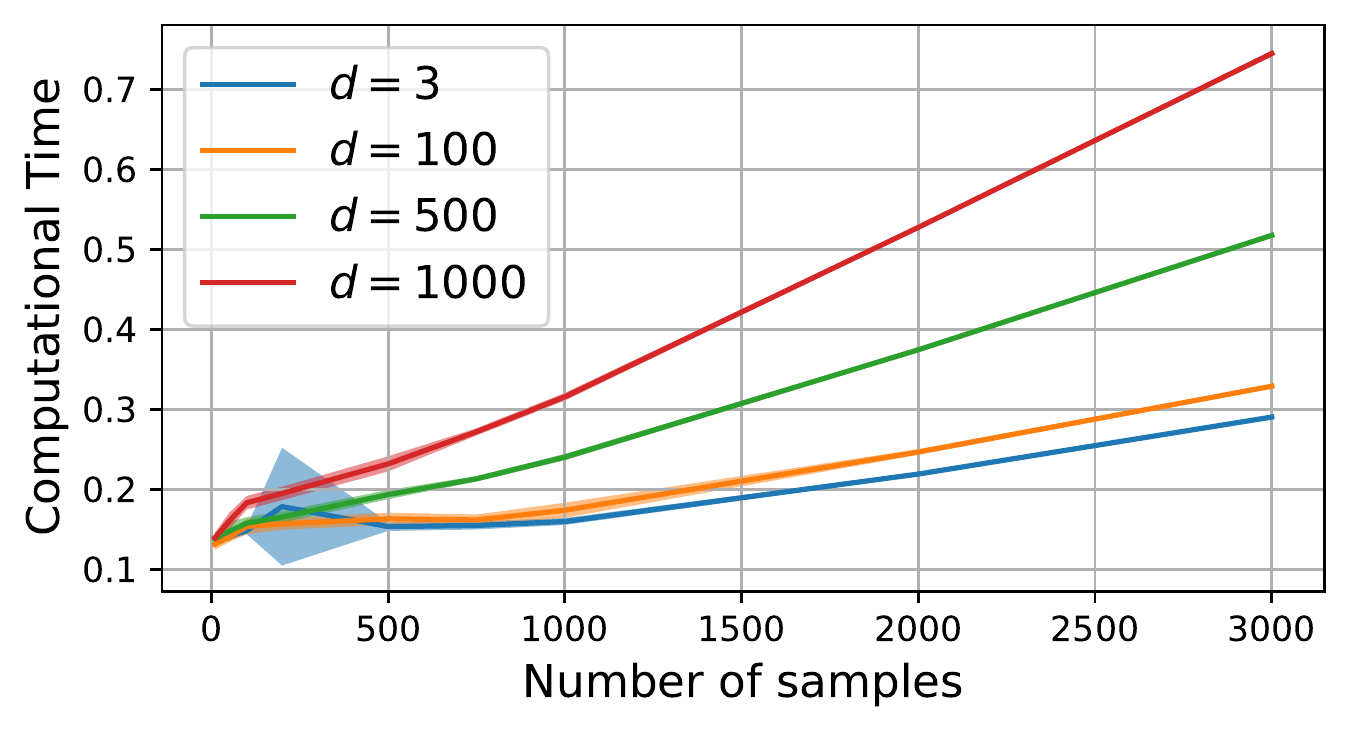}}
    \hspace*{\fill}
    \caption{Computation time \emph{w.r.t.} the number of projections or samples, taken for $\kappa=10$ and $n=500$ samples for the left figure, and $\kappa=10$ and $200$ projections for the right figure, and for 20 times.}
    \label{fig:runtime_proj_samples}
\end{figure}


\subsection{Gradient Flows} \label{appendix:gradient_flows}

\paragraph{Mixture of vMF distributions.} For the experiment in Section \ref{xp:gradient_flows}, we use as target distribution of mixture of 6 vMF distributions from which we have access to samples. We refer to Appendix \ref{appendix:vmf} for background on vMF distributions.

The 6 vMF distributions have weights $1/6$, concentration parameter $\kappa=10$ and location parameters $\mu_1=(1,0,0)$, $\mu_2=(0,1,0)$, $\mu_3=(0,0,1)$, $\mu_4=(-1,0,0)$, $\mu_5=(0,-1,0)$ and $\mu_6=(0,0,-1)$.

We use two different approximation of the distribution.  First, we approximate it using the empirical distribution, \emph{i.e.} $\hat{\mu} = \frac{1}{n}\sum_{i=1}^n \delta_{x_i}$ and we optimize over the particles $(x_i)_{i=1}^n$. To optimize over particles, we can either use a projected gradient descent:
\begin{equation}
    \begin{cases}
        x^{(k+1)} = x^{(k)}-\gamma \nabla_{x^{(k)}} SSW_2^2(\hat{\mu}_k, \nu) \\
        x^{(k+1)} = \frac{x^{(k+1)}}{\|x^{(k+1)}\|_2},
    \end{cases}
\end{equation}
or a Riemannian gradient descent on the sphere \citep{absil2009optimization} (see Appendix \ref{appendix:optim_sphere} for more details). Note that the projected gradient descent is a Riemannian gradient descent with retraction \citep{boumal2022intromanifolds}.

We can also use neural networks such as a multilayer perceptron (MLP). We used a MLP composed of 5 layers of 100 units with leaky relu activation functions. The output of the MLP is normalized on the sphere using a $\ell^2$ normalization. We perform a gradient descent using Adam \citep{kingma2014adam} as the optimizer with a learning rate of $10^{-4}$ for 2000 epochs. We approximate SSW with $L=1000$ projections and a batch size of 500. The base distribution is choose as the uniform distribution on the sphere.

We report on Figure \ref{fig_appendix:gradient_flow_mixture_vmf} a comparison of the 2 approximations where the density is estimated with a Gaussian kernel density estimator.

\begin{figure}[H]
    \centering
    \hspace*{\fill}
    \subfloat[Target: Mixture of vMF]{\label{target_mixture_vmf_appendix}\includegraphics[width=0.3\columnwidth]{Figures/GradientFlows/mixture_vMF_target.png}} \hfill
    \subfloat[KDE estimate of the MLP]{\label{mlp_mixture_vmf}\includegraphics[width=0.3\columnwidth]{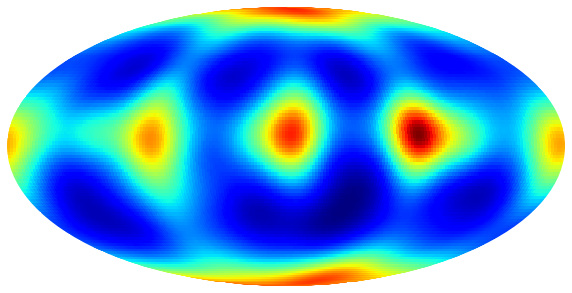}} \hfill 
    \subfloat[KDE estimate of 500 particles]{\label{particles_mixture_vmf_appendix}\includegraphics[width=0.3\columnwidth]{Figures/GradientFlows/particles_mixture_vmf.png}}
    \hspace*{\fill}
    \caption{Minimization of SSW with respect to a mixture of vMF.}
    \label{fig_appendix:gradient_flow_mixture_vmf}
\end{figure}

\paragraph{vMF distribution.} A a simpler experiment, we choose a simple vMF distribution with $\kappa=10$. We report on Figure \ref{fig:gradient_flows_particles_vMF} the evolution of the density approximated using a KDE, and on Figure \ref{fig:gradient_flows_particles_vMF_3d} the evolution of particles.

\begin{figure}[H]
    \centering
    \hspace*{\fill}
    \subfloat[Target distribution]{\label{target_vMF}\includegraphics[width=0.2\columnwidth]{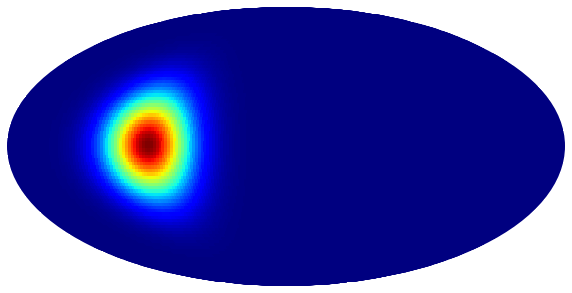}} \hfill 
    \subfloat[Initial samples]{\label{gradient_flows_particles_k0}\includegraphics[width=0.2\columnwidth]{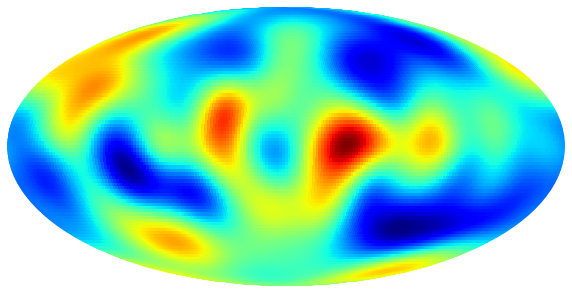}} \hfill
    \subfloat[Approximated distribution at the epoch 100]{\label{gradient_flows_particles_k100}\includegraphics[width=0.2\columnwidth]{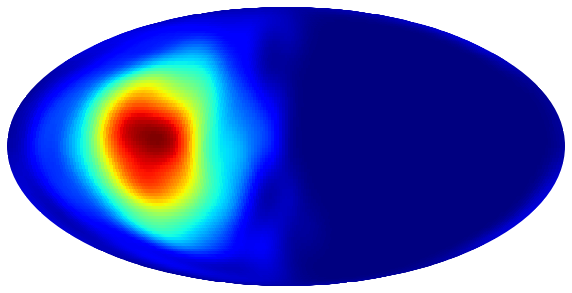}} \hfill 
    \subfloat[Approximated distribution at the epoch 1000]{\label{gradient_flows_particles_k1000}\includegraphics[width=0.2\columnwidth]{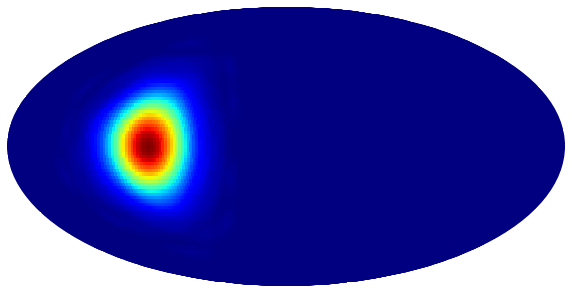}}
    \hspace*{\fill}
    \caption{Gradient Flows on SW with a vMF target and Mollweide projections. The distributions are approximated using KDE.}
    \label{fig:gradient_flows_particles_vMF}
\end{figure}

\begin{figure}[H]
    \centering
    \hspace*{\fill}
    \subfloat[Initial samples]{\label{gradient_flows_particles_k0_}\includegraphics[width=0.3\columnwidth]{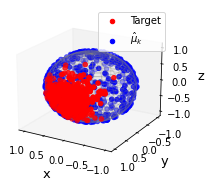}} \hfill
    \subfloat[Approximated distribution at the epoch 100]{\label{gradient_flows_particles_k100_}\includegraphics[width=0.3\columnwidth]{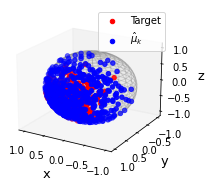}} \hfill 
    \subfloat[Approximated distribution at the epoch 1000]{\label{gradient_flows_particles_k1000_}\includegraphics[width=0.3\columnwidth]{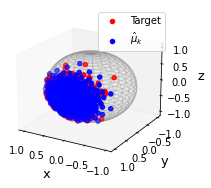}}
    \hspace*{\fill}
    \caption{Gradient Flows on SW with a vMF target and Mollweide projections. }
    \label{fig:gradient_flows_particles_vMF_3d}
\end{figure}


\subsection{Earth data estimation}

Let $T$ be a normalizing flow (NF). For a density estimation task, we have access to a distribution $\mu$ through samples $(x_i)_{i=1}^n$, \emph{i.e.} through the empirical measure $\hat{\mu}_n = \frac{1}{n}\sum_{i=1}^n \delta_{x_i}$ . And the goal is to find an invertible transformation $T$ such that $T_\#\mu = p_Z$, where $p_Z$ is a prior distribution for which we know the density. In that case, indeed, the density of $\mu$, denoted as $f_\mu$ can be obtained as
\begin{equation}
    \forall x,\ f_\mu(x) = p_Z(T(x)) |\det J_T(x)|.
\end{equation}
For the invertible transform, we propose to use normalizing flows on the sphere (see Appendix \ref{appendix:nf_sphere}). We use two different normalizing flows, exponential map normalizing flows \citep{rezende2020normalizing} and Real NVP \citep{dinh2016density} + stereographic projection \citep{gemici2016normalizing} which we call ``Stereo'' in Table \ref{tab:nll}.

To fit $T_\#\mu = p_Z$, we use either SSW, SW on the sphere, or SW on $\mathbb{R}^{d-1}$ for the stereographic projection based NF. For the exponential map normalizing flow, we compose 48 blocks, each one with 100 components. These transformations have 24000 parameters. For Real NVP, we compose 10 blocks of Real NVPs, with shifting and scaling as multilayer perceptron, composed of 10 layers, 25 hidden units and with leaky relu of parameters 0.2 for the activation function. The number of parameters of these networks are 27520.

For the training process, we perform 20000 epochs with full batch size. We use Adam as optimizer with a learning rate of $10^{-1}$. For the sterographic NF, we use a learning rate of $10^{-3 }$.

We report in Table \ref{tab:earh_datasets} details of the datasets.

\begin{table}[H]
    \centering
    \caption{Details of Earth datasets.}
    \small
    \begin{tabular}{cccc}
        & Earthquake & Flood & Fire \\ \toprule
        Train set size & 4284 & 3412 & 8966 \\
        Test set size & 1836 & 1463 & 3843 \\
        Data size & 6120 & 4875 & 12809 \\
        \bottomrule
    \end{tabular}
    \label{tab:earh_datasets}
\end{table}



\subsection{Sliced-Wasserstein Variational Inference} \label{appendix:swvi}

\subsubsection{Variational Inference}

In variational inference (VI) \citep{jordan1999introduction, blei2017variational}, we have some observed data $(x_i)_{i=1}^n$ and some latent data $(z_i)_{i=1}^n$. The goal of variational inference is to approximate the posterior distribution $p(\cdot|x)$ by some distribution $q\in\mathcal{Q}$ where $\mathcal{Q}$ is a family of probabilities. The usual way of doing that is to minimize the Kullback-Leibler divergence among this family, \emph{i.e.}
\begin{equation}
    \min_{q\in\mathcal{Q}}\ \mathrm{KL}(q||p(\cdot|x)) = \mathbb{E}_q[\log\left(\frac{q(Z)}{p(Z|x)}\right)].
\end{equation}
But the KL divergence suffers from some drawbacks, as it is only a divergence (\emph{i.e.} it does not satisfy the triangular inequality, and it is non symmetric), but it also suffers from under estimating the target distribution (or over estimating it for the reverse KL).

\citet{yi2021sliced} propose to use an optimal transport distance instead, namely the SW distance which gives the sliced-Wasserstein variational inference method. Basically, given some unnormalized probability $p(\cdot|x)$ that we want to approximate with some variational distribution $q_\phi$, we can first apply a MCMC algorithm and then learn $q_\phi$ using a gradient descent on SW with the target being the empirical distributions of the samples given by the MCMC. But running long MCMC chain is time consuming and it might be difficult to diagnose burn-in period. Therefore, they propose to only run at each iteration some number of steps $t$ of MCMC chain, and then learn by gradient descent the variational distribution. Therefore, the variational distribution is guided at each step by the MCMC samples toward the stationary distribution which is the target. This is called an amortized sampler (see Problem 1 in \citep{wang2016learning}). We sum up the procedure in Algorithm \ref{alg:swvi}.

\begin{algorithm}[tb]
   \caption{SWVI \citep{yi2021sliced}}
   \label{alg:swvi}
    \begin{algorithmic}
       \STATE {\bfseries Input:} $V$ a potential, $K$ the number of iterations of SWVI, $N$ the batch size, $\ell$ the number of MCMC steps
       \STATE {\bfseries Initialization:} Choose $q_{\theta}$ a sampler
       \FOR{$k=1$ {\bfseries to} $K$}
       \STATE Sample $(z_i^0)_{i=1}^N \sim q_{\theta}$
       \STATE Run $\ell$ MCMC steps starting from $(z_i^0)_{i=1}^N$ to get $(z_j^\ell)_{j=1}^N$
       \STATE // Denote $\hat{\mu}_0 = \frac{1}{N}\sum_{j=1}^N \delta_{z_j^0}$ and $\hat{\mu}_\ell = \frac{1}{N}\sum_{j=1}^N \delta_{z_j^\ell}$
       \STATE Compute $J = SW_2^2(\hat{\mu}_0, \hat{\mu}_\ell)$
       \STATE Backpropagate through $J$ \emph{w.r.t.} $\theta$
       \STATE Perform a gradient step 
       \ENDFOR
    \end{algorithmic}
\end{algorithm}

We propose here to substitute $SW$ by $SSW$ in order to perform SSWVI on the sphere. To do that, we first need a MCMC method on the sphere.

\subsubsection{MCMC on the Sphere}

Several MCMC methods on the sphere have been proposed. For example, Hamiltonian Monte-Carlo (HMC) methods were proposed in \citep{byrne2013geodesic, lan2014spherical, liu2016stochastic}, and Riemannian Langevin algorithms were proposed in \citep{li2020riemannian, wang2020fast}.

In our experiments, we use the Geodesic Langevin algorithm (GLA) introduced by \citet{wang2020fast}. This algorithm is a natural generalization of the Unadjusted Langevin Algorithm (ULA) and it consists at simply following the geodesics of the regular ULA step, \emph{i.e.}
\begin{equation}
    \forall k>0,\ x_{k+1}=\exp_{x_k}\big(\mathrm{Proj}_{x_k}(-\gamma \nabla V(x_k)+\sqrt{2\gamma} Z)\big),\ Z\sim \mathcal{N}(0,I),
\end{equation}
where for the sphere, 
\begin{equation}
    \forall x\in S^{d-1}, \forall v \in T_{x}S^{d-1},\ \exp_x(v)=x\cos(\|v\|) + \frac{v}{\|v\|}\sin(\|v\|),
\end{equation}
$\mathrm{Proj}_x$ is the projection on the tangent space $T_xS^{d-1} = \{v\in\mathbb{R}^d,\ \langle x,v\rangle = 0\}$ (which is the orthogonal space) and is defined as
\begin{equation}
    \mathrm{Proj}_x(v)= v - \langle x,v\rangle x.
\end{equation}
For more details, we refer to \citep{absil2009optimization}. 

We use GLA here for simplicity and as a proof of concept. But note that GLA, as ULA, is biased and therefore the distribution learned will not be the exact true stationary distribution. However, a Metropolis-Hastings step at each iteration could be used to enforce the reversibility \emph{w.r.t.} the target distribution or we could use other MCMC with more appealing convergence properties (see \emph{e.g.} \cite{liu2016stochastic}).

\subsubsection{Applications}

\paragraph{Target: Power spherical distribution.}

First, as a simple example on $S^2$, we use the power spherical distribution introduced by \citet{de2020power}. This distribution has the advantage over the vMF distribution to allow for the direct use of the reparameterization trick since it does not require rejection sampling. The pdf is obtained as,
\begin{equation}
    \forall x\in S^{d-1},\ p_X(x;\mu,\kappa) \propto (1+\mu^T x)^\kappa
\end{equation}
with $\mu\in S^{d-1}$ and $\kappa>0$. We can sample from drawing first $Z\sim \mathrm{Beta}(\frac{d-1}{2}+\kappa,\frac{d-1}{2})$, $v\sim \mathrm{Unif}(S^{d-2})$, then constructing $T=2Z-1$ and $Y=[T, v^T\sqrt{1-T^2}]^T$. Finally, apply a Householder reflection about $\mu$ to $Y$. All the operations are well differentiable and allow to apply the reparametrization trick. For the algorithm, see Algorithm 1 in \citep{de2020power}. Hence, in this case, if we denote $g_\theta$ the map which takes samples from a uniform distribution on $S^{d-2}$ and from a Beta distribution as input and outputs samples of power spherical distribution with parameters $\theta=(\kappa,\mu)$, we can use it as the sampler. We test the algorithm with a target being a power spherical distribution of parameter $\mu=(0,1,0)$ and $\kappa=10$, starting from $\mu=(1,1,1)$ and $\kappa=0.1$. Performing 2000 optimization steps with a gradient descent (Riemannian gradient descent on $\mu$ to stay on the sphere), and 20 steps of the GLA algorithm, we are getting close enough to the true distribution as we can see on Figure \ref{fig:swvi_power_spherical}.

For the hyperparameters, we used a step size of $10^{-3}$ for GLA, 1000 projections to approximate SSW, a Riemannian gradient descent on the sphere \citep{absil2009optimization} to learn the location parameter $\mu$ with a learning rate of 2, and a learning of 200 for $\kappa$. We performed $K=2000$ steps and used $N=500$ particles.

\begin{figure}[H]
    \centering
    \hspace*{\fill}
    \subfloat[Target distribution]{\label{target_ps}\includegraphics[width=0.2\columnwidth]{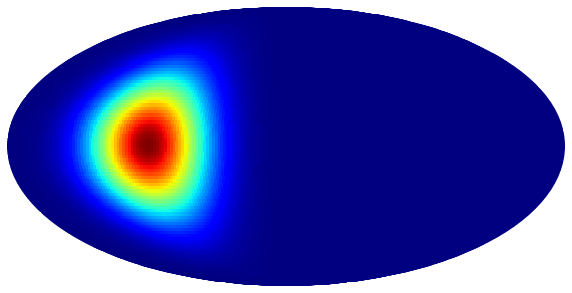}} \hfill 
    \subfloat[Initial sampler]{\label{ps_k0}\includegraphics[width=0.2\columnwidth]{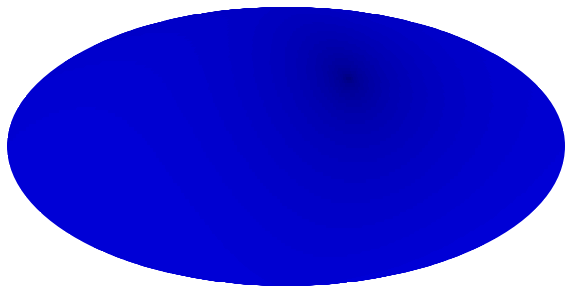}} \hfill
    \subfloat[Approximated distribution at the epoch 1000]{\label{ps_k5}\includegraphics[width=0.2\columnwidth]{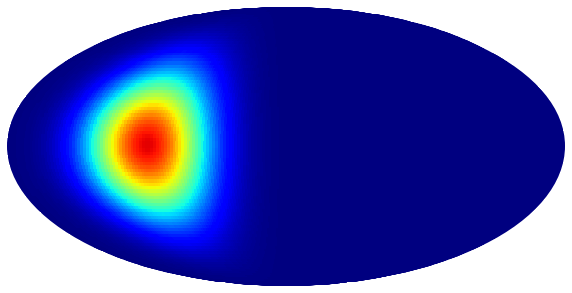}} \hfill 
    \subfloat[Approximated distribution at the epoch 2000]{\label{ps_k10}\includegraphics[width=0.2\columnwidth]{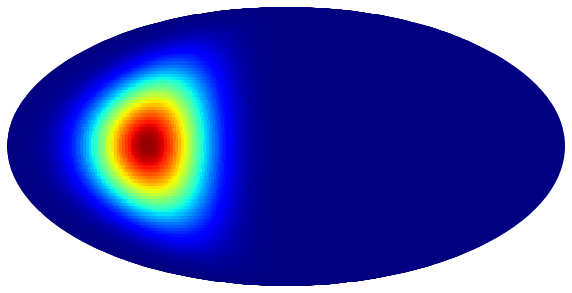}}
    \hspace*{\fill}
    \caption{SWVI on Power Spherical Distributions with Mollweide projections.}
    \label{fig:swvi_power_spherical}
\end{figure}

\paragraph{Target: mixture of vMFs.}


In Section \ref{xp:gradient_flows}, we perform amortized variational inference with a mixture of vMF distributions as target. For this, we train exponential map normalizing flows (see \citep{rezende2020normalizing} and Appendix \ref{appendix:nf_sphere}). Moreover, we use the same target as \citet{rezende2020normalizing}, \emph{i.e.} the target $\nu$ has a density $p(x)\propto \sum_{k=1}^4 e^{10 x^T T_{s\to e}(\mu_k)}$ with $\mu_1=(0.7,1.5)$, $\mu_2=(-1,1)$, $\mu_3=(0.6,0.5)$ and $\mu_4=(-0.7,4)$. These are spherical coordinates which are be converted to euclidean using $T_{s\to e}(\theta,\phi)=(\sin\phi\cos\theta, \sin\phi\sin\theta,\cos\phi)$.

The exponential map normalizing flow is composed of $N=6$ blocks with $K=5$ components. We run the algorithm for 10000 iterations, with at each iteration 20 steps of GLA with $\gamma=10^{-1}$ as learning rate, and one step of backpropagation through SSW using the Adam \citep{kingma2014adam} optimizer with a learning rate of $10^{-3}$.

We report on Figure \ref{fig:swvi_mixture} the Mollweide projection of the learned density. Since we learn to samples from a noise distribution, here the uniform distribution on the sphere, we do not have directly access to the density and we report a kernel density estimate with a Gaussian kernel using the implementation of Scipy \citep{2020SciPy-NMeth}.

\begin{figure}[H]
    \centering
    \hspace*{\fill}
    \subfloat[Mixture of vMF Target distribution]{\label{target_mixture_vmf_swvi}\includegraphics[width=0.4\linewidth]{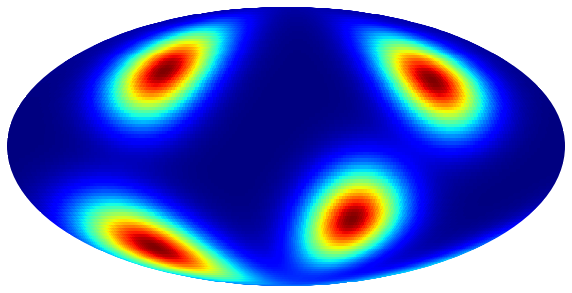}} \hfill 
    \subfloat[Density learned with NF]{\label{kde_mixture_vmf_nf}\includegraphics[width=0.4\linewidth]{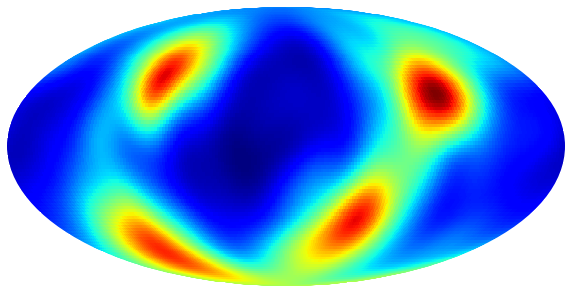}}
    \caption{SSWVI on mixture of vMF}
    \hspace*{\fill}
    \vspace{-10pt}
    \label{fig:swvi_mixture}
\end{figure}


\begin{figure}[H]
    \centering
    \hspace*{\fill}
    \subfloat{\label{swvi_ess}\includegraphics[width=0.8\linewidth]{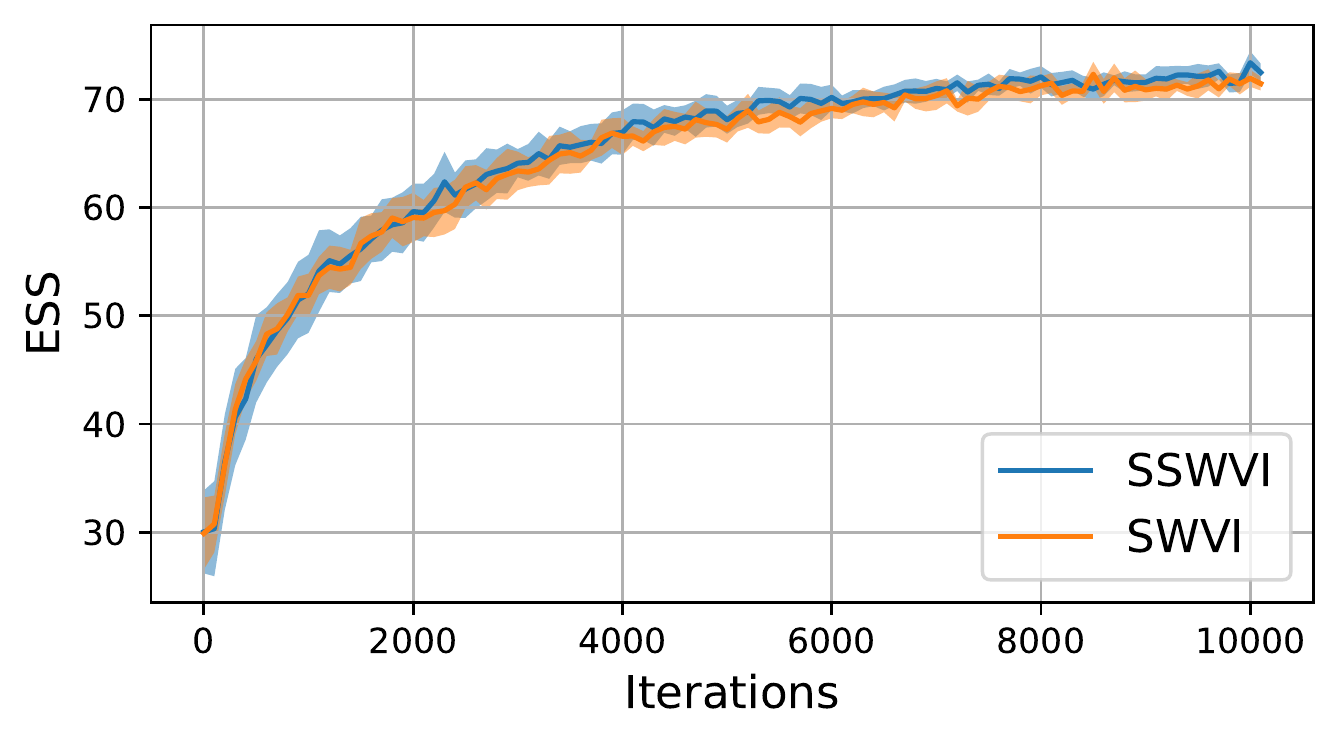}}
    \hspace*{\fill}
    \caption{Comparison of the ESS between SWVI et SSWVI with the mixture target (mean over 10 runs).}
    \label{fig:comparison_ess}
\end{figure}

We also report in Figure \ref{fig:comparison_ess} the effective sample size (ESS) \citep{doucet2001sequential, liu1995blind} over the iterations. The ESS is estimated by \citep{rezende2020normalizing}
\begin{equation}
    \mathrm{ESS} = \frac{\mathrm{Var}_{Unif}(e^{-\beta u(X)})}{\mathrm{Var}_q\left(\frac{e^{-\beta u(X)}}{q_\eta(X)}\right)} \approx \frac{\left(\sum_{s=1}^S w_s\right)^2}{\sum_{s=1}^S w_s^2},
\end{equation}
where $w_s=e^{-\beta u(x_s)/q_\eta(x_s)}$. The ESS is reported as a percentage of the sample size. Higher ESS indicates that the flow matches the target better \citep{rezende2020normalizing}.



\subsection{Sliced-Wasserstein Autoencoder} \label{appendix:swae}

We recall that in the WAE framework, we want to minimize
\begin{equation}
    \mathcal{L}(f,g) = \int c\big(x,g(f(x))\big)\mathrm{d}\mu(x) + \lambda D(f_\#\mu,p_Z),
\end{equation}
where $f$ is an encoder, $g$ a decoder, $p_Z$ a prior distribution, $c$ some cost function and $D$ is a divergence in the latent space. Several $D$ were proposed. For example, \citet[]{tolstikhin2017wasserstein} proposed to use the MMD, \citet[]{kolouri2018sliced} used the SW distance, \citet[]{patrini2020sinkhorn} used the Sinkhorn divergence, \citet[]{kolouri2019generalized} used the generalized SW distance. Here, we use $D=SSW_2^2$.

\paragraph{Architecture and procedure.} We first detail the hyperparameters and architectures of neural networks for MNIST and Fashion MNIST. For the encoder $f$ and the decoder $g$, we use the same architecture as \citet{kolouri2018sliced}.

For both the encoder and the decoder architecture, we use fully convolutional architectures with 3x3 convolutional filters. More precisely, the architecture of the encoder is
\begin{equation*}
    \begin{aligned}
        x\in \mathbb{R}^{28\times 28} &\to \mathrm{Conv2d}_{16} \to \mathrm{LeakyReLU}_{0.2} \\
        &\to \mathrm{Conv2d}_{16} \to \mathrm{LeakyReLU}_{0.2} \to \mathrm{AvgPool}_2 \\
        &\to \mathrm{Conv2d}_{32} \to \mathrm{LeakyReLU}_{0.2} \\
        &\to \mathrm{Conv2d}_{32} \to \mathrm{LeakyReLU}_{0.2} \to \mathrm{AvgPool}_2 \\
        &\to \mathrm{Conv2d}_{64} \to \mathrm{LeakyReLU}_{0.2} \\
        &\to \mathrm{Conv2d}_{64} \to \mathrm{LeakyReLU}_{0.2} \to \mathrm{AvgPool}_2 \\
        &\to \mathrm{Flatten} \to \mathrm{FC}_{128} \to \mathrm{ReLU} \\
        &\to \mathrm{FC}_{d_Z} \to \ell^2 \ \text{normalization}
    \end{aligned}
\end{equation*}
where $d_Z$ is the dimension of the latent space (either $11$ for $S^{10}$ or $3$ for $S^2$). 

The architecture of the decoder is
\begin{equation*}
    \begin{aligned}
        z\in\mathbb{R}^{d_Z} &\to \mathrm{FC}_{128} \to \mathrm{FC}_{1024} \to \mathrm{ReLU} \\
        &\to \mathrm{Reshape(64 x 4 x 4)} \to \mathrm{Upsample}_2 \to \mathrm{Conv}_{64} \to \mathrm{LeakyReLU}_{0.2} \\
        &\to \mathrm{Conv}_{64} \to \mathrm{LeakyReLU}_{0.2} \\
        &\to \mathrm{Upsample}_2 \to \mathrm{Conv}_{64} \to \mathrm{LeakyReLU}_{0.2} \\
        &\to \mathrm{Conv}_{32} \to \mathrm{LeakyReLU}_{0.2} \\
        &\to \mathrm{Upsample}_2 \to \mathrm{Conv}_{32} \to \mathrm{LeakyReLU}_{0.2} \\
        &\to \mathrm{Conv}_1 \to \mathrm{Sigmoid}
    \end{aligned}
\end{equation*}

To compare the different autoencoders, we used as the reconstruction loss the binary cross entropy, $\lambda=10$, Adam \citep[]{kingma2014adam} as optimizer with a learning rate of $10^{-3}$ and Pytorch's default momentum parameters for 800 epochs with batch of size $n=500$. Moreover, when using SW type of distance, we approximated it with $L=1000$ projections. 

For the experiment on CIFAR10, we use the same architecture as \citet{tolstikhin2017wasserstein}. More precisely, the architecture of the encoder is
\begin{equation*}
    \begin{aligned}
        x\in\mathbb{R}^{3\times 32 \times 32} & \to \mathrm{Conv2d}_{128} \to \mathrm{BatchNorm} \to \mathrm{ReLU} \\
        &\to \mathrm{Conv2d}_{256} \to \mathrm{BatchNorm} \to \mathrm{ReLU} \\
        &\to \mathrm{Conv2d}_{512} \to \mathrm{BatchNorm} \to \mathrm{ReLU} \\
        &\to \mathrm{Conv2d}_{1024} \to \mathrm{BatchNorm} \to \mathrm{ReLU} \\
        &\to \mathrm{FC}_{d_z} \to \ell^2\ \text{normalization}
    \end{aligned}
\end{equation*}
where $d_z = 65$.

The architecture of the decoder is 
\begin{equation*}
    \begin{aligned}
        z\in\mathbb{R}^{d_z} &\to \mathrm{FC}_{4096} \to \mathrm{Reshape}(1024\times 2 \times 2) \\
        &\to  \mathrm{Conv2dT}_{512} \to \mathrm{BatchNorm} \to \mathrm{ReLU} \\
        &\to \mathrm{Conv2dT}_{256} \to \mathrm{BatchNorm} \to \mathrm{ReLU} \\
        &\to \mathrm{Conv2dT}_{128} \to \mathrm{BatchNorm} \to \mathrm{ReLU} \\
        &\to \mathrm{Conv2dT}_{3} \to \mathrm{Sigmoid}
    \end{aligned}
\end{equation*}

We use here a batch size of $n=128$, $\lambda=0.1$, the binary cross entropy as reconstruction loss and Adam as optimizer with a learning rate of $10^{-3}$.

We report in Table \ref{tab:fid} the FID obtained using 10000 samples and we report the mean over 5 trainings.

For SSW, we used the formulation using the uniform distribution \eqref{eq:w2_unif_circle}. To compute SW, we used the POT library \citep[]{flamary2021pot}. To compute the Sinkhorn divergence, we used the GeomLoss package \citep{feydy2019interpolating}.

\paragraph{Additional experiments.}

We report on Figure \ref{fig:samples_mnist} samples obtained with SSW for a uniform prior on $S^{10}$.

\begin{figure}[H]
    \centering
    \hspace*{\fill}
    \subfloat[SSWAE]{\label{mnist_sswae}\includegraphics[width=0.3\linewidth]{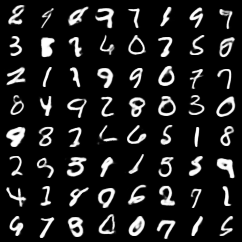}} \hfill
    \subfloat[SWAE]{\label{mnist_swae}\includegraphics[width=0.3\linewidth]{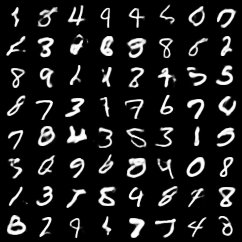}} \hfill
    \subfloat[SAE]{\label{mnist_sae}\includegraphics[width=0.3\linewidth]{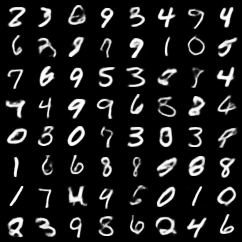}}
    \hspace*{\fill}
    \caption{Samples generated with Sliced-Wasserstein Autoencoders with a uniform prior on $S^{10}$.}
    \label{fig:samples_mnist}
\end{figure}

On Figure \ref{fig:w2_reconstruction_mnist}, we add the evolution over epochs of the Wasserstein distance between generated images and samples from the test set.

\begin{figure}[H]
    \centering
    \hspace*{\fill}
    \subfloat[With uniform prior on $S^2$.]{\label{mnist_sswae}\includegraphics[width=0.4\linewidth]{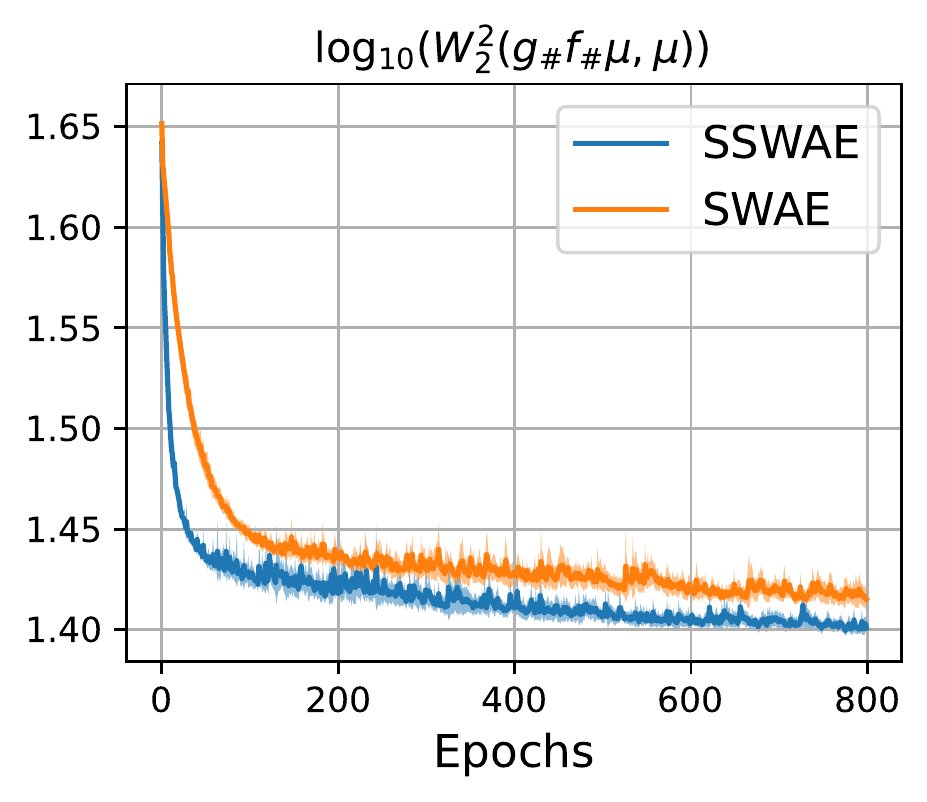}} \hfill
    \subfloat[With prior on $S^{10}$.]{\label{mnist_swae}\includegraphics[width=0.4\linewidth]{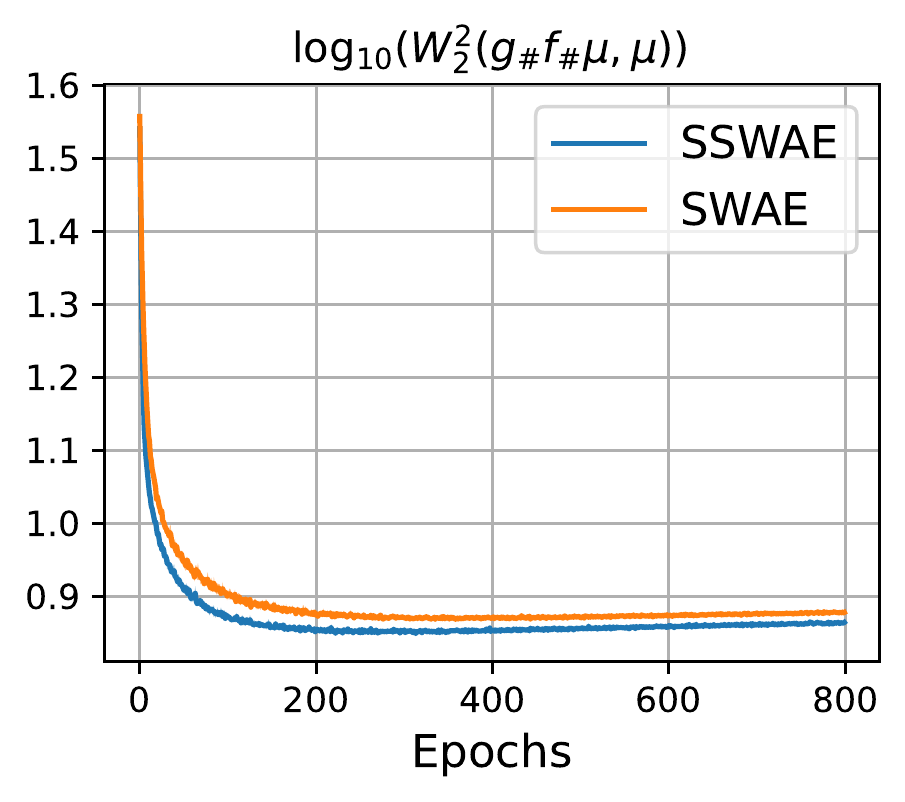}}
    \hspace*{\fill}
    \caption{Comparison of the evolution of the Wasserstein distance over epochs between SWAE and SSWAE on MNIST (averaged over 5 trainings).}
    \label{fig:w2_reconstruction_mnist}
\end{figure}



\subsection{Self-supervised learning}
\label{appendix:ssl}


\begin{wraptable}{r}{0.55\linewidth}
	\centering
	\vspace{-25pt}
	\small
	\caption{Linear evaluation on CIFAR10. The features are taken either on the encoder output or directly on the sphere $S^2$.}
	\begin{tabular}{ccc}
		\toprule
		Method                          & Encoder output & $S^2$ \\
		\toprule
        Supervised                      & 82.26 & 81.43 \\
        \midrule
        \citet[]{chen2020a}             & 66.55 & 59.09 \\
        \citet[]{wang2020understanding} & 60.53 & 55.86 \\
        \midrule
        SW-SSL,  $\lambda = 1, L = 10$  & 62.65 & 57.77 \\
        SW-SSL,  $\lambda = 1, L = 3$   & 62.46 & 57.64  \\
        \midrule
        SSW-SSL, $\lambda = 20, L = 10$ & 64.89 & 58.91 \\
        SSW-SSL, $\lambda = 20, L = 3$  & 63.75 & 59.75 \\
		\bottomrule
	\end{tabular}
	\vspace{-5pt}
	\label{tab:ssl_cifar}
\end{wraptable}

We conduct experiments using SSW to prevent collapsing representations in contrastive self-supervised learning (SSL) models. Such contrastive losses on the hypersphere have exhibited great representative capacity \citep{wu2018unsupervised,chen2020a,caron2020unsupervised} on unlabelled datasets by learning robust image representations invariantly to augmentations. As proposed in ~\citep{wang2020understanding}, the contrastive objective can be decomposed into an alignment loss which forces positive representations coming from the same image to be similar and a uniformity loss which preserves maximal information of the feature distribution and hence avoids collapsing representations. Without the uniformity loss, the representations tend to converge towards a constant representation which yields the best alignment loss possible but also contains no information about original images.
\citet[]{wang2020understanding} propose to enforce uniformicity by leveraging the Gaussian potential kernel which is bound to the uniform distribution on the sphere. This formulation is also related to the denominator of the contrastive loss as specified in \citet[]{chen2020a}. 
We propose to replace the Gaussian kernel uniformity loss with SSW for which the complexity is more linear \emph{w.r.t.} the number of batch samples.  A simple choice of the alignment loss is to minimize the mean squared euclidean distance between pairs of different augmented versions of the same image. A self-supervised learning network is pre-trained using this alignment loss added with an uniformity term. 
Our overall self-supervised loss can be defined as:

\begin{equation}
    \mathcal L_{\text{SSW-SSL}} = \underbrace{\frac1n\sum_{i=1}^n \lVert z^A_i- z^B_i\rVert^2_2}_{\text{Alignment loss}} + \frac\lambda2\big(\underbrace{SSW^2_2(z^A, \nu) + SSW^2_2(z^B, \nu)}_{\text{Uniformity loss}}\big),
\end{equation}

where $z^A,z^B\in\mathbb R^{n\times d}$ are the representations from the network projected on the hypersphere of two augmented versions of the same images, $\nu =\text{Unif}(S^{d-1})$ is the uniform distribution on the hypersphere and $\lambda>0$ is used to balance the two terms. 

We pretrain a ResNet18~\citep[]{he2016deep} model on the CIFAR10~\citep[]{krizhevsky2009learning} data with projections projected onto the sphere $S^2$. This feature dimension allow us to visualize the entire validation set of CIFAR10 and its distribution on the sphere. The visualization of the projections on $S^2$ are visible on Figure~\ref{fig:features_s2}. 
We then evaluate the performance of each contrastive objective by fitting a linear classifier on top of the output of the layer before the projection on the sphere on the training dataset as is common for SSL methods. For comparison, we also report the results when the features are taken directly on the sphere. As a baseline, we also train a predictive supervised encoder by training jointly the linear classifier and the image encoder in a supervised manner using cross entropy.

\begin{figure}
    \centering
    \hspace*{\fill}
    \subfloat[SSW-SSL, $ \lambda = 20, L = 10$ (Ours)]{\label{fig:features_s2_ssw}\includegraphics[width=0.45\linewidth]{./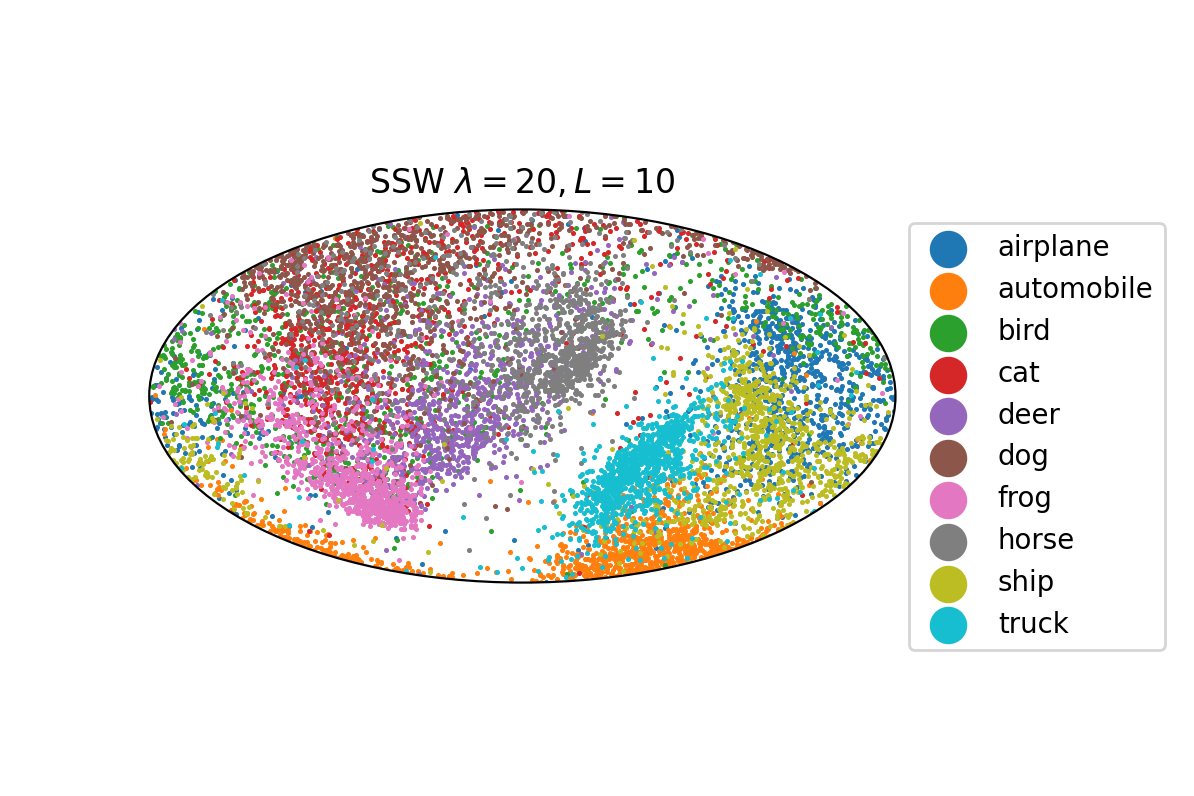}}
    \hfill
    \subfloat[SW-SSL, $\lambda = 1, L = 10$]{\label{fig:features_wang}\includegraphics[width=0.45\linewidth]{./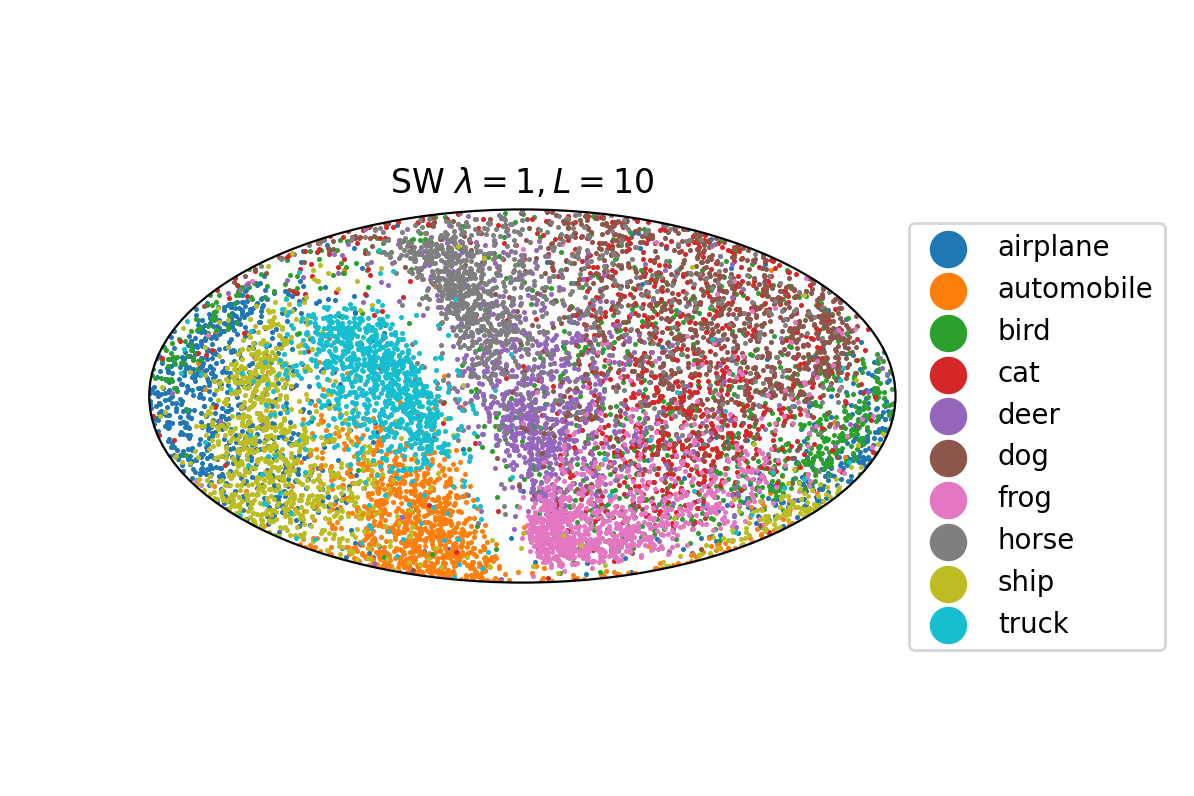}} 
    \hspace*{\fill}
    \\
    \hspace*{\fill}
    \subfloat[\citet{wang2020understanding}]{\label{fig:features_wang}\includegraphics[width=0.45\linewidth]{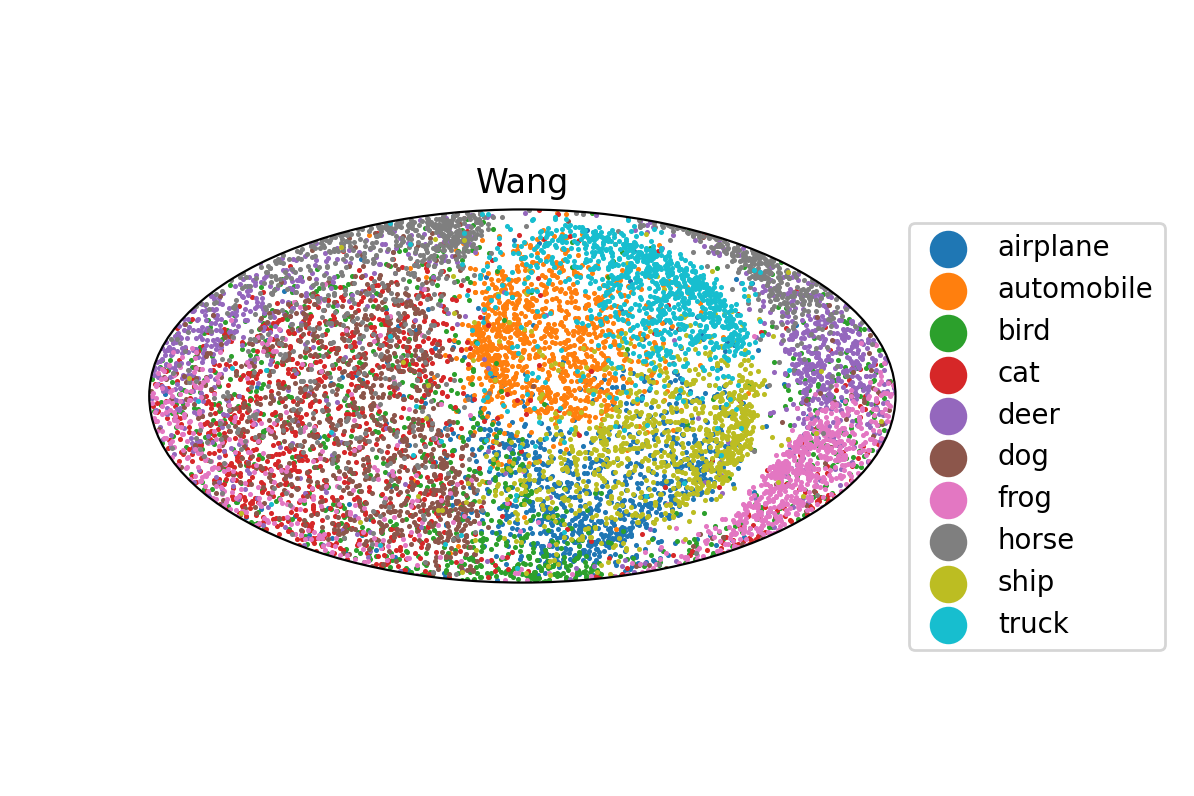}} 
    \hfill
    \subfloat[\citet{chen2020a}]{\label{fig:features_simclr}\includegraphics[width=0.45\linewidth]{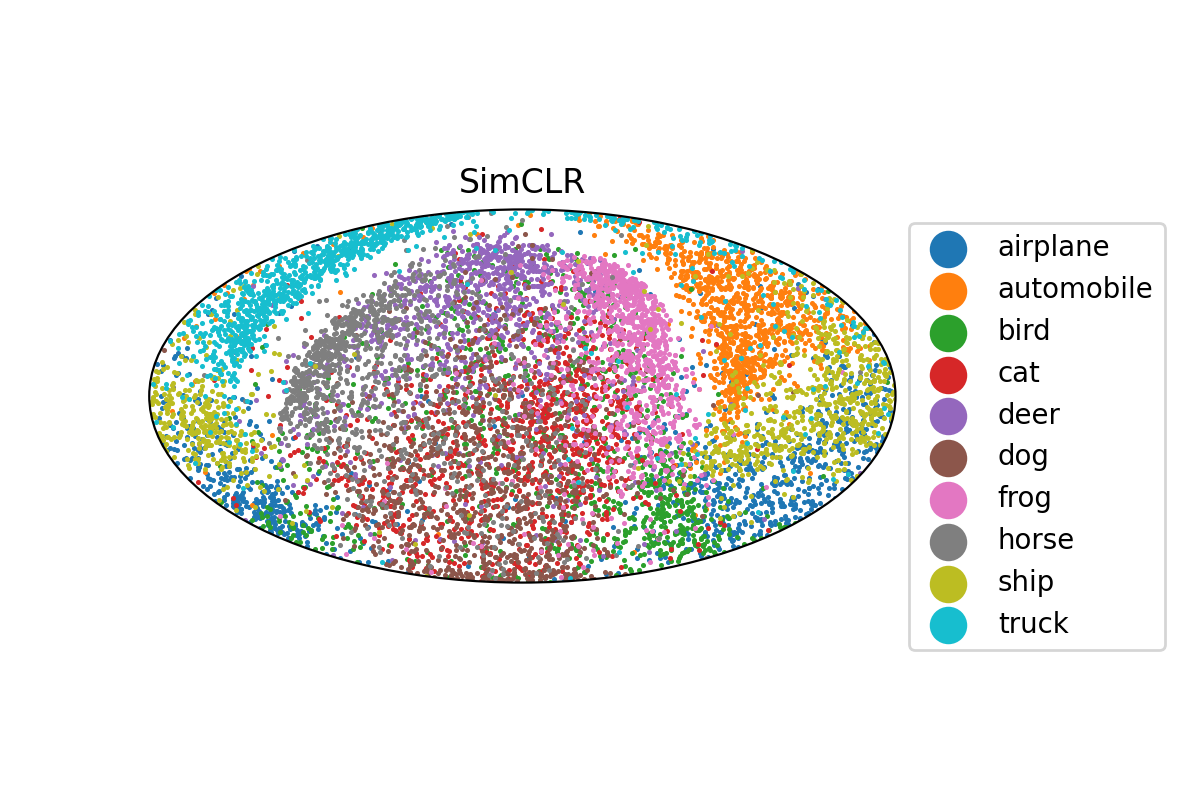}}
    \hspace*{\fill}
    \\
    \hspace*{\fill}
    \subfloat[Supervised prediction]{\label{fig:features_supervised}\includegraphics[width=0.45\linewidth]{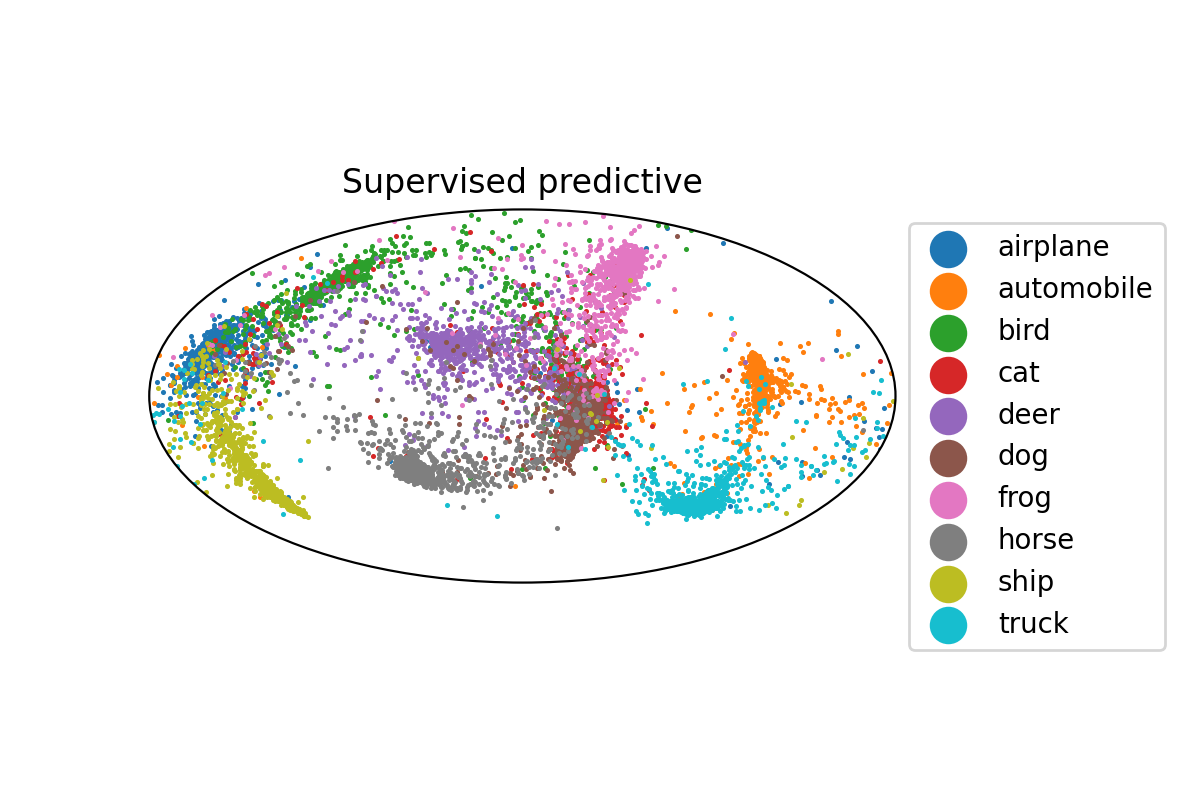}}
    \hfill
    \subfloat[Random initialization]{\label{fig:features_random}\includegraphics[width=0.45\linewidth]{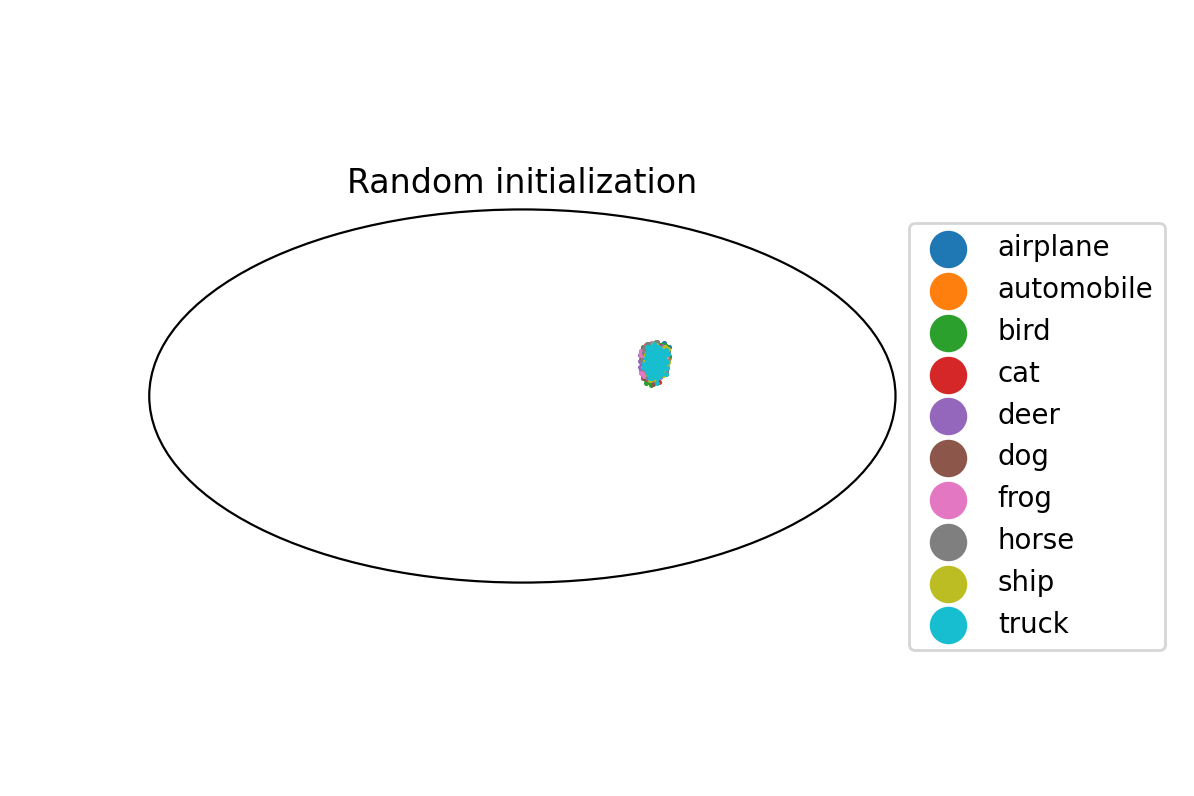}}
    \hspace*{\fill}

    \caption{The CIFAR10 validation set on $S^2$ after pre-training.}
    \label{fig:features_s2}
\end{figure}


We use a ResNet18~\citep[]{he2016deep} encoder which outputs 1024 features that are then projected onto the sphere $S^2$ using a last fully connected layer followed by a $\ell^2$ normalization. We pretrain the model for 200 epochs using minibatch stochastic gradient descent (SGD) with a momentum of $0.9$, a weight decay of $0.001$ and an initial learning rate of $0.05$. We use a batch size of $512$ samples. The images are augmented using a standard set of random augmentations for SSL: random crops, horizontal flipping, color jittering and gray scale transformation as done in \citet[]{wang2020understanding}. For the trade-off parameter $\lambda$, we $\lambda = 20$ for SSW and $\lambda = 1$ for SW.

To evaluate the performance of representations, we use the common linear evaluation protocol where a linear classifier is fitted on top of the pre-trained representations and the best validation accuracy is reported. The linear classifiers are trained for 100 epochs using the Adam~\citep[]{kingma2014adam} optimizer with a learning rate of $0.001$ with a decay of $0.2$ at epoch $60$ and $80$. We compare our methods with two other contrastive objectives, \citet[]{chen2020a} with the normalized temperature-scaled cross-entropy (NT-Xent) loss and \citet[]{wang2020understanding} which proposes to decompose the objective in two distinct terms $\mathcal L_\text{align}$ and $\mathcal L_\text{uniform}$. We recall the respective uniformity loss of each method in Table~\ref{tab:ssl_uniforms_complexity}. As one can see in Table~\ref{tab:ssl_cifar}, our method achieves here comparable performances to two state-of-the-art approaches, yet slightly under-performing compared to \citep{chen2020a}. We suspect that a finer validation of the balancing parameter $\lambda$ is needed. Especially since the representations on Figure~\ref{fig:features_s2_ssw} are not completely uniformly distributed around the sphere after pre-training compared to other contrastive methods. Nevertheless, these preliminary results show that 
SSW-SSL is a promising contrastive learning approach without explicit distances between negative samples, especially compared to SW on the sphere. To this end, further works should be devoted to finding a good balance between the alignment and uniformity objectives.

%

\begin{table}
	\centering
    \caption{Comparison of contrastive methods and their respective uniformity objective where $z^A, z^B\in\mathbb R^{n\times d}$ are representations from two augmented versions of the same set of images and $\nu=\text{Unif}(S^{d-1})$ is the uniform distribution on the hypersphere.}
	\begin{tabular}{r|c|c}
		Method                               & $\mathcal L_\text{uniform}(z^A) + \mathcal L_\text{uniform}(z^B)$                                                    & Complexity               \\
		\hline
		\citet{chen2020a}   & $\frac 1{2n}\sum_{i=1}^n\log\sum_{j\neq i}\exp(\frac{\langle \hat z_i,\hat z_j\rangle}{\tau}),\hat z=\text{cat}(z^A,z^B)$ & $ O(n^2d)$        \\
		\citet{wang2020understanding} & $\sum_{z\in\{z^A,z^B\}}\log\frac2{n(n-1)}\sum_{i>j} \exp(-t\lVert z_i- z_j\rVert^2_2)$ & $O(n^2d)$ \\
		SSW-SSL (Ours) & $\frac12(SSW_2^2(z^A,\nu) + SSW_2^2(z^B,\nu))$ & $ O(Ln(d + \log n)) $ \\
	\end{tabular}
	\label{tab:ssl_uniforms_complexity}
\end{table}

\end{document}